\theoremstyle{plain}
\newtheorem{theorem}{Theorem}[section]
\newtheorem{lemma}[theorem]{Lemma}
\newtheorem{corollary}[theorem]{Corollary}
\newtheorem{condition}[theorem]{Condition}
\theoremstyle{definition}
\theoremstyle{remark}
\definecolor{goldenpoppy}{rgb}{0.99, 0.76, 0.0}
\definecolor{glaucous}{rgb}{0.38, 0.51, 0.71}
\definecolor{blue-violet}{rgb}{0.54, 0.17, 0.89}
\icmltitlerunning{Locally Interdependent Multi-Agent MDP}
\begin{document}

\twocolumn[

\icmltitle{Locally Interdependent Multi-Agent MDP: Theoretical Framework for Decentralized Agents with Dynamic Dependencies}



\icmlsetsymbol{equal}{*}

\begin{icmlauthorlist}
\icmlauthor{Alex DeWeese}{carnegie}
\icmlauthor{Guannan Qu}{carnegie}
\end{icmlauthorlist}

\icmlaffiliation{carnegie}{Department of Electrical and Computer Engineering, Carnegie Mellon University, Pittsburgh PA, USA}

\icmlcorrespondingauthor{Alex DeWeese}{mdeweese@andrew.cmu.edu}
\icmlcorrespondingauthor{Guannan Qu}{gqu@andrew.cmu.edu}

\icmlkeywords{Machine Learning, ICML}

\vskip 0.3in
]



\printAffiliationsAndNotice{}  

\begin{abstract}
Many multi-agent systems in practice are decentralized and have dynamically varying dependencies. There has been a lack of attempts in the literature to analyze these systems theoretically. In this paper, we propose and theoretically analyze a decentralized model with dynamically varying dependencies called the Locally Interdependent Multi-Agent MDP. This model can represent problems in many disparate domains such as cooperative navigation, obstacle avoidance, and formation control. Despite the intractability that general partially observable multi-agent systems suffer from, we propose three closed-form policies that are theoretically near-optimal in this setting and can be scalable to compute and store. 
Consequentially, we reveal a fundamental property of Locally Interdependent Multi-Agent MDP's that the partially observable decentralized solution is exponentially close to the fully observable solution with respect to the visibility radius. We then discuss extensions of our closed-form policies to further improve tractability. We conclude by providing simulations to investigate some long horizon behaviors of our closed-form policies.

\end{abstract}

\section{Introduction}
Many important real-world multi-agent applications are decentralized and have local dynamic relationships between agents. Examples cut across a broad spectrum of areas including obstacle avoidance, cooperative navigation, formation control. The decentralized nature of the problem presents a key challenge because agents have limited access to the state information of other agents. This may be introduced either artificially to improve scalability or by physical constraints caused by communication limitations. Another challenge is, that while each agent has their own dynamics, they are interacting with the set of nearby agents that are dynamically varying. The presence of such dynamic interactions makes the decision problem for different agents highly coupled and nontrivial.

With the recent successes in Reinforcement Learning (RL), there has been an increasing desire to use this emerging tool for these types of decentralized multi-agent systems with locally dynamic agent interactions. However, most of the work in this area is empirical (\cite{lowe2017multi, han2020cooperative, batra2022decentralized, long2018towards, baldazo2019decentralized, zhou2019learn, palanisamy2020multi, aradi2020survey}). This is because it is non-trivial to create a model that accurately represents the decentralized partial observability and dynamic local dependencies of agents while producing theoretically sound and relatively tractable solutions.
In addition to the empirical work, there has been a growing literature on theoretical MARL, but none attempt to model and solve the challenges brought by the decentralized partial observability and local dynamic agent interactions \cite{qu2022scalable, qu2020scalableaverage, qu2020scalablepolicy, lin2021multi, kar2013cal,zhang2018fully, suttle2020multi, chen2021communication}.

The discussions above naturally motivate the question: 

\emph{Can we theoretically model and find near-optimal policies for decentralized agents with dynamic local dependencies in a scalable manner? }

We answer this question affirmatively with our contributions described below. 

\subsection{Contributions}
\label{contributions}
For our first contribution, we propose a novel setting called the Locally Interdependent Multi-Agent MDP. It is a cooperative setting that consists of multiple agents acting in a common metric space. 
It models dynamic proximity-based relationships that allow agents within a distance $\mathcal R$ to influence and depend on one another. Furthermore, agents within proximity $\mathcal V$ are also permitted to communicate with each other. As the location of the agents can dynamically change, both the ``dependence graph'' and the ``communication graph'' can be dynamic and time-varying. 

As a second contribution, we provide near-optimal solutions and establish a fundamental property of this setting. Specifically, we attempt to answer the following two questions: 

\textit{1) Are there decentralized 
policies in this setting that perform well theoretically?} 

We provide three closed-form policy solutions we call the Amalgam Policy, the Cutoff Policy, and the First Step Finite Horizon Optimal Policy. The policies in our decentralized framework have nearly the best possible theoretical performance guarantees in this setting. This is shown by providing an upper bound for the performance of each of policy we construct (\cref{amal_bound}, \cref{cutoff_bound}, \cref{finite_bound}) and matching them with a performance lower bound we also construct (\cref{lower_bound}), up to constant factors. 

\textit{2) How significantly does decentralized partial observability impact the best possible performance?} 

As a corollary, we reveal that the performance of the optimal policy in our partially observable decentralized framework will approximate the optimal fully observable centralized policy exponentially well with the increase in the visibility radius. This establishes a fundamental property of our decentralized framework in Locally Interdependent Multi-Agent MDP's.

Lastly, we give insight into the practical uses and behavior of our policies. We present some extensions of our solutions that can further improve scalability, and provide simulations of toy problems in obstacle avoidance, cooperative navigation, and formation control. They demonstrate the long term behaviors of the policies and act as a proof of concept for applications in more complex systems.

Together, these contributions constitute a novel framework that has wide applicability and has closed-form solutions with rigorous theoretical guarantees that are implementable in practice.

\subsection{Related Work}
Our work is related to two bodies of literature.

Firstly, there exist many lines of empirical work that consider a multi-agent setting with decentralized agents and dynamic local dependencies. In fact, many resemblances of our model show up in practical applications in problems such as cooperative navigation, obstacle avoidance, and formation control \cite{han2020cooperative, batra2022decentralized, lowe2017multi}. Some examples are robot navigation \cite{han2020cooperative, long2018towards}, autonomous driving \cite{palanisamy2020multi, aradi2020survey}, UAV's \cite{baldazo2019decentralized, batra2022decentralized}, and USV's \cite{zhou2019learn}. However, all of these works are empirical in nature and our goal is to provide a theoretical framework for these types of problems.

Secondly, our work is also related to the large body of the theoretical MARL literature. This is a very broad area and some representative works include  V-learning \cite{bai2020near, jin2021v, wang2023breaking}, mean-field RL \cite{yang2018mean, gu2021mean, carmona2019model}, and function approximation \cite{xie2020learning, jin2022power, huang2021towards}. In this literature, the areas most related to us are 1) the Dec-POMDP, 2) scalable RL, and 3) decentralized RL. 

1) Our setting is indeed a special case of the Dec-POMDP \cite{oliehoek2012decentralized, oliehoek2016concise}. Unfortunately, solving a general Dec-POMDP is NEXP-Complete. Similar to our approach, there have been many attempts to consider special cases of the Dec-POMDP to improve tractability. However, as far as we know, these methods inherit the hardness of the Dec-POMDP \cite{goldman2004decentralized, bernstein2002complexity, allen2009complexity}, cannot model the setting we consider, or are not theoretically analyzed. A close example to our setting in the partially observable multi-agent literature is the Interaction Driven Markov Game (IDMG) \cite{spaan2008interaction, melo2009learning} which has many related aspects to our setting such as rewards decomposing into a single-agent component and a multi-agent component, but the theory is not well explored. Compared to the IDMG, we make some additional assumptions, but we believe the key insight that improves the theoretical quality of our solution is to allow coordination between agents for several time steps prior to reward dependence (see the \cref{dtl}). Another example of a related setting is the Network Distributed POMDP \cite{nair2005networked, zhang2011coordinated} which has some resemblance of our setting such as transition independence and group reward dependence. However, it assumes a fixed reward grouping and independent observations, neither of which holds in our setting. As far as we know, the literature in this area is not able to solve problems in our setting in a theoretically sound, tractable, and scalable manner. We take a unique approach to circumvent the difficulties of analyzing general partial observability structures. Namely, we propose reasonable closed-form solutions for a particular environment and observability structure and then theoretically prove their effectiveness.

2) The scalable RL literature considers a setting very close to ours \cite{qu2022scalable, qu2020scalableaverage, qu2020scalablepolicy, lin2021multi}. It considers decentralized agents with local dependencies much like our setting, however, the dependence network between agents is fixed or stochastic and the dependence is in terms of the probability transitions. Our work has dynamic time-varying dependence networks between agents and the dependence is in terms of the rewards. Nevertheless, it can be considered an extension of works in this area and resemblances of relevant concepts such as the exponential decay property can be seen in this work. 

3) The distributed RL literature considers multiple agents connected through a possibly stochastic network \cite{ kar2013cal,zhang2018fully, suttle2020multi, chen2021communication}. It differs in that the objective of the distributed RL literature is to find the joint optimal policy for all the agents by exchanging reward information with its neighbors. In our case, agents will exchange state information with other agents and act only according to that acquired state information.

\section{Preliminaries}

\subsection{Locally Interdependent Multi-Agent MDP}
We consider a setting with a set of agents $\mathcal{N} = \{1,..., n\}$. Each agent $k$ is associated with a state $x_k$ from a common metric space $\mathcal X$ with an associated distance metric $d$. This state $x_k$ is the ``location'' of agent $k$. Further, each agent $k\in \mathcal N$ will also have an internal state from a local state space $\mathcal Y_k$. Thus the state for agent $k$ can be represented as $s_k = (x_k, y_k) \in \mathcal S_k$ where $x_k \in \mathcal X$ and $y_k \in \mathcal Y_k$.

Given the distance metric $d$, agents $j, k \in \mathcal N$ at states $s_j = (x_j, y_j) \in \mathcal S_j$, $s_k = (x_k, y_k) \in \mathcal S_k$ are then associated with a notion of distance. We overload notation and denote the distance $d$ between states as $d(s_j, s_k) = d(x_j, x_k)$.

Each agent takes an action $a_k$ that lies in a local action space $\mathcal A_k$. Given $s_k(t),a_k(t)$ at time $t$, agent $k$'s state transitions according to a local transition function $P_k$, i.e. $s_k(t+1)\sim P_k(\cdot|s_k(t),a_k(t))$. We assume agents will not travel more than a distance of $1$ at each step. That is, $P_k(s'_k \lvert s_k,a_k) = 0$ if $d(s_k, s'_k) > 1$. Since the distance of 1 is a unitless distance dictated by the metric space, this simply reflects the assumption that movement distance at each step is bounded.

Agents will cooperate and take actions to accumulate discounted rewards that have independent and interdependent components. Agents will obtain independent local rewards according to a local reward function $\overline{r}_j(s_j, a_j)$ at each step. Furthermore, for other agents $k$ within a distance of a dependence constant $\mathcal R$ of agent $j$, an arbitrary reward of $\overline r_{j,k}(s_j, a_j, s_k, a_k)$ will be added. These rewards will then be discounted by factor $\gamma$ throughout the time steps.

This proximity-based reward function incentivizes each individual agent while providing rewards and penalties for agents within its vicinity according to the distance metric $d$. As agents move throughout the space, they will begin to influence each other in a time-varying dynamic way. 
If we assert $\overline r_{j,k}(s_j, a_j, s_k, a_k) = 0$ when $d(s_j, s_k) > \mathcal R$, we can compactly say $r_j(s, a) = \sum_{k\in \mathcal N} \overline r_{j,k}(s_j, a_j, s_k, a_k)$ where if $j,k$ are equal, we say $\overline r_{j,j} (s_j, a_j, s_j, a_j) = \overline r_j(s_j, a_j)$.
\\\\\\\\
To summarize, we define the Locally Interdependent Multi-Agent MDP to be $\mathcal{M} = (\mathcal{S}, \mathcal A, P, r, \mathcal R, \gamma)$:
\begin{itemize}
\item $\mathcal S := \mathcal S_1 \times ... \times \mathcal S_n$ 
\item $\mathcal A := \mathcal A_1 \times ... \times \mathcal A_n$ 
\item $ P(s' \lvert s, a) = \prod_{k \in \mathcal N} P_k(s'_k \lvert s_k,a_k)$ where $P_k(s'_k \lvert s_k,a_k) = 0$ if $d(s_k, s'_k) > 1$
\item $ r(s, a) = \sum_{j, k\in \mathcal N} \overline r_{j,k}(s_j,a_j, s_k,a_k)$ where $\overline r_{j,k}(s_k, a_k, s_j, a_j) = 0$ when $d(s_j, s_k) > \mathcal R$. $j,k$ may be equal.
\end{itemize}

For the subsequent sections, we denote $\tilde r = \sup_{s, a} \lvert r(s,a)\rvert$

We will also indicate a finite realizable trajectory in this MDP to be a sequence of state actions $(s(t), a(t))$ for $t \in \{0,  ..., T\}$ that have the property $P(s(t + 1)\lvert s(t), a(t)) > 0$ for all $t \in \{0, ..., T - 1\}$. This is similarly defined for infinite realizable trajectories.

\subsection{Group Decentralized Policies}
\label{group_decentralized}
To introduce partial observability, we say two agents can communicate with each other and take a coordinated action if there is a path of adjacent agents each within a constant distance $\mathcal V$ of each other with $\mathcal V> \mathcal R$. Formally, for $s \in \mathcal S$, the visibility partition on the agents $\mathcal N$ denoted as $Z(s)$ is defined by the equivalence relation such that for agents $j,k \in \mathcal N$, there is a sequence of agents $n_0, ..., n_{\ell}$ such that  $n_0 = j, n_\ell = k$ and $d(n_{i}, n_{i + 1}) \leq \mathcal V$ for $i \in\{0,..., \ell-1\}$. We say two agents within the same visibility partition can then communicate with one another (see \cref{imamdp_visual}). 

Therefore our partially observable policy class of interest $\Pi_{\mathcal V}$ contain policies $\pi: \mathcal S \rightarrow \Delta(\mathcal A)$ that take the form $\pi(s) = (\pi_z(s_z): z \in Z(s))$ where agents within the same visibility group $z\in Z(s)$ act according to some coordinated policy $\pi_z$ in $\mathcal M_z$ 
 which is the associated Locally Interdependent Multi-Agent MDP defined on a subset of agents $z\subset \mathcal N$ with the same $S_j$, $A_j$, $P_j$, $\overline r_{j,k}$ for $j,k \in z$. In other words, for $z\in Z(s)$,  $\pi_z$ will be a centralized policy based solely on the states of agents in the communication group and can thus be executed by our communication assumptions. We call this the group decentralized policy class.
 
We will generalize this notation and define a concatenation of policies. For a partition $P(s)$ on $\mathcal N$ possibly depending on $s$, and policies $\pi_p(s_p)$ for $p \in P(s)$, the concatenation of policies is denoted as $\pi(s) = (\pi_p(s_p) : p \in P(s))$. 

Throughout this paper, we also indicate a sample trajectory $\tau = (s(t), a(t))$ distributed from a policy $\pi$ with starting state $s$ and starting action as $a$ as $\tau \sim \pi\lvert_{s, a}$. If only the starting state is conditioned, we use $\tau \sim \pi \lvert_s$.

\subsection{Scalability}
\label{decentralized_scalable}
Notice that the size of this group decentralized policy class can be significantly smaller than all possible policies. To illustrate, we first establish a Locally Interdependent Multi-Agent MDP with $kM$ agents, a finite $\lvert \mathcal X \rvert$, finite $\lvert \mathcal A_1\rvert$ with $\lvert \mathcal A_1\rvert = \lvert \mathcal A_i\rvert$ for all $i$, and a trivial internal state space for each agent. Now consider the policy class with policies of the form $\pi(s) = (\pi_g(s_g): g \in G)$ for some fixed partition $G$ with $M$ agents in each group. The space required to store a group decentralized policy will be $k\lvert \mathcal X \rvert ^M \lvert \mathcal A_1\rvert^M$ elements as opposed to $\lvert \mathcal X\rvert ^{kM}\lvert \mathcal A_1\rvert^{kM}$ elements, which is a very dramatic difference numerically. Furthermore, the space required to store a deterministic group decentralized policy is reduced from $\lvert \mathcal X\rvert ^{kM}$ to $k\lvert \mathcal X \rvert ^M $ elements.

In the group decentralized policy class ``$G$'' changes dynamically with the state so the numerical advantages are MDP dependent. However, in practice, the improvements can be quite substantial (see \cref{bullseye_many} for an example).
 The size reduction is reflected in the reduced space required to store these policies as well as the reduced computation time of some algorithms used to find them. We demonstrate this with the algorithms used to find the Cutoff and First Step Finite Horizon Optimal Policies, to be introduced later in \cref{sec:main_results} and extensions in \cref{extensions}.

\begin{figure}[h]
\centering
\includegraphics[width=0.3\textwidth]{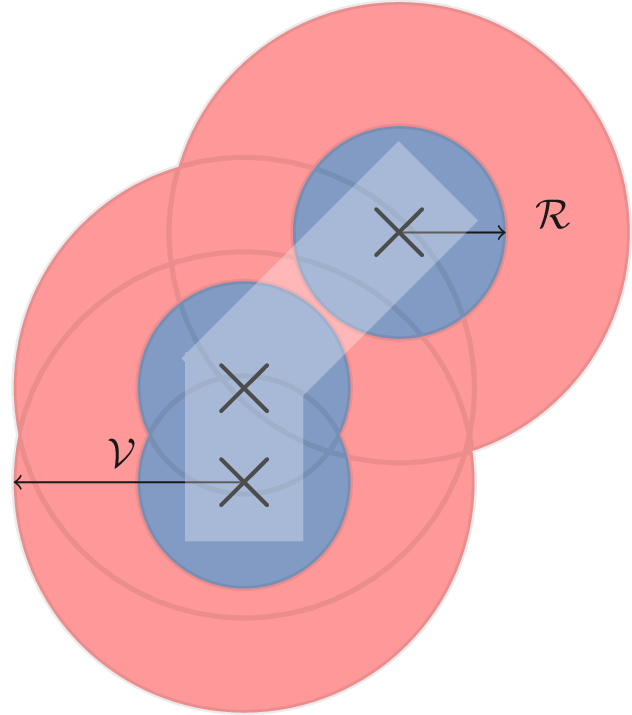}

\caption{3 agents moving in the space of $\mathcal X = \mathbb{R}^2$ with standard Euclidean distance. The bottom two agents potentially have an interdependent reward since they are within distance $\mathcal R$ of one another. Furthermore, every agent is within distance $\mathcal V$ of another agent so all agents can communicate with each other. Notably, the top and bottom agents may communicate even though they are not within distance $\mathcal V$ of each other.}
\label{imamdp_visual}
\end{figure}

\subsection{Objectives}
In this setting, we formally would like to answer the following questions that correspond to our objectives defined in \cref{contributions}:

\textit{1) Are there decentralized policies in this setting that perform well theoretically?}

Formally, we attempt to find group decentralized policies $\pi$ that maximize $V^\pi(s) = \mathbb{E}_{\tau\sim \pi\lvert_{s}}\bigg[\sum_{t = 0}^\infty \gamma^t r(s(t),a(t))\bigg]$. 

Equivalently, if we denote $\pi^*$ to be the optimal policy for all possible policies in the MDP and abbreviate $V^* = V^{\pi^*}$, we attempt to find group decentralized policies $\pi$ such that $\lvert V^*(s) - V^\pi(s)\rvert$ is small. This will establish concrete group decentralized policy solutions for Locally Interdependent Multi-Agent MDP's.

\textit{2) How significantly does decentralized partial observability impact the best possible performance?}

Formally, we ask whether $\lvert V^* (s) - \max_{\pi \in \Pi _{\mathcal V}} V^\pi (s)\rvert$ is small. This will answer whether the group decentralized policy class is a viable alternative to the entire policy class for optimization purposes.

In this paper, we will answer both of these questions simultaneously by providing three group decentralized policies with tight bounds.

\subsection{Properties}
Recalling $\mathcal V > \mathcal R$, a consequence of the reward and visibility structure is a buffer of time that agents can coordinate before the dependence begins. A constant that will occur frequently in this paper is $c = \lfloor \frac{\mathcal V-\mathcal R}{2}\rfloor$ and it represents the number of time steps that an agent from a visibility group cannot interact with agents from other visibility groups. This is more rigorously discussed in \cref{dtl}, the Dependence Time Lemma.

Further, the positive gap between the visibility and dependence constant can be used to create an equivalent representation on the reward function. Firstly, for any $g \subset \mathcal N$ define $r_g(s_g, a_g) = \sum_{j, k\in g} \overline r_{j,k}(s_j,a_j, s_k,a_k)$. 

Then since $\overline r_{j,k}(s_j, a_j, s_k, a_k) = 0$ when $d(s_j, s_k)\geq \mathcal V > \mathcal R$, we can say $r(s,a) = \sum_{z \in Z(s)} r_z(s_z, a_z)$. This equivalent decomposition for the reward function will also be used regularly.

\subsection{Applications}
In terms of modeling tasks in the environment, we may consider specific instances of the Locally Interdependent Multi-Agent MDP. We describe examples of three specific tasks below.

\subsubsection{Cooperative Navigation}

In cooperative navigation, agents need to navigate toward a location without colliding with other agents, where two agents collide if they enter into a physical ``width'' of each other \cite{lowe2017multi}.  Cooperative navigation problems are important in areas that have vehicles moving in a physical environment such as in autonomous driving, UAV's, robot navigation etc.

To model this, we can set $\mathcal R$ to be the maximum width of an agent and assign a large penalty when other agents enter their width radius, or something more intricate depending on the application. For each agent $k$, we may provide individual incentives (such as for navigating to a location) by properly designing $\overline{r}_k(s_k, a_k)$. Thus agents are incentivized to accumulate individual rewards $\overline{r}_k(s_k, a_k)$ (for navigating to a location) while avoiding penalties for colliding other agents. Each agent may also have differing dynamics $P_k$ depending on the physical environment. 

For a toy example of cooperative navigation using Locally Interdependent Multi-Agent MDP's see \cref{bullseye_many}.

\subsubsection{Obstacle Avoidance}
In the setting established in the previous section, we may also incorporate many types of dynamic obstacles by treating the obstacles as agents with trivial action spaces. 
For example, an obstacle $b\in \mathcal N$ would have a trivial action space $\mathcal A_b = \{X\}$ and dynamics $s'_b \sim P_b(\cdot \lvert s_b, X)$ that model the movement of the obstacle. We may then penalize other agents that are within $\mathcal R$ of agent $b$. For an example of a static obstacle, see \cref{highway}.

In the above setting, the other agents do not know the position of obstacle $b$ apriori, so the agents must adapt to it dynamically. This is in contrast to integrating an obstacle into the reward function by penalizing agents for approaching a point $x^*\in \mathcal X$. In this case, all agents would know the position of this obstacle at all times.

\subsubsection{Formation Control}
In some applications, we would like agents to accomplish a task such as cooperative navigation or obstacle avoidance but prefer a specific formation, such as for drag reduction and surveillance in UAV's \cite{wang2007cooperative,dong2014time}.

By providing a more intricate structure on $r$, we may specify a preferred relative position to each agent. Suppose for example that relative to agent $1$ at position $x_1\in \mathcal X$, we would prefer if agent $2$ and $3$ are at positions $x_2^*, x_3^* \in \mathcal X$ respectively where $d(x_1, x_2^*) \leq \mathcal R$ and $d(x_1, x_3^*) \leq \mathcal R$. For example, $x_2^*$, $x_3^*$ may form a specific angle or maintain a specific distance with $x_1$. Then for all possible internal states for the three agents, we may take the associated states $s_1, s_2^*, s_3^*$, and set $\overline r (s_1, a_1, s_2^*, a_2)$ and $\overline r (s_1, a_1, s_3^*, a_3)$ to be a positive quantity for any $a_1, a_2, a_3$ to incentivize these positions. In the case $\mathcal X$ is continuous (such as $\mathbb R^2$), these can be small positive reward regions that lie in a radius $\mathcal R$ of agent $1$. This type of reward structure in the Locally Interdependent Multi-Agent MDP can model complex formations that prefer specific agent positions relative to each other.

For an implementation of a toy example see \cref{lane_merging}.

\section{Main Results}
\label{sec:main_results}
Here we settle the case of whether the group decentralized policy class can perform well in the broadly applicable class of Locally Interdependent Multi-Agent MDP's. We answer our objectives by first providing three closed-form stationary group decentralized policies with associated upper bound guarantees. Then, we provide a lower bound guarantee that matches each upper bound up to constant factors. Lastly, as a corollary, we find that the performance of group decentralized policies improves exponentially with respect to the visibility.

All theorems are proved in \cref{proofs}.

\subsection{Three Upper Bound Constructions}
\subsubsection{Amalgam Policy}
With our motivation to find group decentralized policies that perform well theoretically for general Locally Interdependent Multi-Agent MDP's, we introduce our first construction, the Amalgam Policy $\lambda$. It is defined by taking the optimal policy for each visibility group. It is named as such because it is an amalgamation of local joint optimal policies. Formally $\lambda(s) = (\pi_z^*(s_z): \forall z \in Z(s))$ where $\pi_z^*$ is the optimal policy for $\mathcal M_z$ the Locally Interdependent Multi-Agent MDP defined for a subset of agents $z$. We show that it satisfies the guarantee:
\begin{theorem}
\label{amal_bound}
$\lvert V^*(s) - V^{\lambda} (s) \rvert \leq \frac{2}{(1 - \gamma)^2}\gamma^{c + 1}  \tilde{r}$.
\end{theorem}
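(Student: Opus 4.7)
My plan is to leverage a Bellman-style decomposition at horizon $c+1$, use the Dependence Time Lemma (DTL) to factor the early reward by initial visibility group, and then close a geometric recursion on the tail.

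More precisely, I would first write
\begin{align*}
V^*(s) &= \mathbb{E}_{\pi^*}\Big[\sum_{t=0}^{c} \gamma^t r(s(t),a(t))\Big] + \gamma^{c+1}\mathbb{E}_{\pi^*}\big[V^*(s(c+1))\big],\\
V^\lambda(s) &= \mathbb{E}_\lambda\Big[\sum_{t=0}^{c} \gamma^t r(s(t),a(t))\Big] + \gamma^{c+1}\mathbb{E}_\lambda\big[V^\lambda(s(c+1))\big].
\end{align*}
By the DTL, for every $t \le c$ the reward decomposes as $r(s(t),a(t)) = \sum_{z \in Z(s)} r_z(s_z(t), a_z(t))$ because cross-initial-group pairwise rewards vanish. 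Since transitions factor as $P = \prod_k P_k$, the marginal trajectory of any initial group $z$ under an arbitrary policy is distributionally indistinguishable from the trajectory of a policy in $\mathcal{M}_z$. Thus under $\lambda$ the marginal on $z$ realizes $\pi^*_z$, whereas under $\pi^*$ it realizes some induced policy $\bar\pi_z$ whose infinite-horizon value in $\mathcal{M}_z$ is at most $V^*_z(s_z)$.

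Writing each front-portion sum group by group and comparing the $(c+1)$-horizon return of $\pi^*_z$ to the infinite-horizon value $V^*_z(s_z)$ gives, for each $z$,
\[
\mathbb{E}_{\bar\pi_z}\Big[\sum_{t=0}^{c} \gamma^t r_z\Big] - \mathbb{E}_{\pi^*_z}\Big[\sum_{t=0}^{c} \gamma^t r_z\Big] \le \frac{2\gamma^{c+1}\tilde r}{1-\gamma},
\]
after re-expressing the aggregate truncation error across groups as a single global tail of the full reward (bounded by $\tilde r$) using the DTL in reverse. Combined with the crude bound $|V^*|, |V^\lambda| \le \tilde r/(1-\gamma)$ on the second halves of the decomposition, the full discrepancy satisfies a recursion of the form $D \le \frac{2\gamma^{c+1}\tilde r}{1-\gamma} + \gamma^{c+1} D$, where $D := \sup_s |V^*(s) - V^\lambda(s)|$. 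Solving and using $1-\gamma^{c+1} \ge 1-\gamma$ yields the claimed $D \le \frac{2\gamma^{c+1}\tilde r}{(1-\gamma)^2}$.

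The main difficulty is the \emph{aggregation} step: a naive per-group truncation bound would introduce a factor of $|Z(s)|$ through $\sum_z \tilde r_z$, which the theorem's bound does not carry. Overcoming this requires re-expressing $\sum_z \mathbb{E}[\sum_{t>c} \gamma^t r_z]$ as a single expectation of $\sum_z r_z$ and tying it back to $r$ via the DTL. A secondary subtlety is that $\lambda$ uses the current visibility partition $Z(s(t))$, which may differ from $Z(s(0))$ within the first $c+1$ steps because visibility can change without dependence (since $\mathcal{V} > \mathcal{R}$); here one must argue using transition independence that whatever induced policy $\lambda$ plays on each initial group $z$ is still a valid policy in $\mathcal{M}_z$, so the per-group bound above continues to apply.
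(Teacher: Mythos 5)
There is a genuine gap, and it sits exactly where you relegate the issue to a ``secondary subtlety.'' Your argument needs a \emph{lower} bound on the reward that $\lambda$ collects on each initial group $z$ over the window $\{0,\dots,c\}$: namely that the marginal process on $z$ induced by $\lambda$ earns nearly as much $r_z$-reward as $\pi^*_z$ does. But $\lambda$ re-evaluates the visibility partition at every step, and $Z(s(t))$ can differ from $Z(s(0))$ as early as $t=1$ (visibility adjacency changes even though dependence does not, since $\mathcal V > \mathcal R$). After such a change the agents of $z$ are driven by the optimal policies of the \emph{new} groups, and the restriction of that behavior to $z$ is merely \emph{some} valid history-dependent policy in $\mathcal M_z$. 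Validity only buys you the upper bound $\le V^*_z(s_z)$ (which is the direction you need for the $\pi^*$ side of the comparison); it gives no control in the direction you need for the $\lambda$ side. So the claimed per-group inequality does not follow, and since the partition can change at every one of the $c$ steps inside the window, patching it requires an argument that accounts for each change --- which is precisely what the paper does. The paper first bounds the \emph{fixed-partition} policy $\pi^*_{Z(s)}$ (played forever, never re-partitioned) against $V^*$ via \cref{dtl_q}, obtaining \cref{amal_naive} with constant $\tfrac{2}{1-\gamma}$, and then uses the Telescoping Lemma with stopping times at each partition change, paying $\Delta_i^\tau \le \tfrac{2}{1-\gamma}\gamma^{c+1}\tilde r$ per change; summing $\gamma^{T_i}\Delta_i^\tau$ over changes is where the second $(1-\gamma)^{-1}$ factor comes from. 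Your single-window decomposition has no mechanism to absorb these repeated re-partitionings.

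A smaller but real inconsistency: as written, your recursion does not follow from your own steps. If you bound the two tails crudely by $\tilde r/(1-\gamma)$ each, you get an additive error $\tfrac{2\gamma^{c+1}\tilde r}{1-\gamma}$ and \emph{no} $\gamma^{c+1}D$ term; if you want the contraction term $\gamma^{c+1}D$, you must compare $V^*(s(c+1))$ and $V^\lambda(s(c+1))$ under a \emph{common} measure (e.g.\ insert $\mathbb{E}_\lambda[V^*(s(c+1))]$ as an intermediate term), since $\sup_s|V^*(s)-V^\lambda(s)|$ does not bound a difference of expectations taken over the distinct state distributions induced by $\pi^*$ and $\lambda$ at time $c+1$. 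That repair is routine, but it leaves you with the residual $V^*(s) - \mathbb{E}_\lambda\bigl[\sum_{t=0}^{c}\gamma^t r\bigr] - \gamma^{c+1}\mathbb{E}_\lambda\bigl[V^*(s(c+1))\bigr]$ to control, which again reduces to the front-portion comparison above. Your aggregation observation (avoiding a factor of $|Z(s)|$ by regrouping the truncation error into a single global tail) is sound and mirrors the bookkeeping in the paper's \cref{dtl_q}, but it does not address the main obstruction.
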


Recalling the definition $c = \lfloor \frac{\mathcal V - \mathcal R}{2}\rfloor$, this shows the quality of the Amalgam Policy improves exponentially with the increase of the visibility. Storing this policy will in general require less space because it only requires storing $\pi^*_z(s_z)$ for states $s_z$ with agents $z$ in the same communication group. Computing this policy can be fully scalable to compute when group sizes do not exceed some constant $L$. A more in-depth discussion is provided in \cref{extensions}.
In the long horizon, this policy performs well for many of our examples described in \cref{simulations}.

In addition to the Amalgam Policy, we now introduce two more policies that approach the problem in unique ways and have some differing properties.

\subsubsection{Cutoff Policy}
\label{main_cutoff}

We now introduce the group decentralized Cutoff Policy $\chi$ which has properties that take advantage of the setting to improve scalability. We introduce a notion of the Cutoff Multi-Agent MDP which modifies the communication structure for Locally Interdependent Multi-Agent MDP's so agents are not allowed to re-enter visibility groups (for more details see \cref{cutoff}). The Cutoff Policy is the optimal policy in the Cutoff Multi-Agent MDP converted for use in the Locally Interdependent Multi-Agent MDP setting. Formally, we define $\chi(s) = (\pi_z^\mathcal C((s_z, \{z\})): \forall z \in Z(s)) $.

We prove the bound:
\begin{theorem}
\label{cutoff_bound}
$\lvert V^*(s) - V^{\chi} (s) \rvert \leq \frac{2 - \gamma}{(1 - \gamma)^2}\gamma^{c + 1}  \tilde{r}$.
\end{theorem}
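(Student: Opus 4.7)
The plan is to bound the gap through the optimal value $\hat V^*$ of the Cutoff Multi-Agent MDP $\hat{\mathcal M}$. By construction, the Cutoff Policy satisfies, on each visibility group, $\chi(s) = (\pi_z^\mathcal{C}((s_z,\{z\})) : z\in Z(s))$, i.e.\ for every current group $z$ it plays the Cutoff MDP's optimal policy starting with $\{z\}$ as the frozen initial grouping. Both MDPs share identical state transitions, so the only difference along a common trajectory is the instantaneous reward: $\hat{\mathcal M}$ sums rewards relative to the frozen initial grouping while $\mathcal M$ sums over the current grouping $Z(s(t))$. This makes $\hat V^*$ a natural intermediate because the Cutoff optimal value decomposes across the frozen groups, lining up well with the decomposed form of $\chi$.

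Next I would split the gap as
$$V^*(s) - V^\chi(s) = \bigl[V^*(s) - \hat V^*(s, Z(s))\bigr] + \bigl[\hat V^\chi(s, Z(s)) - V^\chi(s)\bigr],$$
using $\hat V^*(s, Z(s)) = \hat V^\chi(s, Z(s))$, which holds by the defining property of $\chi$. For each bracket I would invoke the Dependence Time Lemma: since agents from different initial visibility groups cannot come within distance $\mathcal R$ of one another within the first $c+1$ steps, the reward streams of $\mathcal M$ and $\hat{\mathcal M}$ agree on those steps along any realizable trajectory, so all error is pushed to the tail $t\geq c+1$.

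The main obstacle is sharpening the naive tail bound, which is of order $\tfrac{\gamma^{c+1}}{1-\gamma}\tilde r$ per bracket, into the needed $(1-\gamma)^{-2}$ behavior. To do this I would iterate the Dependence Time Lemma: at every state $s(t)$ with $t\geq c+1$ along the trajectory, re-decompose the future return relative to the fresh grouping $Z(s(t))$ and recurse on the value functions themselves rather than on one-step rewards. The resulting geometric series in $\gamma^{c+1}$ telescopes to contribute the missing $1/(1-\gamma)$ factor. The asymmetric constant $2-\gamma$ (rather than the Amalgam's $2$) should come from an asymmetric split: a bound of $\tfrac{\gamma^{c+1}}{(1-\gamma)^2}\tilde r$ on the optimality bracket $V^*(s)-\hat V^*(s,Z(s))$ together with a bound of $\tfrac{\gamma^{c+1}}{1-\gamma}\tilde r$ on the policy-evaluation bracket $\hat V^\chi(s,Z(s))-V^\chi(s)$, which sum to exactly $\tfrac{2-\gamma}{(1-\gamma)^2}\gamma^{c+1}\tilde r$. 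The saving on the second bracket is afforded by the Cutoff structure itself: the Cutoff MDP's optimal policy already internalizes all future within-group splits, so evaluating $\chi$ under $\mathcal M$ incurs only a one-sided error (the extra inter-group interactions that $\hat{\mathcal M}$ ignores), eliminating one of the two $1/(1-\gamma)$ factors that the analogous Amalgam bracket must pay.

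Finally I would assemble the two bracketed estimates by the triangle inequality to conclude the claimed bound.
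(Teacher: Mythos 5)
Your decomposition through the Cutoff Multi-Agent MDP is the right intermediate object, but the identity you lean on, $\hat V^*(s,Z(s)) = \hat V^{\chi}(s,Z(s))$, is false in general, and the failure of this identity is precisely what the paper's proof has to work to control. The Cutoff Policy $\chi$ agrees with the optimal cutoff policy $\pi^{\mathcal C}$ only at states whose partition component equals the current visibility partition: at a later state $s'$ where groups have re-merged, i.e.\ $Z(s')\not\subset Z(s)$, the cutoff-optimal action is $\pi^{\mathcal C}((s', Z(s')\cap Z(s)))$ while $\chi$ plays $\pi^{\mathcal C}((s', Z(s')))$, so $\chi$ is a (generally strictly) suboptimal policy for the Cutoff MDP, and the gap $\hat V^*(s,Z(s)) - \hat V^{\chi}(s,Z(s))$ accumulates a term of size up to $\frac{\gamma^{c}}{1-\gamma}\tilde r$ at each re-merge event. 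A further sign that something is off: both of your outer brackets individually admit the bound $\frac{\gamma^{c+1}}{1-\gamma}\tilde r$ --- the first is exactly \cref{cutoff_naive} (a single application of the Dependence Time Lemma consequence to $V^{\pi^*}$), and the second holds for \emph{any} policy by \cref{cutoff_imdp_q} --- so if the middle bracket really vanished you would have proved the stronger bound $\frac{2}{1-\gamma}\gamma^{c+1}\tilde r$, which the paper does not claim for $\chi$.

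The paper instead bounds the combined quantity $V^{\pi^{\mathcal C}}((s,Z(s))) - V^{\chi}(s)$ by the Telescoping Lemma (\cref{telescope}), taking the stopping times $T_i$ to be the re-merge events $Z(s(t))\not\subset Z(s(t-1))$ and the value family $V^{s(T_i)}(\cdot) = V^{\pi^{\mathcal C}}((\cdot\,, Z(\cdot)))$; the per-event defect $\Delta_i^\tau$ is bounded by $\frac{\gamma^{c}}{1-\gamma}\tilde r$ via \cref{dtl_q_cutoff} with $\delta = 1$, and summing $\gamma^{T_i}\Delta_i^\tau$ over $T_i\ge 1$ produces the $\frac{\gamma^{c+1}}{(1-\gamma)^2}\tilde r$ contribution. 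So the $(1-\gamma)^{-2}$ factor belongs to the policy-evaluation side (tracking $\chi$ against $\pi^{\mathcal C}$ across re-merges), not to the optimality bracket $V^*(s)-\hat V^*(s,Z(s))$ as you assign it. Your remark about iterating the Dependence Time Lemma and recursing on value functions gestures in the right direction, but it must be applied to the re-merge events generated by $\chi$, and it needs the strong-Markov/virtual-trajectory argument of \cref{cutoff_condition} to make the telescoping identity exact. As written, the proposal has a genuine gap at its central identity.
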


Again, this is a group decentralized policy so it benefits from the reduced space required to store the policy. However, in contrast to the Amalgam Policy, the computational scalability described in \cref{decentralized_scalable} is achieved with the Cutoff Policy. It benefits from the partial observability because of the inherent properties of the Cutoff Multi-Agent MDP. Namely, the exact Bellman Equations in the Cutoff Multi-Agent MDP only require the computation of values for states $s_z$ where all agents in $z$ are in the same communication group (See \cref{cutoff}). This can significantly improve the tractability, as the computational scalability now reflects the size of the stored policy.  

The policy can perform better than the Amalgam Policy in the long horizon for examples with positive interdependent rewards (see \cref{aisle_walk}).

\subsubsection{First Step Finite Horizon Optimal Policy}
The final policy is called the First Step Finite Horizon Optimal Policy $\phi$. As the name suggests, the policy will take the first step in the discounted finite horizon optimal policy for the Locally Interdependent Multi-Agent MDP.  Formally, if $\{ \pi^{*, 0},  \pi^{*, 1}, ...,  \pi^{*, c}\}$ are the set of discounted finite horizon optimal policies with horizon $c$, the policy is defined as $\phi(s) = \pi^{*, 0}(s)$. This is stationary and shown to be a group decentralized policy by \cref{finite_group_decentralized}. For this policy, we prove:
\begin{theorem}
\label{finite_bound}
$\lvert V^*(s) - V^{\phi} (s) \rvert \leq \frac{2}{1 - \gamma} \gamma^{c + 1} \tilde{r}$.
\end{theorem}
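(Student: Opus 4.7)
The plan is to insert the optimal finite-horizon value
$U^{*,c}(s) := \max_{\pi}\mathbb{E}_{\pi}\bigl[\sum_{t=0}^{c}\gamma^{t}r(s(t),a(t))\bigr]$
(attained by the sequence $\{\pi^{*,0},\ldots,\pi^{*,c}\}$) as an intermediate quantity, bound each of $|V^{*}(s)-U^{*,c}(s)|$ and $|V^{\phi}(s)-U^{*,c}(s)|$ by $\tfrac{\gamma^{c+1}}{1-\gamma}\tilde r$, and finish by the triangle inequality. The argument is fundamentally a receding-horizon/MPC-style one; the multi-agent structure only enters through the identity $\phi(s)=\pi^{*,0}(s)$ (which makes $\phi$ stationary and, by the Dependence Time Lemma, group decentralized).

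The first bound, $|V^{*}(s)-U^{*,c}(s)|\leq\tfrac{\gamma^{c+1}}{1-\gamma}\tilde r$, is the standard finite-horizon truncation lemma. On one side, $V^{*}(s)$ equals the $c$-step truncated return of $\pi^{*}$ plus a tail bounded in absolute value by $\sum_{t=c+1}^{\infty}\gamma^{t}\tilde r=\tfrac{\gamma^{c+1}}{1-\gamma}\tilde r$, and the truncated part is at most $U^{*,c}(s)$. On the other side, extending $\{\pi^{*,0},\ldots,\pi^{*,c}\}$ by any infinite-horizon continuation yields a feasible policy whose value is at least $U^{*,c}(s)-\tfrac{\gamma^{c+1}}{1-\gamma}\tilde r$.

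The second bound is the heart of the argument. I would write the two Bellman-type relations
\[V^{\phi}(s)=r(s,\phi(s))+\gamma\,\mathbb{E}_{s'\sim P(\cdot\mid s,\phi(s))}[V^{\phi}(s')],\]
\[U^{*,c}(s)=r(s,\pi^{*,0}(s))+\gamma\,\mathbb{E}_{s'\sim P(\cdot\mid s,\pi^{*,0}(s))}[U^{*,c-1}(s')],\]
and note that since $\phi(s)=\pi^{*,0}(s)$, the action and transition kernel coincide on both right-hand sides, so subtracting yields $V^{\phi}(s)-U^{*,c}(s)=\gamma\,\mathbb{E}[V^{\phi}(s')-U^{*,c-1}(s')]$. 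I would then split $V^{\phi}(s')-U^{*,c-1}(s')=(V^{\phi}(s')-U^{*,c}(s'))+(U^{*,c}(s')-U^{*,c-1}(s'))$ and use the elementary estimate $|U^{*,c}(s)-U^{*,c-1}(s)|\leq\gamma^{c}\tilde r$ (adding one more step to the finite-horizon objective changes the optimum by at most a single discounted reward of magnitude $\gamma^{c}\tilde r$). Setting $\Delta:=\sup_{s}|V^{\phi}(s)-U^{*,c}(s)|$ and taking a sup gives $\Delta\leq\gamma\Delta+\gamma^{c+1}\tilde r$, i.e.\ $\Delta\leq\tfrac{\gamma^{c+1}}{1-\gamma}\tilde r$.

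The main obstacle is the mismatch between the two recursions: $V^{\phi}$ is stationary and so appears on both sides of its own Bellman equation, whereas $U^{*,c}$'s recursion drops to $U^{*,c-1}$ because the horizon shrinks by one. This is exactly what forces the telescoping estimate $|U^{*,c}-U^{*,c-1}|\leq\gamma^{c}\tilde r$, and it is also why the final bound here carries only a single factor of $1/(1-\gamma)$ rather than the $1/(1-\gamma)^{2}$ that appears in \cref{amal_bound} and \cref{cutoff_bound}. Once that step is in hand, the triangle inequality closes the argument to give $|V^{*}(s)-V^{\phi}(s)|\leq\tfrac{2\gamma^{c+1}}{1-\gamma}\tilde r$ as claimed.
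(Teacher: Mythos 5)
Your proposal is correct and reaches the stated bound, and while the overall skeleton matches the paper's (insert the optimal $c$-horizon value $V_0^*=U^{*,c}$ as an intermediate quantity, bound both gaps by $\tfrac{\gamma^{c+1}}{1-\gamma}\tilde r$, and conclude by the triangle inequality; your first bound is exactly \cref{finite_naive}), your treatment of the second gap is genuinely different and cleaner than the paper's. The paper establishes the per-step inequality of \cref{finite_condition} by constructing a virtual trajectory that switches to the finite-horizon optimal policy after the first step, appending a ``ghost sample'' to account for the extra $(c{+}1)$-st reward, and then feeding the result into the Telescoping Lemma with $T_i=i$. You instead exploit the exact finite-horizon Bellman optimality recursion $U^{*,c}(s)=r(s,\pi^{*,0}(s))+\gamma\,\mathbb{E}[U^{*,c-1}(s')]$, cancel it against the Bellman consistency equation for the stationary policy $\phi$ (valid because $\phi(s)=\pi^{*,0}(s)$ picks the same action and hence the same transition kernel), and absorb the horizon mismatch with the elementary estimate $\lvert U^{*,c}-U^{*,c-1}\rvert\leq\gamma^{c}\tilde r$; the contraction $\Delta\leq\gamma\Delta+\gamma^{c+1}\tilde r$ then closes the loop. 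Unrolling your recursion reproduces exactly the paper's telescoping sum with per-step error $\gamma^{i+c+1}\tilde r$, so the two arguments are quantitatively identical, but yours replaces the virtual-trajectory and ghost-sample bookkeeping with two lines of dynamic programming and even yields the two-sided bound $\lvert V^{\phi}-V_0^{*}\rvert\leq\tfrac{\gamma^{c+1}}{1-\gamma}\tilde r$ where the paper only needs one direction. The only things your route forgoes are the by-products the paper extracts along the way (namely \cref{cutoff_finite_proof} and \cref{finite_group_decentralized}, which are needed elsewhere to show $\phi$ is a valid group decentralized policy), and you should note that the sup defining $\Delta$ is finite because both value functions are uniformly bounded by $\tilde r/(1-\gamma)$.
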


We also show that the First Step Finite Horizon Optimal Policy can equivalently be computed on the Cutoff Multi-Agent MDP with \cref{cutoff_finite_proof}. Therefore, we obtain the same reduced computational and storage savings. However, since this policy only considers a limited horizon, the long horizon behavior can be inferior to the other two.

\subsubsection{Summary of the Three Policies} These three group decentralized policies take different intuitive approaches to perform well. Roughly, the Amalgam, Cutoff, and First Step Finite Horizon Optimal Policies convert the optimal policy $\pi^*$, optimal cutoff policy $\pi^{\mathcal C}$, and the optimal discounted finite horizon policy $\{ \pi^{*, 0},  \pi^{*, 1}, ...,  \pi^{*, c}\}$, respectively, into valid stationary group decentralized policies.

All of these policies are more tractable to store and compute than the optimal policy except for the Amalgam Policy which can be modified using our discussions in \cref{extensions} to improve computational scalability.

Although they all satisfy a similar theoretical guarantee, differences will show in varying long horizon behaviors that are explored in \cref{simulations}. 

\subsection{Lower Bound}
Next, we present a construction for a lower bound on the best possible group decentralized policy. 
\begin{theorem}
\label{lower_bound}
There exists a Locally Interdependent Multi-Agent MDP $\mathcal M(\ell)$ such that for every visibility $\mathcal V = 2\ell + 1$, $\ell \in \{0, 1, ...\}$, there exists $s,a$ such that $\lvert V^* (s) - \max_{\pi \in \Pi_ {\mathcal V}} V^\pi (s)\rvert \geq \frac{1}{2} \frac{\gamma^{c + 2}}{1 - \gamma} \tilde r$.
\end{theorem}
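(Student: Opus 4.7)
The plan is to exhibit an explicit two-agent Locally Interdependent Multi-Agent MDP $\mathcal M(\ell)$ on $\mathcal X = \mathbb Z$ with $\mathcal R = 1$, so that $c = \lfloor (\mathcal V - \mathcal R)/2\rfloor = \ell$, together with a specific $(s^*, a^*)$ at which every group decentralized policy is forced to make a blind binary commitment that the optimal fully observable policy can avoid. The construction uses two internal-state alphabets: agent $2$ has $y_2 \in \{*, 0, 1\}$ that starts at $*$ and transitions uniformly to $0$ or $1$ via $P_2$ at the first step; agent $1$ has $y_1 \in \{*, \#, 0, 1\}$ whose transitions funnel it from $*$ at $t = 0$ into $\#$ at $t = 1$, and from $\#$ into an irreversible element of $\{0,1\}$ determined by the action taken at $t = 1$ through set-actions made available only in the $\#$ stage. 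The interdependent reward is $\overline r_{1,2}(s_1, a_1, s_2, a_2) = \tilde r$ whenever $\lvert x_1 - x_2 \rvert \leq \mathcal R$ and $y_1 = y_2 \in \{0, 1\}$, and $0$ otherwise; the terms $\overline r_{j,j}$ are zero. I place the agents at $x_1(0) = 0$ and $x_2(0) = 2\ell + 4$, give agent $2$ the action set $\{L, \mathrm{Stay}\}$, and set $a^*$ so that agent $1$ moves right and agent $2$ moves left, with both subsequently either continuing toward each other or playing Stay.

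Next I would verify the geometry. At $t = 0$ and $t = 1$ the distance is $2\ell + 4$ and $2\ell + 2$, both strictly larger than $\mathcal V = 2\ell + 1$, so the visibility partition separates the agents throughout the commitment window. From $t \geq 2$ the distance shrinks by $2$ per step and first hits $1$ at $t = c + 2$, after which both agents Stay and preserve $\lvert x_1 - x_2 \rvert \leq \mathcal R$ forever. The fully observable optimal policy at $t = 1$ reads $y_2(1)$ from $s(1)$ and picks the matching set-action, so $y_1(t) = y_2(t)$ for all $t \geq 2$, yielding
\[
V^*(s^*) \;=\; \sum_{t = c+2}^{\infty} \gamma^t \tilde r \;=\; \frac{\gamma^{c+2}}{1 - \gamma}\tilde r.
\]

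For the upper bound on $\max_{\pi \in \Pi_{\mathcal V}} V^\pi(s^*)$, I exploit the factorization $P(s'\lvert s, a) = P_1(s_1'\lvert s_1, a_1) P_2(s_2'\lvert s_2, a_2)$ to conclude that the distribution of $s_1(1)$ under any $\pi \in \Pi_{\mathcal V}$ does not depend on the random draw of $y_2(1)$. Since agent $1$ is alone in its visibility group at $t = 1$, the action at $t = 1$ is drawn from a single fixed distribution $\pi_{\{1\}}(s_1(1))$ over $\{\mathrm{set}0, \mathrm{set}1\}$, so $\Pr[y_1(2) = y_2(1)] \leq \tfrac{1}{2}$. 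On the match event the best achievable continuation from $s(1)$ is at most $V^*(s(1)) = \gamma^{c+1}/(1-\gamma)\tilde r$, and on the mismatch event the continuation is $0$ because $y_1$ is now locked to the wrong value and the reward indicator is identically zero thereafter. Taking an expectation over $y_2(1)$ and discounting one step gives $V^\pi(s^*) \leq \tfrac{1}{2}\gamma \cdot \gamma^{c+1}/(1-\gamma)\tilde r = \tfrac{1}{2}\gamma^{c+2}/(1-\gamma)\tilde r$, and subtracting from $V^*(s^*)$ yields the claim.

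The main obstacle is ruling out the seemingly natural ways a decentralized policy could sidestep the forced commitment. Altering agent $1$'s movement at $t = 0$ cannot make the agents visible earlier because the unit-step movement bound still keeps $d(s_1(1), s_2(1)) \geq \mathcal V + 1$; delaying the commitment past $t = 1$ is blocked by the restricted action set in the $\#$ stage; and randomizing $\pi_{\{1\}}(s_1(1))$ does not improve the matching probability because the same mixed distribution is applied in both $y_2(1)$ branches. The independence of $P_1$ and $P_2$ is used critically to argue that the branching over $y_2(1)$ cannot leak into $s_1(1)$, so that the argument genuinely reduces to a single forced binary choice. Ensuring each of these "escape routes" is explicitly closed by the structure of $\mathcal M(\ell)$ is the technical heart of the proof.
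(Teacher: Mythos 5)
Your construction is correct in substance but takes a genuinely different route from the paper's. The paper's $\mathcal M(\ell)$ is fully deterministic with $\mathcal R = 0$: the blind agent at $S_3$ cannot tell whether the other agent started at $S_1$ or $S_2$, and for any fixed commitment probability $p_0$ a pigeonhole over the \emph{two} candidate initial states shows that at least one of them suffers a collision-penalty gap of $\tfrac{1}{2}\gamma^{c+2}\tilde r/(1-\gamma)$; the ``hard'' state thus depends on the policy. You instead move the uncertainty from the initial state into a stochastic internal-state transition ($y_2(1)$ uniform on $\{0,1\}$) and use transition independence $P = P_1 P_2$ plus the fact that agent $1$ is alone in its visibility group at $t=1$ to show that \emph{every} $\pi \in \Pi_{\mathcal V}$ matches with probability exactly $\tfrac12$ from the \emph{single} fixed state $s^*$. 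This is the more robust argument with respect to the quantifier order in the theorem statement ($\exists s$ such that $\max_\pi V^\pi(s)$ is far from $V^*(s)$): in the paper's deterministic construction a policy tailored to one known initial state can hard-code the correct branch there, so the separation only appears when the adversary may pick the state after the policy, whereas your randomized version pins down one $s^*$ that defeats all decentralized policies simultaneously. The price is a somewhat heavier construction (internal-state alphabets, irreversible set-actions emulated through the transition kernel) and the need to verify the independence and geometry claims you list, all of which check out.

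Two small repairs are needed. First, with $\mathcal R = 1$ and $\mathcal V = 2\ell+1$ the case $\ell = 0$ gives $\mathcal V = \mathcal R$, violating the standing assumption $\mathcal V > \mathcal R$; take $\mathcal R = 0$ (requiring exact overlap for the reward, with initial separation adjusted so the first possible overlap is at $t = c+2$) or handle $\ell = 0$ separately. Second, since $r(s,a) = \sum_{j,k}\overline r_{j,k}$ sums over ordered pairs, you should set $\overline r_{2,1} \equiv 0$ so that $\sup_{s,a}\lvert r(s,a)\rvert$ is indeed $\tilde r$ and the per-step matched reward is $\tilde r$ rather than $2\tilde r$. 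Neither affects the structure of the argument.
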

The above lower bound matches our three prior results up to constant factors and proves they nearly have the best performance guarantee we can establish for general Locally Interdependent Multi-Agent MDP's.
\subsection{Performance of Group Decentralized Policies}
To conclude, notice all three upper bounds are also upper bounds for $\lvert V^* (s) - \max_{\pi \in \Pi _{\mathcal V}} V^\pi (s)\rvert $. Together with our discussion above, we have proven a fundamental property of group decentralized policies in Locally Interdependent Multi-Agent MDP's. Namely,

\begin{corollary}
\label{thm:bigtheorem}
$\lvert V^* (s) - \max_{\pi \in \Pi _{\mathcal V}} V^\pi (s)\rvert  = O(\gamma^{\mathcal V})$.
\end{corollary}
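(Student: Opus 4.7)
The plan is to derive this corollary as a two-sided squeeze using any one of the three upper bound theorems already in hand. First, I would observe that $\Pi_{\mathcal V}$ is a subclass of the space of all policies, and since $\pi^*$ is globally optimal, $V^*(s) \geq V^\pi(s)$ for every $\pi \in \Pi_{\mathcal V}$; in particular $V^*(s) \geq \max_{\pi \in \Pi_{\mathcal V}} V^\pi(s)$, which lets me drop the absolute value in the statement. Second, each of the three policies $\lambda$, $\chi$, and $\phi$ constructed in Section \ref{sec:main_results} was by design a group decentralized policy, hence a member of $\Pi_{\mathcal V}$; this gives the matching lower bound $\max_{\pi \in \Pi_{\mathcal V}} V^\pi(s) \geq V^\lambda(s)$ (and similarly for $\chi$ or $\phi$). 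Combining these two inequalities pinches the gap:
\[
0 \;\leq\; V^*(s) - \max_{\pi \in \Pi_{\mathcal V}} V^\pi(s) \;\leq\; V^*(s) - V^\lambda(s) \;\leq\; \frac{2}{(1-\gamma)^2}\gamma^{c+1}\tilde r,
\]
where the last step is Theorem \ref{amal_bound}. Theorem \ref{cutoff_bound} or Theorem \ref{finite_bound} would yield the same conclusion up to a different constant, so any of the three suffices.

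To convert $\gamma^{c+1}$ into the claimed $O(\gamma^{\mathcal V})$ form, I would unfold $c = \lfloor (\mathcal V - \mathcal R)/2\rfloor$ to get $\gamma^{c+1} \leq \gamma^{-\mathcal R/2}\cdot (\gamma^{1/2})^{\mathcal V}$. Treating $\mathcal R$, $\gamma$, and $\tilde r$ as fixed problem parameters and absorbing $(1-\gamma)^{-2}$, $\gamma^{-\mathcal R/2}$, and $\tilde r$ into the big-$O$ constant, the bound decays exponentially in $\mathcal V$, which is the content of the corollary (with the understanding that the base inside the $O(\cdot)$ is $\gamma^{1/2}$, reported loosely as $\gamma$).

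There is really no hard step here: the corollary is a packaging observation that records the qualitative consequence of the already-established upper bounds. The only items worth stating explicitly are (i) the trivial optimality direction $V^*\geq \max_{\pi\in\Pi_{\mathcal V}} V^\pi$, and (ii) the membership $\lambda,\chi,\phi\in\Pi_{\mathcal V}$, which was guaranteed at the point each was constructed. Consequently, if there were any ``main obstacle'' it would have been in proving \cref{amal_bound} (or its two companions) rather than in the corollary itself.
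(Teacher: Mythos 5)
Your proposal is correct and matches the paper's own argument: the paper simply observes that since $\lambda$, $\chi$, $\phi \in \Pi_{\mathcal V}$, each upper bound from \cref{amal_bound}, \cref{cutoff_bound}, \cref{finite_bound} is also an upper bound on $\lvert V^*(s) - \max_{\pi\in\Pi_{\mathcal V}} V^\pi(s)\rvert$, and the $\gamma^{c+1}$ factor gives the exponential decay in $\mathcal V$. Your extra care in noting that the effective base is $\gamma^{1/2}$ (since $c \approx \mathcal V/2$) is a fair and accurate reading of what the paper states loosely as $O(\gamma^{\mathcal V})$.
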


We know this is the best asymptotic guarantee we may establish for general Locally Interdependent Multi-Agent MDP's because of our lower bound. This shows that the class of group decentralized policies can indeed produce viable solutions in Locally Interdependent Multi-Agent MDP's. 

\section{Proof Summary}\label{sec:proof_summary}

As mentioned previously, we would like to upper bound the quantity $\lvert V^*(s) -  V^{\pi}(s)\rvert$ for the three group decentralized policies in place of $\pi$. All three will use similar proof techniques that are outlined here.

The proofs for each policy in \cref{proofs} will be divided into 3 steps. In the first step, we discuss the consequences of the Dependence Time Lemma. In the second step, we will use ideas from the first step to demonstrate the performance of three naive versions of our closed-form group decentralized policies. Lastly, we will use the Telescoping Lemma to convert the guarantees from these naive policy candidates into guarantees for our closed-form group decentralized policies.

\subsection{Step 1: Dependence Time Lemma}
In this step of the proofs, we will introduce various relevant lemmas derived from the Dependence Time Lemma to be used in later steps. We introduce the Dependence Time Lemma here.

The lemma describes a fundamental property of visibility groups in Locally Interdependent Multi-Agent MDP's. Since $\mathcal V > \mathcal R$ and agents move at most a distance of $1$, a buffer of time $c = \lfloor \frac{\mathcal V - \mathcal R}{2}\rfloor$ is created during which an agent cannot interact with another agent from outside its visibility group. That is, the interdependent rewards $\overline r(s_j(t), a_j(t), s_k(t), a_k(t)) = 0$ for agents $j,k$ in different initial visibility groups for up to $c$ time steps. Formally, this is stated in the following lemma.
\begin{lemma}
\label{dtl}
(Dependence Time Lemma) For any realizable trajectory $(s(t), a(t))$ and time step $T$, we have  $$ r(s(T + \delta), a(T + \delta)) = \sum_{z \in Z(s(T))} r_z(s_z(T + \delta),a_z(T + \delta))$$ for $\delta \in \{0, ..., c\}$.
\end{lemma}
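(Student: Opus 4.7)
The plan is to show that any two agents sitting in distinct visibility groups at time $T$ remain farther apart than the dependence radius $\mathcal R$ for every $\delta \in \{0, \ldots, c\}$, which forces all interdependent cross-terms between different groups to vanish and therefore lets the total reward split along the partition $Z(s(T))$.

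First I would fix two agents $j, k$ lying in different components of $Z(s(T))$. By construction, $Z(s(T))$ is the transitive closure of the relation ``at distance $\leq \mathcal V$,'' so if $d(s_j(T), s_k(T)) \leq \mathcal V$ held, then $j$ and $k$ would already share a component. Contrapositively, $d(s_j(T), s_k(T)) > \mathcal V$.

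Next I would propagate this inequality through the bounded-movement assumption. Because the local transitions $P_k$ are supported on the ball of radius $1$ around the current state, a straightforward induction on $\delta$ combined with the triangle inequality gives $d(s_j(T), s_k(T)) \leq d(s_j(T+\delta), s_k(T+\delta)) + 2\delta$, so $d(s_j(T+\delta), s_k(T+\delta)) > \mathcal V - 2\delta$. Since $\delta \leq c = \lfloor (\mathcal V - \mathcal R)/2 \rfloor$ implies $2\delta \leq \mathcal V - \mathcal R$, we obtain $d(s_j(T+\delta), s_k(T+\delta)) > \mathcal R$, and the assumption that $\overline r_{j,k}$ vanishes beyond range $\mathcal R$ then kills this cross-term.

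To finish, I would start from the global definition $r(s(T+\delta), a(T+\delta)) = \sum_{j,k \in \mathcal N} \overline r_{j,k}(s_j(T+\delta), a_j(T+\delta), s_k(T+\delta), a_k(T+\delta))$ and delete every pair $(j, k)$ whose agents belong to different components of $Z(s(T))$; by the previous step, such pairs contribute zero. The surviving pairs are precisely those with both endpoints inside a common $z \in Z(s(T))$, and regrouping by $z$ yields the claimed identity $\sum_{z \in Z(s(T))} r_z(s_z(T+\delta), a_z(T+\delta))$. There is no genuine obstacle here; the only place one could slip is in the inequality bookkeeping, specifically in checking that the strict inequality $d(s_j(T), s_k(T)) > \mathcal V$ together with the floor in the definition of $c$ combine cleanly to give $d > \mathcal R$ rather than $d \geq \mathcal R$, which is exactly what the vanishing assumption on $\overline r_{j,k}$ requires.
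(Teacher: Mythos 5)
Your proposal is correct and follows essentially the same route as the paper: agents in different components of $Z(s(T))$ are at distance $>\mathcal V$ at time $T$, the bounded-movement assumption plus the (reverse) triangle inequality gives $d(s_j(T+\delta),s_k(T+\delta))>\mathcal V-2\delta\geq\mathcal R$ for $\delta\leq c$, and the cross-terms $\overline r_{j,k}$ therefore vanish, allowing the sum to be regrouped by visibility component. Your explicit remark that membership in distinct components forces $d(s_j(T),s_k(T))>\mathcal V$ (via the transitive-closure definition of the partition) is a step the paper leaves implicit, and your inequality bookkeeping correctly yields the strict bound $d>\mathcal R$ that the vanishing condition on $\overline r_{j,k}$ requires.
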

\begin{proof}
Consider two agents $j, k \in \mathcal N$ such that $d(s_j(T), s_k(T)) > \mathcal V$.
Firstly, using the definition of the distance metric and the probability transition function of the Locally Interdependent Multi-Agent MDP, we may conclude $d(s_j(T), s_j(T + \delta)) \leq \sum_{i = 0}^{\delta - 1} d(s_j(T + i), s_j(T + i + 1)) \leq \delta$. Similarly for agent $k$.
\\\\
Further by applying the reverse triangle inequality and substituting what we found, we have

\begin{align*}
    d(s_j(T &+ \delta), s_k(T + \delta)) \\
    &\geq d(s_j(T), s_k(T)) - d(s_j(T + \delta), s_j(T)) \\
    &\hspace{15ex}- d(s_k(T + \delta), s_k(T)) \\
    &> \mathcal V - 2 \delta \\
    &\geq \mathcal V - 2c \\
    &\geq \mathcal V - 2\bigg(\frac{\mathcal V-\mathcal R}{2}\bigg) \\
    &= \mathcal R.
\end{align*}
\\
Now that we established $d(s_j(T + \delta), s_k(T +\delta)) > \mathcal R$, by the definition of the reward function for the Locally Interdependent Multi-Agent MDP, $\overline r_{j, k} (s_j(T + \delta), a_j(T + \delta), s_k(T + \delta), a_k(T + \delta)) = 0$ for agents $j, k$ that have  $d(s_j(T), s_k(T)) > \mathcal V$. This implies $r(s(T + \delta), a(T + \delta)) = \sum_{z \in Z(s(T))} r_z(s_z(T + \delta),a_z(T + \delta))$.
\\
\end{proof}

\subsection{Step 2: Performance of Naive Policies}
With some consequences of the Dependence Time Lemma in hand, this step of the proof will look to three naive policy candidates that satisfy a guarantee similar to the one we are after. They are naive because none of them are valid group decentralized policies. We will attempt to convert these into valid group decentralized policies afterward.

The proofs for all three following lemmas are provided in \cref{proofs}.

The first policy corresponds to a naive version of the Amalgam Policy. For any partition $G$ on $\mathcal N$, let $\pi^*_G(s) = (\pi_g^*(s_g): \forall g \in G)$. Then the policy is defined as $\pi^*_{Z(s^*)}(s)$ for some fixed $s^*$. Notice that this is not a group decentralized policy and is not allowed in our setting because the communication groups are fixed. Using lemmas derived in step 1, we may derive the following lemma:
\begin{lemma}
\label{amal_naive}
$\lvert V^*(s) - V^{\pi^*_{Z(s)}} (s)\rvert \leq \frac{2}{(1 - \gamma)}\gamma^{c+1}  \tilde{r}$.
\end{lemma}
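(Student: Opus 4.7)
The strategy is to introduce the reference quantity $W(s) := \sum_{z \in Z(s)} V_z^*(s_z)$ and sandwich both $V^*(s)$ and $V^{\pi^*_{Z(s)}}(s)$ within $\frac{\gamma^{c+1}}{1-\gamma}\tilde r$ of $W(s)$, so the triangle inequality delivers the claim.

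The foundation is the Dependence Time Lemma applied at $T=0$: along any realizable trajectory from $s$, $r(s(t),a(t)) = \sum_{z \in Z(s)} r_z(s_z(t),a_z(t))$ for every $t \in \{0,\ldots,c\}$. Combined with the transition factorization $P=\prod_k P_k$ (which means the joint state evolves groupwise independently whenever a policy does not couple groups across the partition), this turns the $c$-step truncated problem from $s$ into a sum of independent subproblems, one per sub-MDP $\mathcal{M}_z$.

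To relate $V^{\pi^*_{Z(s)}}(s)$ to $W(s)$, I split at time $c+1$. For $t\le c$, DTL together with transition independence under $\pi^*_{Z(s)}$ (whose marginal on group $z$ matches $\mathcal{M}_z$ under $\pi_z^*$) converts the truncated expectation into $\sum_z V_{z,c}^{\pi_z^*}(s_z) = \sum_z\bigl(V_z^*(s_z) - \mathbb{E}_{\pi_z^*}[\sum_{t\ge c+1}\gamma^t r_z]\bigr)$. Recombining the per-group tails through the joint independent law and adding the $\mathcal{M}$-tail for $t \ge c+1$ writes $V^{\pi^*_{Z(s)}}(s) = W(s) + \mathbb{E}_{\pi^*_{Z(s)}}\bigl[\sum_{t\ge c+1}\gamma^t \bigl(r - \sum_{z} r_z\bigr)(s(t),a(t))\bigr]$, and the correction term is bounded by $\frac{\gamma^{c+1}}{1-\gamma}\tilde r$.

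To bound $V^*(s)$ from above by $W(s)$ plus the same tail: by DTL, the first $c+1$ steps of $V^*(s)$ contribute $\sum_z \mathbb{E}_{\pi^*}[\sum_{t=0}^c \gamma^t r_z(s_z(t),a_z(t))]$. Transition factorization implies that the marginal law of $(s_z(t), a_z(t))$ under $\pi^*$ is induced by some history-dependent randomized policy $\tilde\pi_z$ in $\mathcal{M}_z$; optimality of $\pi_z^*$ inside $\mathcal{M}_z$ over all such policies gives $\sum_{t\ge 0}\gamma^t\mathbb{E}_{\pi^*}[r_z] \le V_z^*(s_z)$, and the $\mathcal{M}$-reward past time $c$ is absorbed into a $\frac{\gamma^{c+1}}{1-\gamma}\tilde r$ tail. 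Together with the identity above (and $V^*(s)\ge V^{\pi^*_{Z(s)}}(s)$ for the other direction), this gives both inequalities. The main obstacle is the tail bookkeeping: a naive per-group bound aggregates into $\sum_z \tilde r_z$, which is too loose, and the key saving comes from re-expressing the summed per-group tails through the joint product distribution under $\pi^*_{Z(s)}$ so that they collapse into a single $\mathcal{M}$-reward tail bounded by $\tilde r$.
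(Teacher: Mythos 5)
Your proof is correct and is essentially the paper's argument in lightly different notation: your pivot $W(s)=\sum_z V_z^*(s_z)$ is exactly the paper's group-restricted value $V^{\pi^*_{Z(s)}}_{Z(s)}(s)$, your two tail bounds are the two applications of the paper's \cref{dtl_q} with $\delta=0$, and your marginal-policy domination step is the paper's observation that $V^{*}_{Z(s)}(s)-V^{\pi^*_{Z(s)}}_{Z(s)}(s)\leq 0$. No substantive differences.
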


The second policy candidate is the optimal cutoff policy $\pi^{\mathcal C}$. This policy viewed from the perspective of the Locally Interdependent Multi-Agent MDP is non-Markovian since the policy uses information about the cutoff partition. However, the following result still holds:
\begin{lemma}
\label{cutoff_naive}
$\lvert V^*(s) - V^{\pi^{\mathcal C}} ((s, Z(s)))\rvert \leq \frac{1}{(1 - \gamma)}\gamma^{c+1} \tilde{r}$.
\end{lemma}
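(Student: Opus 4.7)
The plan is to split each of $V^*(s)$ and $V^{\pi^{\mathcal C}}((s,Z(s)))$ into a contribution from time steps $0$ through $c$ and an infinite tail starting at time $c+1$, and to show that the head terms align through the Dependence Time Lemma while the tails collapse into a single geometric series bounded by $\frac{\gamma^{c+1}}{1-\gamma}\tilde r$. The central observations are that (i) by \cref{dtl}, on any realizable trajectory starting from $s$ the per-step reward decomposes as $r(s(t),a(t)) = \sum_{z\in Z(s)} r_z(s_z(t),a_z(t))$ for $t\le c$, and (ii) the per-agent transitions $P_k$ already factor across agents, so for each visibility group $z$ the marginal trajectory under any joint policy is realizable in the isolated MDP $\mathcal M_z$ under some induced policy.

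First I would pin down $V^{\pi^{\mathcal C}}((s,Z(s)))$: because the Cutoff Multi-Agent MDP freezes the partition at $Z(s)$ so that cutoff groups can never re-merge, and because the $r_z$ together with the $P_k$ factor across groups, running $\pi^{\mathcal C}$ from $(s,Z(s))$ is equivalent to running the optimal policy of $\mathcal M_z$ on each $z\in Z(s)$ in parallel, giving $V^{\pi^{\mathcal C}}((s,Z(s))) = \sum_{z\in Z(s)} V^*_{\mathcal M_z}(s_z)$. I would then bound $V^*(s)$ from above by this sum plus the tail: invoking \cref{dtl} on the first $c+1$ steps, the head of $V^*(s)$ becomes $\sum_{z\in Z(s)} \mathbb{E}_{\pi^*}[\sum_{t=0}^c \gamma^t r_z(s_z(t),a_z(t))]$; for each $z$ this is the truncated value of the $\pi^*$-induced policy $\hat\pi_z$ in $\mathcal M_z$, and it is at most $V^*_{\mathcal M_z}(s_z)$ once the per-group truncation residual is merged into the overall tail. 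The matching direction $V^*(s) \ge \sum_{z\in Z(s)} V^*_{\mathcal M_z}(s_z) - \frac{\gamma^{c+1}}{1-\gamma}\tilde r$ follows by a mirror argument: the amalgam concatenation $\lambda(s)=(\pi^*_z(s_z):z\in Z(s))$ is a (non-decentralized but valid) policy in the original MDP whose value, again by \cref{dtl}, agrees on $t\le c$ with the per-group optima and differs from them on $t\ge c+1$ by at most the same geometric tail.

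The main obstacle will be tightness of the tail bookkeeping: naively accounting for the tail of $V^*$ and the per-group truncation tails separately yields $\frac{2\gamma^{c+1}}{1-\gamma}\tilde r$ instead of the target $\frac{\gamma^{c+1}}{1-\gamma}\tilde r$. To avoid the factor of two I would work at the level of the combined difference $r(s(t),a(t)) - \sum_{z\in Z(s)} r_z(s_z(t),a_z(t))$ rather than bounding $r$ and $\sum_z r_z$ separately: by \cref{dtl} this difference vanishes identically for $t\le c$, and afterwards each of $r$ and $\sum_{z\in Z(s)} r_z$ is itself a partial sum of the same $\overline r_{j,k}$ terms with bound $\tilde r$, so the discounted sum of the difference is controlled by exactly one copy of $\sum_{t=c+1}^\infty \gamma^t \tilde r = \frac{\gamma^{c+1}}{1-\gamma}\tilde r$, which is the bound the lemma asserts.
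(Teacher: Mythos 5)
There is a genuine gap, and it stems from a mischaracterization of the Cutoff Multi-Agent MDP. You assert that the Cutoff MDP ``freezes the partition at $Z(s)$'' and conclude that $V^{\pi^{\mathcal C}}((s,Z(s))) = \sum_{z\in Z(s)} V^*_{\mathcal M_z}(s_z)$. But the cutoff partition is not frozen: by the definition of $P^{\mathcal C}$ in \cref{cutoff}, the partition refines at every step via $C(t+1) = Z(s(t+1)) \cap C(t)$, so agents \emph{within} an initial group $z$ that later drift more than $\mathcal V$ apart are permanently severed and lose all future interdependent rewards, even if they re-enter each other's visibility. Hence the correct decomposition is $V^{\pi^{\mathcal C}}((s,Z(s))) = \sum_{z\in Z(s)} V^{\pi^{\mathcal C}_z}((s_z,\{z\}))$, where each summand is the optimal value of the \emph{cutoff} MDP $\mathcal C_z$, not the optimal value $V^*_{\mathcal M_z}(s_z)$ of the sub-MDP $\mathcal M_z$. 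These can differ by $\Theta\!\left(\frac{\tilde r}{1-\gamma}\right)$ (e.g., when the optimal behavior in $\mathcal M_z$ is to separate and later re-merge to collect positive interdependent rewards, which the cutoff dynamics forbid), and the gap does not shrink with $\mathcal V$. The rest of your argument is a reasonable (and, with your careful single-tail bookkeeping, tight) proof that $\lvert V^*(s) - \sum_{z\in Z(s)} V^*_{\mathcal M_z}(s_z)\rvert \leq \frac{\gamma^{c+1}}{1-\gamma}\tilde r$, which is essentially a sharpened cousin of \cref{amal_naive} --- but it bounds the wrong quantity and cannot be converted into the stated lemma, since $\sum_z V^*_{\mathcal M_z}(s_z)$ and $V^{\pi^{\mathcal C}}((s,Z(s)))$ are not close in general.

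The paper's route avoids computing $V^{\pi^{\mathcal C}}((s,Z(s)))$ at all: since $\pi^{\mathcal C}$ is optimal in the Cutoff MDP, $V^{\pi^{\mathcal C}}((s,Z(s))) \geq V^{\pi^*}((s,Z(s)))$ where $\pi^*$ is viewed as a (partition-ignoring) policy in the Cutoff MDP, and then \cref{cutoff_imdp_q} shows $\lvert V^{\pi}(s) - V^{\pi}((s,Z(s)))\rvert \leq \frac{\gamma^{c+1}}{1-\gamma}\tilde r$ for any such $\pi$, because \cref{dtl} forces $C^\tau(t) = Z(s)$-consistency of the rewards for $t \leq c$ so that the two value functions differ only in cross-partition terms appearing from time $c+1$ onward. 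If you want to repair your proof, the fix is to run your head/tail comparison against the time-varying partition $C^\tau(t)$ rather than the fixed partition $Z(s)$, which is exactly what \cref{cutoff_imdp_q} does.
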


The last policy candidate is the discounted finite horizon optimal policy with a horizon of $c$. It will turn out that a consequence of the Dependence Time Lemma is that $\pi^{*,0}$ in $\{ \pi^{*, 0},  \pi^{*, 1}, ... ,  \pi^{*, c}\}$ is a group decentralized policy by \cref{finite_group_decentralized}. However, all together, the discounted finite horizon optimal policy is not valid because the policy is non-stationary. We show that the following relationship is satisfied:
\begin{lemma}
\label{finite_naive}
$\lvert V^*(s) - V_0^{*} (s)\rvert \leq \frac{1}{(1 - \gamma)}\gamma^{c+1} \tilde{r}$.
\end{lemma}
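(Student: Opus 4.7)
The plan is to establish $\lvert V^*(s) - V_0^*(s)\rvert \leq \frac{\gamma^{c+1}}{1-\gamma}\tilde{r}$ by bounding the two directions separately via standard finite-vs-infinite horizon comparison. In contrast to \cref{amal_naive} and \cref{cutoff_naive}, this particular bound does not appear to need the Dependence Time Lemma at all; it only uses the uniform reward bound $\lvert r(s,a)\rvert \leq \tilde{r}$, together with the definitions of $V^*$ and $V_0^*$.

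For the first direction, I would show $V^*(s) - V_0^*(s) \leq \frac{\gamma^{c+1}}{1-\gamma}\tilde{r}$. Let $\pi^*$ be the infinite horizon optimal policy. Since $V_0^*(s)$ is the supremum of $\mathbb{E}_{\tau\sim\pi\lvert_s}\bigl[\sum_{t=0}^{c}\gamma^t r(s(t),a(t))\bigr]$ taken over all policies, it must be at least the value attained by $\pi^*$ on the truncated horizon. Splitting $V^{\pi^*}(s)=V^*(s)$ into its first $c+1$ terms and its tail, this yields
\[
V^*(s) - V_0^*(s) \leq \mathbb{E}_{\tau\sim\pi^*\lvert_s}\!\left[\sum_{t=c+1}^{\infty}\gamma^t r(s(t),a(t))\right],
\]
which is bounded in absolute value by $\sum_{t=c+1}^{\infty}\gamma^t\tilde{r}=\frac{\gamma^{c+1}}{1-\gamma}\tilde{r}$.

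For the reverse direction, I would construct an infinite horizon policy $\tilde{\pi}$ by executing the non-stationary finite horizon optimal policy $\{\pi^{*,0},\ldots,\pi^{*,c}\}$ for the first $c+1$ steps and then following any fixed stationary policy thereafter. By optimality, $V^*(s)\geq V^{\tilde{\pi}}(s)$. Decomposing $V^{\tilde\pi}(s)$ into the first $c+1$ steps, which contribute exactly $V_0^*(s)$ since $\tilde{\pi}$ agrees with the finite horizon optimal policy on these steps, and a tail from $t=c+1$ onward, which is bounded below by $-\frac{\gamma^{c+1}}{1-\gamma}\tilde{r}$, gives $V_0^*(s) - V^*(s)\leq \frac{\gamma^{c+1}}{1-\gamma}\tilde{r}$. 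Combining the two directions yields the claim.

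The only subtlety is that rewards may be negative, which is why both sides of the absolute value require separate arguments rather than just one; in particular, in the second direction the extension of $\{\pi^{*,0},\ldots,\pi^{*,c}\}$ beyond horizon $c$ might have a negative tail, so we must use the two-sided tail bound. Aside from that bookkeeping, the argument is essentially the textbook truncation bound for discounted MDPs, which is the reason this naive policy bound does not invoke the structural insights of Locally Interdependent Multi-Agent MDP's used for the other two naive lemmas.
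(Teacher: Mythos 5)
Your proof is correct, and your first direction is exactly the paper's argument: lower-bound $V_0^*(s)$ by the value that the infinite-horizon optimal $\pi^*$ attains on the truncated horizon, so that $V^*(s)-V_0^*(s)$ is at most the tail $\sum_{t=c+1}^{\infty}\gamma^t \tilde r = \frac{\gamma^{c+1}}{1-\gamma}\tilde r$. Your second direction (extend $\{\pi^{*,0},\dots,\pi^{*,c}\}$ arbitrarily past step $c$, use $V^*(s)\geq V^{\tilde\pi}(s)$, and bound the tail from below) is in fact \emph{more} complete than the paper, whose proof only establishes the one-sided inequality $V^*(s)-V_0^*(s)\leq \frac{\gamma^{c+1}}{1-\gamma}\tilde r$ even though the lemma is stated with an absolute value; you are also right that, unlike \cref{amal_naive} and \cref{cutoff_naive}, this bound uses only the uniform reward bound and not the Dependence Time Lemma.
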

where $V_0^*$ is the optimal discounted finite horizon values up to horizon $c$.

\subsection{Step 3: Telescoping Lemma}
Lastly, we will use the Telescoping Lemma introduced here to help bridge the gap between our policy candidates and their corresponding valid group decentralized policies. Intuitively, it says that if we can express the expected discounted rewards within two stopping times as the expected difference between two discounted value functions, then we may express the whole value function as an expectation of a telescoping sum of these discounted value functions.

Below is the formal condition we would like to be held for the lemma:

\begin{condition}
\label{telescope_condition}
For a MDP, a given policy $\pi$, and a sequence of stopping times $0 = T_0 < T_1 < ...$ , we have a family of  value functions 
 parameterized by the state $\{V^{s}\}$, such that the following inequality is satisfied:
\\
$\medmath{\mathbb{E}_{\tau \sim \pi\lvert_{s}} \bigg[ \sum_{t = T_i}^{T_{i + 1} - 1} \gamma^t r(s(t), a(t))\bigg] \geq}$\\$\medmath{
  \mathbb{E}_{\tau \sim \pi\lvert_{s}} \bigg[ \gamma^{T_i}V^{s(T_i)}(s(T_i)) - \gamma^{T_{i + 1}} V^{s(T_i)}(s(T_{i + 1}))\bigg]}$.
\end{condition}
It turns out that this condition is satisfied with equality for the Amalgam policy using a change in the visibility groups as the stopping time and setting $V^{s(T_i)}(s) = V^{\pi^*_{Z(s(T_i))}}(s)$. The Cutoff Policy satisfies a very similar condition with equality by setting the stopping time to when any agent enters another agents visibility and  $V^{s(T_i)}(s) = V^{\pi^{\mathcal C}}((s, Z(s)))$. The First Step Finite Horizon Optimal policy will use every step as the stopping time and $V^{s(T_i)}(s) = V^{*}_0(s)$. The condition does not hold with equality in this case.

Below is the formal statement for the Telescoping Lemma. Note that the inequality holds with equality when the condition \cref{telescope_condition} is held with equality.

\begin{lemma}
\label{telescope}
(Telescoping Lemma) For any MDP, \\
suppose $0 = T_0 < T_1 < ...$ are stopping times such that \cref{telescope_condition} is satisfied. Then we have $V^\pi (s) \geq \mathbb{E}_{\tau \sim \pi\lvert_{s}}\bigg[V^s(s) - \sum_{i = 1}^\infty \gamma^{T_i} \Delta_{i}^\tau\bigg]$ where $\Delta_{i}^\tau = V^{s(T_{i - 1})}(s(T_i)) - V^{s(T_i)}(s(T_i))$.
\end{lemma}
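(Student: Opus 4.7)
The plan is to split the infinite discounted reward into blocks delimited by the stopping times $T_i$, apply \cref{telescope_condition} one block at a time to get a lower bound in terms of the parameterized value functions, and then rearrange the resulting sum so that it telescopes along the sequence $\{T_i\}$.

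\textbf{Step 1 (block decomposition).} Starting from $V^\pi(s) = \mathbb{E}_{\tau\sim\pi|_s}\bigl[\sum_{t=0}^\infty \gamma^t r(s(t),a(t))\bigr]$, and using the fact that $T_0<T_1<\cdots$ is strictly increasing so $T_i\geq i$, I would invoke boundedness of $r$ together with dominated convergence (or Tonelli) to rewrite the total return as $\sum_{i=0}^\infty \mathbb{E}_{\tau\sim\pi|_s}\bigl[\sum_{t=T_i}^{T_{i+1}-1}\gamma^t r(s(t),a(t))\bigr]$. Then for each $i$, \cref{telescope_condition} (applied conditionally on $s(T_i)$ via the strong Markov property of $\pi$, and then integrated) yields
\begin{equation*}
V^\pi(s) \;\geq\; \sum_{i=0}^\infty \mathbb{E}_{\tau\sim\pi|_s}\bigl[\gamma^{T_i} V^{s(T_i)}(s(T_i)) - \gamma^{T_{i+1}} V^{s(T_i)}(s(T_{i+1}))\bigr].
\end{equation*}

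\textbf{Step 2 (telescoping rearrangement).} I would now push the sum inside the expectation (again justified by uniform boundedness of the $V^{s}$ by $\tilde r/(1-\gamma)$ combined with $\gamma^{T_i}\leq \gamma^i$, giving an absolutely summable dominating series) and collect the terms associated with each stopping-time level $T_i$. The $i=0$ term contributes the standalone $\gamma^{T_0} V^{s(T_0)}(s(T_0)) = V^s(s)$. For each $i\geq 1$, the level $\gamma^{T_i}$ receives a $+V^{s(T_i)}(s(T_i))$ from block $i$ and a $-V^{s(T_{i-1})}(s(T_i))$ from block $i-1$, combining to $-\gamma^{T_i}\Delta_i^\tau$. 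The bracketed quantity therefore equals $V^s(s)-\sum_{i=1}^\infty \gamma^{T_i}\Delta_i^\tau$, which is the target expression.

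\textbf{Main obstacle.} The substantive content of the proof is just the blockwise application of \cref{telescope_condition} and the bookkeeping of the telescoping rearrangement; the conceptual step is recognizing that \cref{telescope_condition} is precisely the per-block inequality that makes such a telescoping possible. The main technical care is on the analytic side: the stopping times $T_i$ are random and the $V^{s(T_i)}$ functions vary along the trajectory, so one must verify that the double sum $\sum_i \mathbb{E}[\gamma^{T_i}V^{s(T_i)}(s(T_i)) - \gamma^{T_{i+1}}V^{s(T_i)}(s(T_{i+1}))]$ is absolutely convergent before rearranging. Using $T_i\geq i$ and the uniform bound $|V^{s'}(s'')|\leq \tilde r/(1-\gamma)$ provides the dominating geometric series needed to legitimize both Fubini-style exchanges and the regrouping, after which the equality-case remark (condition met with equality $\Rightarrow$ lemma met with equality) follows immediately by replacing every $\geq$ above with $=$.
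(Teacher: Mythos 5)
Your proposal is correct and follows essentially the same route as the paper: decompose the return into blocks between stopping times, apply \cref{telescope_condition} blockwise, and regroup the resulting sum into the telescoping form. The only difference is that you spell out the absolute-convergence justifications for interchanging sums and expectations, which the paper leaves implicit.
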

\begin{proof}
By definition and assumption, we may show
{\small
\begin{align*}
&V^{\pi}(s) = \mathbb{E}_{\tau \sim \pi\lvert_{s}}\bigg[\sum_{t = 0}^\infty \gamma^t r(s(t), a(t))\bigg]\\
&= \sum_{i = 0}^\infty \mathbb{E}_{\tau \sim \pi\lvert_{s}}\bigg[\sum_{t = T_i}^{T_{i + 1} - 1} \gamma^t r(s(t), a(t))\bigg]\\
&\geq  \mathbb{E}_{\tau \sim \pi\lvert_{s}}\bigg[\sum_{i = 0}^\infty\bigg(\gamma^{T_i}V^{s(T_{i})}(s(T_i))\\
&\hspace{15 ex}- \gamma^{T_{i + 1}}V^{s(T_{i})}(s(T_{i+1}))\bigg)\bigg]\\
& = \mathbb{E}_{\tau \sim \pi\lvert_{s}}\bigg[V^{s}(s) - \sum_{i = 1}^\infty \gamma^{T_i} \Delta_{i}^\tau\bigg].
\end{align*}
}%
\end{proof}
Notice that in the case stopping times become infinite after a certain time, $\gamma^\infty = 0$, and the sum inside the expectation will be finite.

By bounding the quantity $\Delta_i^\tau$ for all three policies, we may use this lemma to effectively bound the difference between the value functions of each naive policy candidate and its corresponding valid group decentralized policies. This paired with our bound on the naive policy candidates will be sufficient for producing our desired upper bounds.

\section{Fully Scalable Extensions}
\label{extensions}

We discussed in \cref{decentralized_scalable} that for our setting, the restricted visibility can improve the computation and storage requirements of these group decentralized policies. Here, we present modifications of our policies that fully mitigate the effect of the curse of dimensionality and are completely scalable. 

As is currently presented, when the visibility groups become too large, the state and action space grow exponentially and the group decentralized policies can still be intractable to compute and store. The goal of the following extensions is to effectively deal with the situation where group sizes become too large. 

\subsection{Eliminating Large Groups}
\label{eliminate_large}
Depending on the specific Locally Interdependent Multi-Agent MDP instance and the initial state we are interested in, agents may be sparse in $\mathcal X$ when running our group decentralized policies. This may happen when agents within $\mathcal R$ of each other are penalized and agents are incentivized to pursue spread-out independent rewards. 
With this sparsity, visibility group sizes may naturally not exceed some small constant $L$ when running our group decentralized policies. Then, for our three policies, we would only need to compute them for $\mathcal N = L$ agents. This can result in extreme computational savings in practice. See \cref{bullseye_many} for a toy example of how we can tractably compute the exact Amalgam Policy for many agents with a fixed initial state.

Since we are running the exact policy, our bounds in \cref{sec:main_results} hold for this initial state while benefiting from the scalability.

\subsection{Reducing Visibility and Splitting Large Groups}
\label{cuts}
Suppose that a group decentralized policy starting at a particular initial state, the sizes of visibility groups do exceed $L$ but the dependence groups (similar to visibility groups but defined for radius $\mathcal R$) do not exceed size $L$. This may occur when $\mathcal V$ is much larger than $\mathcal R$. Then, we may reduce the complexity by artificially splitting the visibility groups into smaller groups that do not exceed size $L$. For the best possible split based on the visibility, we may reduce the visibility $\mathcal V$ 
 just until the point $\mathcal V_L$ (up to some precision) that visibility group sizes do not exceed $L$. Essentially, reducing the visibility makes the agents more ``sparse''. 

\subsection{Approximations of Large Groups}
Lastly, we may handle larger group sizes $> L$ by using empirical tools such as deep actor-critic methods \cite{lowe2017multi}. For the Amalgam Policy, we may use these methods as intended to find an approximation for the optimal policy for these large groups of agents. For the Cutoff and the First Step Finite Horizon Policies, empirical methods may need to be modified using the Bellman Optimality equations presented in \cref{cutoff}. 

This method is unique in that it combines theoretical and empirical methods. For the Amalgam Policy, in the case that the group sizes are small $\leq L$, the local optimal policies are taken. When group sizes become large and finding the exact optimal policy is difficult, we transition to our empirical methods. This method provides very scalable solutions for Locally Interdependent Multi-Agent MDP's.

\section{Conclusion and Future Works}
We have formulated a broadly applicable theoretical framework that models partially observable decentralized agents with dynamic local dependencies. We illustrated this using examples of applications such as cooperative navigation, obstacle avoidance, and formation control. We found three closed-form policy solutions (Amalgam Policy, Cutoff Policy, and First Step Finite Horizon Optimal Policy) which satisfied theoretical upper bounds that were optimal up to constant factors. We also discussed the improved scalability in storing and computing these policies. Then, we gave various extensions to further improve the scalability of these policies. Lastly, we provided simulations and investigated the long term behaviors of the policies.

Overall, we believe that the proof techniques utilized in this paper may be used for applications beyond the Multi-Agent MDP setting. For example, the Telescoping Lemma as stated applies to general MDP's and is not commonly seen or used. 

Empirically, the scalability, applicability, and long horizon behaviors of the Amalgam and Cutoff Policies seem promising (see \cref{simulations}) when considered with our extensions described in \cref{extensions}. Some work however is required to overcome limitations described in \cref{penalty_jittering}.

An interesting follow-up work would be methods for performing reinforcement learning in this context. We are also interested to see whether we can extend this work to the setting where agents only act based on agents within view rather than forming communication groups.

\section*{Acknowledgements}
This research was supported by NSF Grants 2154171, 2339112, CMU CyLab Seed Funding, C3 AI Institute. In addition, Alex Deweese is supported by Leo Finzi Memorial Fellowship in Electrical \& Computer Engineering.

\section*{Impact Statement}
This paper presents work whose goal is to advance the field of 
Machine Learning. There are many potential societal consequences 
of our work, none which we feel must be specifically highlighted here.


\bibliography{icml2024/locally_interdependent}

\bibliographystyle{icml2024}

\newpage
\appendix

\section{Long Horizon Simulations}
\label{simulations}
Our theory demonstrates that asymptotically, increasing the visibility of the Amalgam, Cutoff, and First Step Finite Horizon Optimal Policies will improve their quality exponentially. The First Step Finite Horizon Optimal Policy achieves this by considering the first $c$ iterations and ignoring rewards beyond distance $c$. However, in practice, we may be interested in the behavior of these policies far beyond $c$ iterations for fixed visibility $\mathcal V$ and initial state, such as when rewards are sparse and unavailable for many iterations past $c$ or when the visibility is small. Therefore, we would like to study the non-trivial long horizon behaviors of the Amalgam and Cutoff Policies.

For the above purpose, we will run toy grid-world simulations for various settings which will serve as a proof of concept for our policy constructions, demonstrate the applications of our Locally Interdependent Multi-Agent MDP settings, and uncover some peculiarities of our policies in the long horizon. We will then discuss various rollouts of the policies starting from a specific initial state in the various instances of the Locally Interdependent Multi-Agent MDP. 

In the figures, red will represent the optimal policy, blue will represent the Amalgam Policy, and green will represent the Cutoff Policy. The initial starting points are denoted by X's and self-loops labeled with numbers describe how many times agents stayed in that position. Spaces colored in green represent locations where agents may obtain independent rewards and red represents independent penalties.

All examples shown will have trivial internal states and deterministic transitions.

\subsection{Bullseye Problem}
\label{bullseye}
The purpose of the following example is to demonstrate our theory described in the paper, provide an example of cooperative navigation, and begin exploring the differences between the Amalgam and the Cutoff Policies. The outcome of the simulation is shown in \cref{bullseye_figure}.

For this problem, we have a central reward of $+100$ that agents obtain individually at the center of the bullseye shown in green. Agents that reach the bullseye will not interact with any other agents and will no longer obtain any other rewards (formally, they are sent down a long chain of states with no rewards). Furthermore, any agent that is within a radius $\mathcal R = 20$ of another agent is penalized with $-500$. Also, agents that move away from the bullseye are penalized with $-2$. The visibilities $\mathcal V$ are described beneath the figure and the discount factor is $\gamma = 0.9$. We consider 2 agents and put one of them 24 spaces on the left side of the bullseye and the other 25 spaces on the right side. The optimal policy is to wait for the closer agent to get close enough to the bullseye and trail behind. 

We see that the Amalgam Policy with $\mathcal V = 25$ takes a sub-optimal policy that backtracks as it notices the other agent approaching. As we increase the visibility, we find the number of backtracks decreases, and the quality of the value function improves very quickly.

We also see at the bottom, the Cutoff Policy does not perform well in this example. A common theme across these simulations will be that the Cutoff Policy performs poorly when the interdependent rewards are primarily penalties. Note that this does not conflict with our theoretical upper bounds and a more extensive discussion is given in \cref{penalty_jittering}.

\begin{figure}[h]

\includegraphics[width=0.5\textwidth]{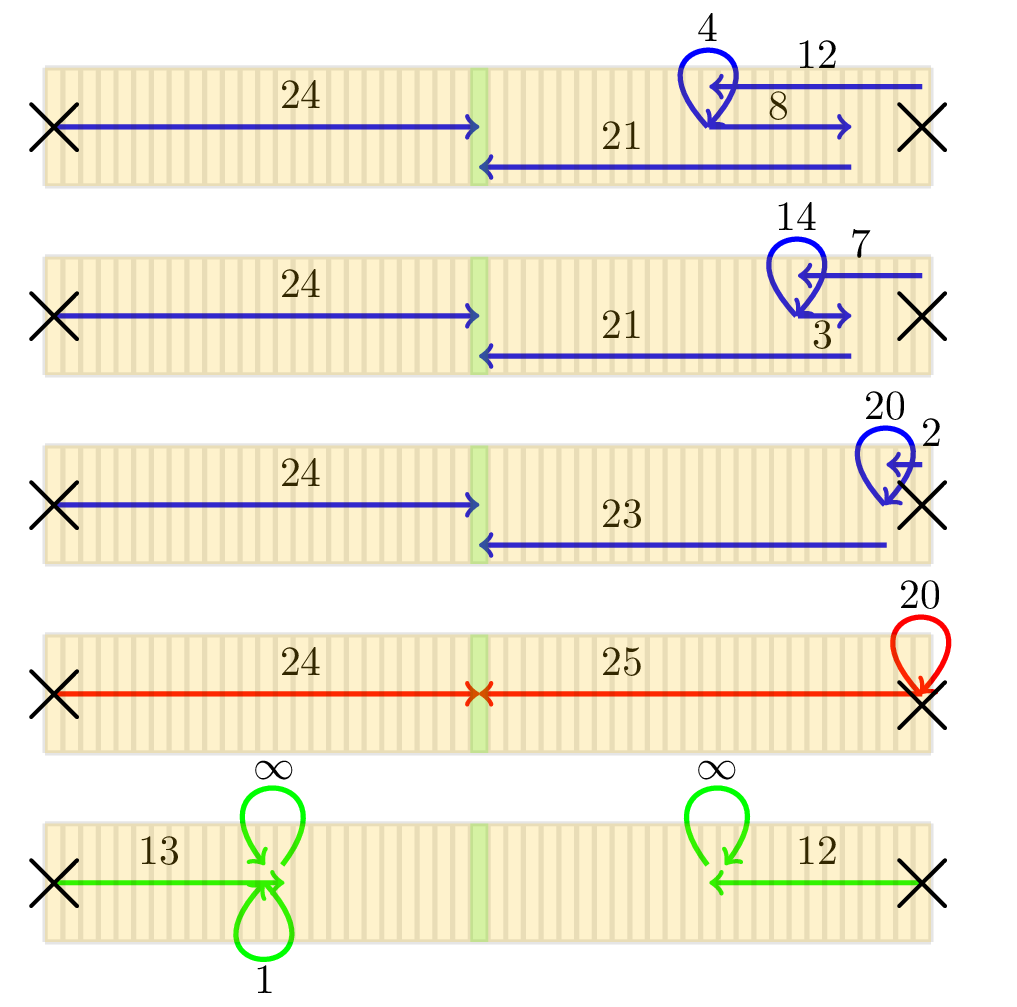}

\caption{\textit{Bullseye Problem:} In red is the optimal policy with a discounted sum of rewards of $8.85$. The top three in blue are Amalgam Policy rollouts with $\mathcal V=25$,$\mathcal \mathcal V=35$, $\mathcal V=45$ top to bottom. They have a total discounted reward of $6.74$, $8.26$, and $8.85$ respectively. Therefore, $ \lvert V^*(s) -  V^{\lambda}(s)\rvert$ is $2.11$, $0.59$, $0$ respectively.  In green is the Cutoff Policy with $\mathcal V = 25$.  It obtains a discounted reward of $-5.38$. All reported discounted sum of rewards are rounded to the second decimal place. }
\label{bullseye_figure}
\end{figure}

\subsection{Aisle Walk Problem}
\label{aisle_walk}
The following example demonstrates that the Cutoff Policy does indeed perform better than the Amalgam Policy for certain examples. This example will have positive interdependent rewards. The outcome of the simulation is shown in \cref{aisle_walk_figure}.

For this problem with 2 agents, with $\mathcal V = 2$, an interdependent reward of $20$ is given whenever the agents are within $\mathcal R = 1$ step of each other, so the agents are incentivized to stick together. However, there is a tempting reward of $+120$ on either side shown in green, that agents may split apart and obtain on the sides of the central aisle. Agents are required to move one step forward at each time step and can only change the column they are in to move out of the central aisle or return to the central aisle at specific points shown in the figure. At the top of the figure, the agents are stuck in those positions and left to interact interdependently for the remainder of the iterations. The discount factor is $\gamma = 0.9$. The optimal policy is then to split apart to obtain independent rewards and rejoin again at the end of the aisle. 

We see that, for this example, the Cutoff Policy performs better than the Amalgam Policy. The Amalgam Policy is tempted by the independent rewards and does not have the mechanism to rejoin again because the other agent is out of view. However, the Cutoff Policy takes this into account and decides that staying in the aisle is the best option. 

\begin{figure}[h]
\centering
\includegraphics[width=0.27\textwidth]{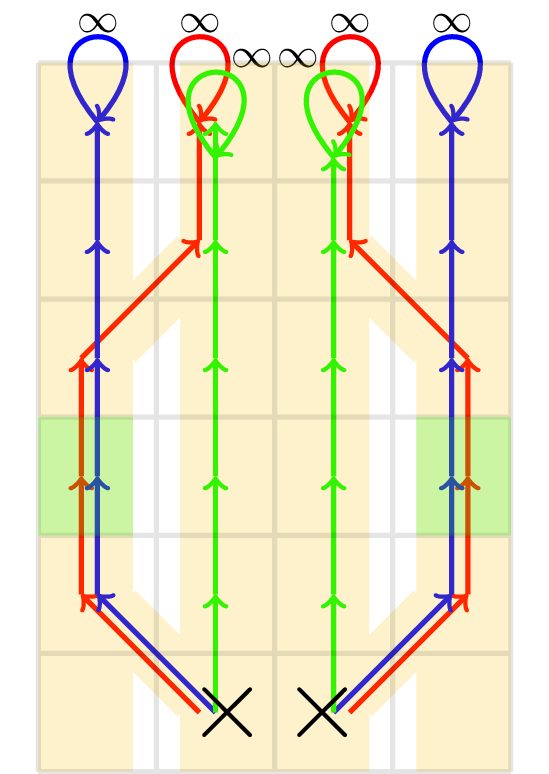}
\caption{\textit{Aisle Walk Problem:} In red is the optimal policy with a discounted reward of $496.84$, In blue is the Amalgam Policy with a discounted reward of $234.40$, and in green is the Cutoff Policy with a discounted reward of $400$. All reported discounted sum of rewards are rounded to the second decimal place. }
\label{aisle_walk_figure}
\end{figure}

\subsection{Highway Problem}
\label{highway}
This problem demonstrates obstacle avoidance in our setting and provides another example of the Cutoff Policy performing poorly when penalties are introduced in the interdependent rewards. The outcome of the simulation is shown in \cref{highway_figure_amalgam}.

Again with 2 agents, if any agent is within $\mathcal R = 3$ of another, the agent receives an interdependent penalty of $-500$. One agent is fixed near the center and acts as an obstacle. The visibility is $\mathcal V = 5$. 

There is a primary reward at the top left of $+100$ shown in green and agents that achieve the primary reward will no longer obtain any other rewards. The agent that is not fixed has the option of traveling around the obstacle agent to obtain the primary reward or to use the ``highway'' at the bottom left shown in red to transport there in 1 iteration but incur a penalty of $-25$. Using the highway will thus result in a less discounted primary reward. The discount factor is $\gamma = 0.98$. The optimal strategy is to pay the price and take the highway to the $+100$ reward because of the obstacle agent that is in the way. 

The Amalgam Policy has limited visibility and does not initially see the obstacle. Thus, it attempts to maximize its reward by avoiding the cost of the highway and taking the long way around. Once it notices the obstacle, it avoids it and eventually receives a greater discounted final reward that is suboptimal. 

The Cutoff Policy shown in \cref{highway_figure_cutoff} once again becomes confused once it notices the obstacle and performs poorly. This is again due to the penalties introduced in the interdependent rewards. A discussion of this phenomenon is provided in \cref{penalty_jittering}.

\begin{figure}[h]
\centering
\includegraphics[width=0.25\textwidth]{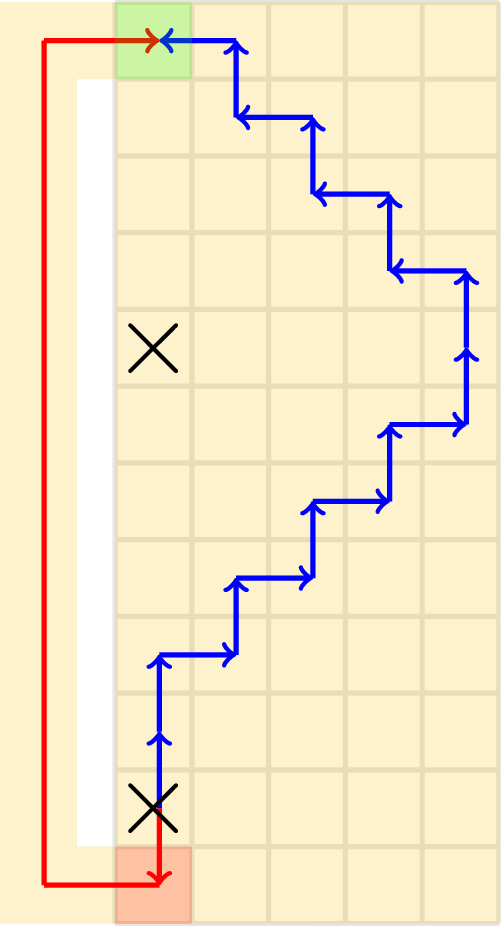}
    
\caption{\textit{Highway Problem with Amalgam and Optimal Policy:} In red is the optimal policy with a discounted reward of 73.5 and in blue is the  Amalgam Policy with 70.93 rounded to the second decimal place.}
\label{highway_figure_amalgam}
\end{figure}

\begin{figure}[h]
\centering
\includegraphics[width=0.28\textwidth]{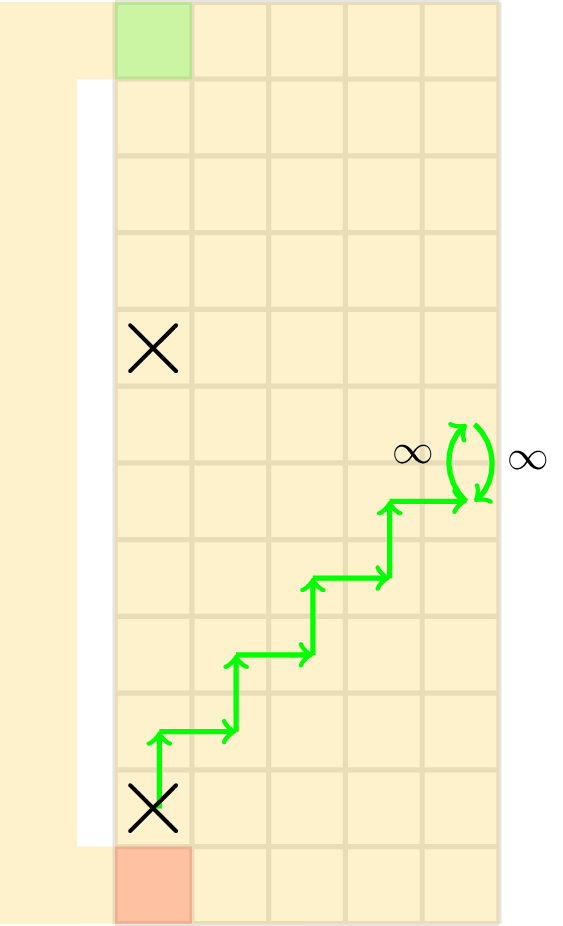}
    
\caption{\textit{Highway Problem with Cutoff Policy:} In green is the Cutoff Policy with an accumulated discounted reward of $0$.}
\label{highway_figure_cutoff}
\end{figure}

\subsection{Lane Merging Problem} 
\label{lane_merging}
The purpose of this example is to provide an example of formation control and to show a situation that both policies find the optimal policy. The outcome of the simulation is shown in \cref{lane_merging_figure}.

For this example, every agent can only move forward or stay in its position. If an agent is within 1 distance of another agent, it will incur a penalty of $-500$. However if the agent are exactly $\mathcal R = 2$ distance from another agent, it will receive a reward of $+10$. Lastly, any agent in the rightmost 7 squares will receive a reward of $+100$ at every step. $\mathcal V = 4$ and the discount factor is $\gamma = 0.9$. There are two agents on each side of two lanes that merge in the center, one set closer to the junction than the other. The optimal strategy is then to allow the agents closer to the junction to pass and then to follow in formation.

All policies for this example find the exact optimal policy. Once the agents take a step forward, the agents all come within view of each other and they can coordinate effectively.

\begin{figure}[h]
\centering
\includegraphics[width=0.52\textwidth]{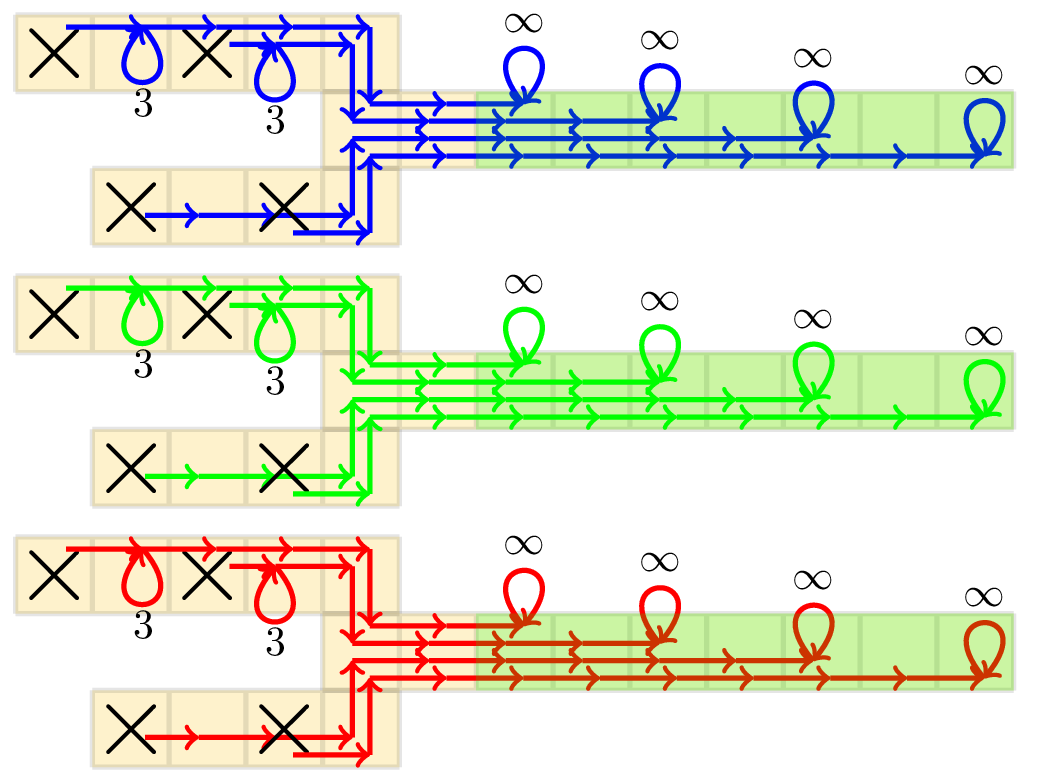}
\caption{\textit{Lane Merging Problem:} The Amalgam, optimal, and Cutoff Policies have the same trajectory with accumulated discounted reward of $2514.11$.}
\label{lane_merging_figure}
\end{figure}

\subsection{Bullseye Problem with Many Agents}
\label{bullseye_many}
This example demonstrates scalable cooperative navigation with many agents. The outcome of the simulation is shown in \cref{bullseye_many_figure}.

For this example, we have two ``bullseyes'' similar to \cref{bullseye}. Agents that are at the center of a bullseye will receive $+100$ reward and receive a penalty of $-10$ for moving away from either bullseye. Agents will also receive a penalty of $-500$ if they are within $\mathcal R = 1$ space of another agent. $\mathcal V = 3$ and the discount factor is $\gamma = 0.9$.  Similar to the Bullseye Problem, once an agent reaches the bullseye, it will not incur any more rewards or penalties. Here we consider an initial state with 8 agents. The optimal strategy is intractable to compute on the machine this was computed on.

For this particular initial state, group sizes of the Amalgam Policy do not exceed 3 so the exact actions were found by simply computing the Amalgam Policy for 3 agents. This is an example of how our discussion in \cref{eliminate_large} can be used in practice.

\begin{figure}[h]
\centering
\includegraphics[width=0.5\textwidth]{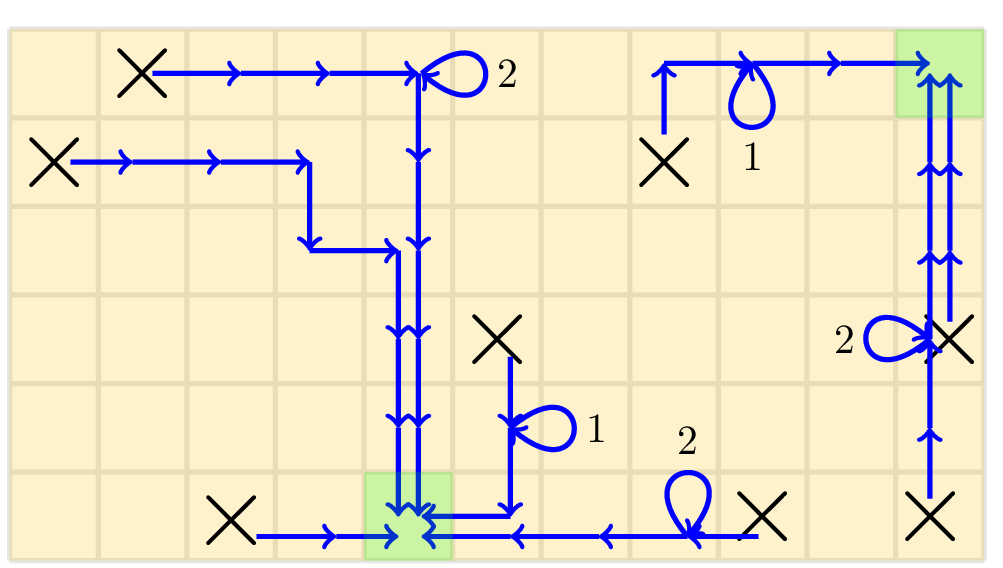}

\caption{\textit{Bullseye Problem with Many Agents:} The paths taken by the 8 agents with the Amalgam Policy. Only computation for 3 agents is required. }
 
\label{bullseye_many_figure}
\end{figure}

\subsection{Discussion: Penalty Jittering}
\label{penalty_jittering}
Intuitively the Cutoff Policy seems like it should perform better than the Amalgam policy for many cases as in the Aisle Walk Problem because the solution to the Cutoff Multi-Agent MDP incorporates information about the visibility dynamics. This is in comparison to the Amalgam Policy which simply consists of local optimal policies patched together.

But, as seen in the previous examples, if interdependent rewards include heavy penalties, the Cutoff Policy may not perform well. We call this phenomenon ``penalty jittering'' as the interdependent penalties cause agents taking the policy to get stuck moving back and forth. 
This is a general phenomenon that may occur for the Amalgam Policy as well but is particularly an issue for the Cutoff Policy. 

To illustrate the issue, consider a grid world in \cref{penalty_jit_fig}. Agents incur a large penalty (e.g. $-500$) when overlapping with each other ($\mathcal R = 0$). There is a large reward (e.g. $+100$) for remaining in the leftmost state and a small reward (e.g. $+10$) for remaining in the rightmost state. The optimal strategy is for one agent to stay in the leftmost state and the other agent to stay in the rightmost state. The visibility is $\mathcal V = 1$.

For this example, both the Amalgam and the Cutoff Policies will fail to find the optimal policy. When the agent on the right is in the rightmost state, the agent does not see the agent on the left but is aware of the high reward leftmost state so it moves to the center.  When the agent on the right is in the center and in the visibility of the left agent, both policies will suggest the right agent to move to the smaller reward rightmost state because the agent on the left already occupies the leftmost state. The agent will move to the rightmost state but will forget the existence of the other agent and re-enter the center state for the same reason as before. Therefore, the agent on the right will continue to move back and forth.

In general, this is a larger issue for the Cutoff Policy, because the ``suggestion'' made by the policy when the agents are connected assumes that when agents are disconnected, they will never reconnect (see \cref{cutoff}). That is, the initial suggestion made by the policy for the agent to leave the visibility group assumes the agent will be able to return to the area and obtain a high independent reward without any consequences. Therefore, the Cutoff Policy will tend to prefer these types of situations.

For this example, when overlapping agents gain a positive reward, the penalty jittering phenomenon will not be observed because if the policy ``suggests'' for an agent to leave the visibility, returning to the group will only seem less attractive with the interdependent rewards removed.

We believe that this ``forgetfulness'' is a general issue for group decentralized policies and overcoming this will be an important part of using these policies in more complex systems. One potential way to overcome this is to incorporate memory into  group decentralized policies, which we leave as a future direction.

Finally, we note that the penalty jittering does not conflict with our theoretical upper bounds, as the bounds are asymptotic in terms of visibility, whereas the penalty jittering examples occur in this long horizon setting with a fixed limited visibility.

\begin{figure}[h]

\centering
\includegraphics[width=0.2\textwidth]{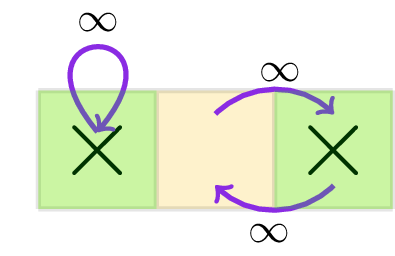}
\caption{Example of Penalty Jittering.}
\label{penalty_jit_fig}
\end{figure}

\section{The Cutoff Multi-Agent MDP setting}
\label{cutoff}
It will turn out that for any Locally Interdependent Multi-Agent MDP, there will be an instance of a related setting we call the Cutoff Multi-Agent MDP that has nice properties we describe below. It is used in the construction of one of our closed-form policies, the Cutoff Policy (See \cref{cutoff_bound}).

Intuitively, the Cutoff Multi-Agent MDP is a version of the Locally Interdependent Multi-Agent MDP with an embedded communication structure where agents do not interact with one another once they disconnect. More specifically, if two agents lie in different visibility partitions at any time, the agents will lie in different visibility partitions for all subsequent time steps (see \cref{cutoff_fig}). We refer to this new notion of a permanently disconnecting visibility partition as the cutoff partition.

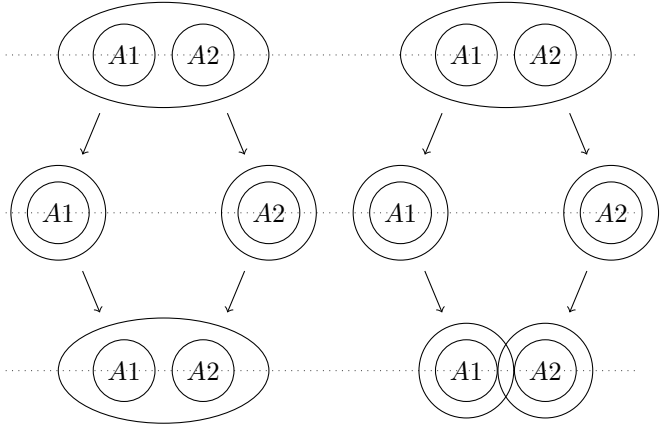
\begin{figure}[h]
\centering
\begin{tikzpicture}[scale =0.7]

\node[shape=circle,draw=black] (s1) at (-0.75,0) {$A1$};
\node[shape=circle,draw=black] (s2) at (-2,3){$A1$};
\node[shape=circle,draw=black] (s3) at (-0.75,6){$A1$};

\node[shape=circle,draw=black] (s4) at (0.75,0){$A2$};
\node[shape=circle,draw=black] (s5) at (2,3){$A2$};
\node[shape=circle,draw=black] (s6) at (0.75,6){$A2$};

\draw (0,0) ellipse (2 and 1);

\draw[](2,3)circle (0.9);
\draw[](-2,3)circle (0.9);

\draw (0,6) ellipse (2 and 1);
    
 \path [->, shorten >=12pt,shorten <=12pt] (s2) edge node[left] {} (s1);
 \path [->, shorten >=12pt,shorten <=12pt] (s3) edge node[left] {} (s2);
 
 \path [->, shorten >=12pt,shorten <=12pt] (s5) edge node[left] {} (s4);
 \path [->, shorten >=12pt,shorten <=12pt] (s6) edge node[left] {} (s5);

\node[shape=circle,draw=black] (s7) at (-0.75 + 6.5,0) {$A1$};
\node[shape=circle,draw=black] (s8) at (-2 + 6.5,3){$A1$};
\node[shape=circle,draw=black] (s9) at (-0.75 + 6.5,6){$A1$};

\node[shape=circle,draw=black] (s10) at (0.75 + 6.5,0){$A2$};
\node[shape=circle,draw=black] (s11) at (2 + 6.5,3){$A2$};
\node[shape=circle,draw=black] (s12) at (0.75 + 6.5,6){$A2$};

\draw[](2 + 6.5,3)circle (0.9);
\draw[](-2 + 6.5,3)circle (0.9);

\draw (0 + 6.5,6) ellipse (2 and 1);

\draw[](-0.75 + 6.5,0)circle (0.9);
\draw[](0.75 + 6.5,0) circle (0.9);
    
 \path [->, shorten >=12pt,shorten <=12pt] (s8) edge node[left] {} (s7);
 \path [->, shorten >=12pt,shorten <=12pt] (s9) edge node[left] {} (s8);
 
 \path [->, shorten >=12pt,shorten <=12pt] (s11) edge node[left] {} (s10);
 \path [->, shorten >=12pt,shorten <=12pt] (s12) edge node[left] {} (s11);

 \draw [draw=black, opacity=0.7, dotted] ( -3 , 6 ) -- ( 9 , 6 );
 \draw [draw=black, opacity=0.7, dotted] ( -3 , 3 ) -- ( 9 , 3 );
 \draw [draw=black, opacity=0.7, dotted] ( -3 , 0 ) -- ( 9 , 0 );

\end{tikzpicture}

\caption{The diagram on the left represents our communication structure in the Locally Interdependent Multi-Agent MDP. Agents that leave each other's visibility reconnect when re-entering each other's visibility. On the right is the Cutoff Multi-Agent MDP. Agents that leave each other's visibility are not able to reconnect even when the agents re-enter each other's visibility radii. }
\label{cutoff_fig}
\end{figure}

Formally for any realizable trajectory $\tau = (s(t), a(t))$ in a Locally Interdependent Multi-Agent MDP, we define the cutoff partition at time $T$ as $C^\tau(T) = \bigcap_{0 \leq t' \leq T} Z(s(t'))$. Here, the intersection denotes an intersection of partitions defined as $P_1 \cap P_2 = 
\{ p_1 \cap p_2 \lvert p_1 \in P_1, p_2 \in P_2\}\setminus \{\varnothing\}$ for partitions $P_1$, $P_2$.
Observe that this partition only gets finer in time similar to our intuition above.

Furthermore, in this setting, we will say that agents in different cutoff partitions do not incur interdependent rewards.

We can summarize these ideas with the complete definition of the Cutoff Multi-Agent MDP. For the following definition let $B(\mathcal N)$ be the set of all partitions on $\mathcal N$.

For any Locally Interdependent Multi-Agent MDP $\mathcal M = (\mathcal{S}, \mathcal A, P, r, \gamma)$, we can define the Cutoff Multi-Agent MDP as $\mathcal{C} = (\mathcal{S}^\mathcal C, \mathcal A^\mathcal C, P^\mathcal C, r^\mathcal C, \gamma)$.
Where:
\begin{itemize}
\item $\mathcal S^\mathcal C := \{(s, C) : s \in \mathcal S, C \in B(\mathcal N)\}$ 
\item $\mathcal A^\mathcal C := \mathcal A$
\item $ P^\mathcal C((s', Z(s') \cap C) \lvert (s, C), a) = P(s' \lvert s,a)$ and $0$ otherwise
\item $ r^\mathcal C((s, C), a) = \sum_{c \in C}r_c(s_c,a_c)$.
\end{itemize}

We will overload notation and say a policy $\pi$ in the Locally Interdependent Multi-Agent MDP may be viewed as a policy in the Cutoff Multi-Agent MDP by letting $\pi((s, C)) = \pi(s)$

Consider the relationship between the distribution of the trajectories with policy $\pi$ in the associated Locally Interdependent Multi-Agent MDP and in the Cutoff Multi-Agent MDP. We can compare realizable trajectories in the Locally Interdependent Multi-Agent MDP of the form $\tau = (s(t), a(t))$ with $s(0) = s$, $a(0) = a$ and realizable trajectories in the Cutoff Multi-Agent MDP of the form $\tau' = ((s(t), C'(t)), a(t))$ with $C'(0) = Z(s)$. Notice that by the definition of $P^\mathcal C$, we have for any time step $T$ and state action pair $s^*, a^*$,  

$\medmath{P\bigg((s(T), a(T)) = (s^*, a^*)\bigg) = }$
$\medmath{\hspace{10ex}
P^{\mathcal C}\bigg(((s(T), C'(T)), a(T)) = ((s^*, C^\tau(T)), a^*)\bigg)}$ 

where $C^\tau(T)$ is the Cutoff Partition for the Locally Interdependent Multi-Agent MDP trajectory. Thus, these trajectories are equivalently distributed.

For ease of notation going forward, anytime a state is a tuple consisting of a partition on the agents, it will refer to a state in the Cutoff Multi-Agent MDP. We will also use $\mathcal C _g$ to reference the related Cutoff Multi-Agent MDP for $\mathcal M_g$. The value function of $\mathcal C_g$ will be defined implicitly by passing in the state for a subset of agents $g\in \mathcal N$. For example, with a policy $\pi_g$, the value function at $(s_g, C_g)$ would be $V^{\pi_g}((s_g, C_g))$. We will also denote the optimal policy for $\mathcal C _g$ as $\pi^\mathcal C_g$.

The permanent disconnections in this setting lead to a key property for Cutoff Multi-Agent MDP's. Namely, the value function will decompose according to our agent partitions for the Cutoff Multi-Agent MDP.

We may equivalently write,

\begin{align*}
&V^{\pi_g}((s_g, C_g)) \\
 &= \mathbb{E}_{\tau\sim \pi_g\lvert (s_g, C_g)} \bigg[ \sum_{t = 0}^\infty \gamma^t \sum_{c \in C_g(t)} r_c(s_c(t), a_c(t))\bigg]\\
 &= \mathbb{E}_{\tau\sim \pi_g\lvert (s_g, C_g)} \bigg[ \sum_{t = 0}^\infty \gamma^t \sum_{c \in C_g} r_c(s_c(t), a_c(t))\bigg]\\
 &= \sum_{c \in C_g}\mathbb{E}_{\tau\sim \pi_g\lvert (s_g, C_g)} \bigg[ \sum_{t = 0}^\infty \gamma^t  r_c(s_c(t), a_c(t))\bigg]\\
 &= \sum_{c \in C_g} V^{\pi_c}((s_c, \{c\})).
\end{align*}

The second equality is permitted because the partitions only get finer with time. 

This decomposition tells us that the value function for states with the trivial partition (a partition with only a single group)
serve as the ``atoms" for the Cutoff Multi-Agent MDP. That is finding the value function for all states reduces to finding the value function for states with the trivial partition.

Furthermore, we can substitute this decomposition into the Bellman Consistency Equations to obtain: 
$V^{\pi_g}((s_g, \{g\})) = \mathbb{E} _{a_g\sim \pi_g(s_g)}\bigg[ Q^{\pi_g}((s_g, \{g\}), a_g)\bigg]$\\\\
where \\
$Q^{\pi_g}((s_g, \{g\}), a_g) = r((s_g, \{g\}), a_g) + $\\
$\hspace*{10ex}\gamma \mathbb{E} _{s'_g\sim P(\cdot \lvert s_g, a_g)}\bigg[ \sum_{z \in Z(s'_g)}V^{\pi_z}((s_z, \{z\}))\bigg]$.
\\\\
And in the case of the Bellman Optimality equations we have:
\\\\
$V^{*}((s_g, \{g\})) = \max_{a_g} Q^{*}((s_g, \{g\}), a_g)$
\\\\
where \\
$Q^{*}((s_g, \{g\}), a_g) = r((s_g, \{g\}), a_g) + $\\
$\hspace*{10ex}\gamma \mathbb{E} _{s'_g\sim P(\cdot \lvert s_g, a_g)}\bigg[ \sum_{z \in Z(s'_g)} V^{*}((s_z, \{z\}))\bigg]$.
\\\\
In other words, the process for deriving the values for trivial partition states only requires values of other trivial partition states. With this, we may safely ignore the value function for states with a non-trivial partition for the Cutoff Multi-Agent MDP.

In fact, for this paper, we will only be interested in values of the form $V((s, Z(s)))$. Consequentially, we only need to look at values of the form $V((s_g, \{g\}))$ for a subset of agents $g\in \mathcal N$ with $g \in Z(s)$ for some state $s$. This means our ``atoms'' are states $s_g$ that have every agent within distance $\mathcal V$ of some other agent, paired with the trivial partition. Thus, we have savings in this partially observable setting by only having to compute a smaller table of values corresponding to the states of agents within the same visibility group. 

By definition, these savings will show up exactly in the computation of the Cutoff Policy described in \cref{main_cutoff}.

\section{Proofs}
\label{proofs}
Recall from the proof summary in \cref{sec:proof_summary}, there will be three steps for proving each upper bound. 
\begin{itemize}
\item\textit{Step 1) Consequences of the Dependence Time Lemma}
\item\textit{Step 2) Bounding performance of naive policies}
\item\textit{Step 3) Use Telescoping Lemma to bound performance of final policy.}
\end{itemize}
We cover these 3 steps for each closed-form policy. Then, we conclude by presenting the construction and proof for the lower bound.
\\\\
For the remainder of this section, for a fixed partition $G$, define:\\
$V_{G}^\pi(s') = \mathbb{E}_{\tau \sim \pi\lvert_{s'}}\bigg[\sum_{t = 0}^\infty \sum_{g \in G}\gamma^t r_g(s_g(t), a_g(t))\bigg]$.  

We will also denote for any partition on $\mathcal N$, $E(P) = \{(j,k) \lvert j,k \in p, p \in P\}$ to be pairs of agents within the same partition. Similarly, $E^c(P) = \mathcal N \times \mathcal N \setminus E(P)$ are pairs of agents in different partitions.

\subsection{The Amalgam Policy}
Recall the definition of the Amalgam Policy $\lambda(s) = (\pi_z^*(s_z): \forall z \in Z(s))$.

\subsubsection{Step 1)}

Using the Dependence Time Lemma, we may derive the following lemma. Intuitively, it is the effect that the Dependence Time Lemma has on the Q-values for Locally Interdependent Multi-Agent MDP's.
\begin{lemma}
\label{dtl_q}
For any policy $\pi$, any realizable trajectory $(s(t), a(t))$ , and $\delta \in \{0, ..., c\}$, let $T \geq \delta$ be an arbitrary time step. We have $\lvert V^\pi(s(T)) - V_{Z(s(T- \delta))}^\pi(s(T)) \rvert \leq \frac{\gamma^{c + 1 - \delta}}{1 - \gamma}\tilde r.$\\
\end{lemma}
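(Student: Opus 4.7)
The plan is to exploit the Dependence Time Lemma (DTL) to zero out the first $c-\delta+1$ summands in the value-function difference, then bound the remaining tail by $\tilde r$ via the geometric series in $\gamma$. Write $G := Z(s(T-\delta))$ for brevity.

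First I would express the difference as
$$V^\pi(s(T)) - V^{\pi}_{G}(s(T)) = \mathbb{E}_{\tau' \sim \pi|_{s(T)}}\bigg[\sum_{t=0}^\infty \gamma^t \Delta_t\bigg],$$
where $\Delta_t := r(s'(t), a'(t)) - \sum_{z \in G} r_z(s'_z(t), a'_z(t))$. For any realization of $\tau' = (s'(t), a'(t))_{t \geq 0}$ drawn from $\pi|_{s(T)}$, splice it onto the parent trajectory segment from time $T-\delta$ to $T-1$ to form a realizable extended trajectory (this is realizable because only the original transitions, together with the realizable continuation, are used). Applying DTL at time $T-\delta$ with offsets $\delta + t$ for $t \in \{0, \ldots, c-\delta\}$ (valid since $\delta \leq c$) then yields $\Delta_t = 0$ pointwise for those early indices.

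Second, the triangle inequality on the remaining tail yields
$$\left|V^\pi(s(T)) - V^{\pi}_{G}(s(T))\right| \leq \sum_{t = c-\delta+1}^\infty \gamma^t \sup_{s, a} \left|\Delta(s, a)\right| = \frac{\gamma^{c+1-\delta}}{1-\gamma} \sup_{s, a} \left|\Delta(s, a)\right|,$$
where $\Delta(s, a) := r(s, a) - \sum_{z \in G} r_z(s_z, a_z)$. This already isolates the desired $\gamma^{c+1-\delta}$ decay factor.

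The hard part will be the sup estimate $\sup_{s, a}|\Delta(s, a)| \leq \tilde r$. The residual equals the partial sum $\sum_{(j,k) \in E^c(G),\, d(s_j, s_k) \leq \mathcal R} \overline r_{j,k}(s_j, a_j, s_k, a_k)$ over pairs sitting in different groups of $G$ but within $\mathcal R$ at the evaluation state $s$. My route is to construct an auxiliary state $\tilde s$ that preserves the positions of agents appearing in these contributing pairs and relocates every other agent to distances exceeding $\mathcal R$ from them, so that $r(\tilde s, a)$ collapses to the restricted sum and is therefore at most $\tilde r$ by the definition of the supremum. The delicate step is verifying that the construction isolates exactly the contributing pairs without introducing spurious in-range pairs among the preserved agents, potentially requiring a group-by-group refinement and leveraging that any pair $(j,k)$ with $d(s_j,s_k)\leq\mathcal R$ is automatically contained in a single visibility group of $Z(s)$ via $\mathcal R < \mathcal V$.
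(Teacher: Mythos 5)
Your first two steps reproduce the paper's argument essentially verbatim: splice the sampled continuation from $s(T)$ onto the parent trajectory at time $T-\delta$ to get a realizable trajectory, invoke the Dependence Time Lemma to conclude $\Delta_t=0$ for $t\in\{0,\dots,c-\delta\}$, and bound the remaining tail by the geometric series starting at $\gamma^{c+1-\delta}$. Up to that point the proposal is correct and follows the same route as the paper.

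The gap is in your final step. The residual is $\Delta(s,a)=\sum_{(j,k)\in E^c(G)}\overline r_{j,k}(s_j,a_j,s_k,a_k)$, and you propose to exhibit a state $\tilde s$ with $r(\tilde s,a)=\Delta(s,a)$ so that $|\Delta(s,a)|\le\tilde r$ follows from the definition of the supremum. This construction cannot work in general: $r(\tilde s,a)=\sum_{j,k\in\mathcal N}\overline r_{j,k}(\tilde s_j,a_j,\tilde s_k,a_k)$ always contains the diagonal terms $\overline r_{j,j}(\tilde s_j,a_j)=\overline r_j(\tilde s_j,a_j)$ for \emph{every} agent, relocated or not, because an agent is at distance $0\le\mathcal R$ from itself; these individual rewards can never be switched off by moving agents apart, and they do not appear in $\Delta$ (which sums only over $E^c(G)$, i.e.\ off-diagonal cross-partition pairs). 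It also contains every in-range pair $(j,k)\in E(G)$ among the preserved agents, whose positions you are not permitted to alter. So $r(\tilde s,a)$ equals the restricted sum plus all individual rewards plus the preserved within-group pair rewards, and the claimed collapse fails; moreover, an arbitrary metric space $\mathcal X$ need not even contain positions lying more than $\mathcal R$ from all preserved agents. For what it is worth, the paper does not prove this step either: it writes the same residual $\xi$ and simply asserts $\xi\in[-\frac{\gamma^{c+1-\delta}}{1-\gamma}\tilde r,\frac{\gamma^{c+1-\delta}}{1-\gamma}\tilde r]$, i.e.\ it implicitly treats the per-step cross-partition sum as bounded by $\tilde r$. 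A fully rigorous finish would either take such a bound on partial sums of the $\overline r_{j,k}$ as part of the definition of $\tilde r$, or settle for the weaker constant obtained from $|\Delta(s,a)|\le|r(s,a)|+|\sum_{z\in G}r_z(s_z,a_z)|$, where the second term itself still requires an argument.
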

\begin{proof}
Recall, in our notation $\tau \sim \pi\lvert_{s,a}$ refers to the random trajectory $(s(t), a(t))$ generated by the MDP starting with $s(0) = s, a(0) = a$.

Notice that if we are given a new trajectory realization $(s'(t'), a'(t'))$ with $s'(0) = s(T)$, the concatenated trajectory defined as  $(s(t''), a(t''))$ for $t'' \in \{0,...,T - 1 \}$ and $(s'(t'' - T), a'(t'' - T))$ for $t'' \in \{T , T + 1, ...\}$ is also a realizable trajectory. Furthermore, we may apply Dependence Time Lemma on this trajectory at time step $T - \delta$ to prove that $r(s'(t'), a'(t')) = \sum_{z \in Z(s(T-\delta))}\overline r_z(s_z'(t'), a_z'(t'))$ for $t'\in\{0, ..., c - \delta\}$. Using this decomposition, we can obtain:

{\scriptsize
\begin{align*} 
 &V^{\pi} (s(T)) \\
 &= \mathbb{E}_{\tau \sim \pi\lvert_{s(T)}}\bigg[\sum_{t' = 0}^\infty \gamma^{t'} \sum_{j, k \in \mathcal N}\overline r_{j, k}(s_{j}'(t'), a_j'(t'), s_{k}'(t'), a_k'(t'))\bigg]\\
 &= \mathbb{E}_{\tau \sim \pi\lvert_{s(T)}}\bigg[\sum_{t' = 0}^{c - \delta} \gamma^{t'} \sum_{z \in Z(s(T - \delta))} r_z(s_z'(t'), a_z'(t')) \\
 & \hspace{15ex}+ \sum_{t' = c + 1 - \delta}^\infty \gamma^{t'}\sum_{j, k \in \mathcal N} \overline r_{j, k}(s_{i}'(t'), a_j'(t'), s_{k}'(t'), a_k'(t'))\bigg]\\
 &= \mathbb{E}_{\tau \sim \pi\lvert_{s(T)}}\bigg[\sum_{t' = 0}^\infty \gamma^{t'} \sum_{z \in Z(s(T - \delta))} r_z(s_z'(t'), a_z'(t')) \\
 & \hspace{0ex} + \sum_{t' = c + 1 - \delta}^\infty \gamma^{t'} \sum_{(j,k) \in E^c(Z(s(T - \delta)))}\overline r_{j,k}(s'_j(t'), a'_j(t'), s'_k(t'), a'_k(t'))\bigg]\\
 &:= V_{Z(s(T - \delta))}^\pi(s(T)) + \xi.
\end{align*}
}%
For the third equality, recall that by our notation, $E^c(Z(s(T - \delta)))$ denotes tuples of two agents $j,k$ that do not lie in the same partitions of $Z(s(T - \delta))$. In this step, we have completed the infinite sum in the first term by regrouping the $\overline r_{j,k}$ terms.

Which concludes the proof since $\xi \in [-\frac{\gamma^{c + 1 - \delta}}{1 - \gamma}\tilde r, \frac{\gamma^{c + 1 - \delta}}{1 - \gamma}\tilde r]$.
\\
\end{proof}

\subsubsection{Step 2)}
Recall the definition of the naive policy candidate 
 $\pi^*_{Z(s^*)}(s) = (\pi_g^*(s_g): \forall g \in Z(s^*))$ for some fixed $s^*$.

\textbf{Proof for \cref{amal_naive}}. With the lemma from step 1 in hand, we are ready to prove the bound on the naive policy candidate.
\begin{proof}
Notice that because $\pi^*_{Z(s)}$ takes the local optimal policies, the expected discounted sum of rewards accumulated by the agents in this group is optimal. Therefore, for any $z \in Z(s)$,
\\
 $\mathbb{E}_{\tau \sim \pi^*_{Z(s)}\lvert_{s}}\bigg[\sum_{t = 0}^\infty \gamma^{t} r_{z}(s_{z}(t), a_z(t))\bigg] \geq$\\
 $ \mathbb{E}_{\tau \sim \pi^*\lvert_{s}}\bigg[\sum_{t = 0}^\infty \gamma^{t} r_{z}(s_{z}(t), a_z(t))\bigg] $.
 \\
 This implies $V^{*}_{Z(s)}(s) - V^{\pi^*_{Z(s)}}_{Z(s)}(s) \leq 0$.
 \\\\
 Using this relationship between the value functions and applying \cref{dtl_q} twice with $\delta = 0$,
 
\begin{align*}
&V^{*}(s) - V^{\pi^*_{Z(s)}}(s) \\
&\leq V^{*}_{Z(s)}(s) - V^{\pi^*_{Z(s)}}_{Z(s)}(s) 
 + \frac{2}{(1 - \gamma)}\gamma^{c + 1}  \tilde{r}\\
&\leq \frac{2}{(1 - \gamma)}\gamma^{c + 1}  \tilde{r} .
\end{align*}
\end{proof}

\subsubsection{Step 3)}

We start with an auxiliary lemma before proving \cref{amal_bound}.
\begin{lemma}
\label{amalgam_condition}
For realizable trajectories in a Locally Interdependent Multi-Agent MDP, define a sequence of stopping times with $T_0 = 0$ and $T_i$ the time step $t$ for which $Z(s(t)) \neq Z(s(t - 1))$ for the $i$-th time. Then \cref{telescope_condition} holds for the Amalgam Policy $\lambda$ with $V^{\pi^*_{Z(s)}}$ as the family of value functions.
\end{lemma}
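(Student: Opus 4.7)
The plan is to verify \cref{telescope_condition} with equality. The key structural fact is that for every $t \in [T_i, T_{i+1} - 1]$ the visibility partition $Z(s(t))$ is constant and equal to $Z(s(T_i))$, since by definition no change occurs before $T_{i+1}$. Consequently $\lambda(s(t)) = (\pi^*_z(s_z(t)) : z \in Z(s(T_i))) = \pi^*_{Z(s(T_i))}(s(t))$ on this interval, so $\lambda$ and the fixed-partition policy $\pi^*_{Z(s(T_i))}$ prescribe identical actions at every state actually visited during $[T_i, T_{i+1} - 1]$.

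First I would condition on $s(T_i) = s^*$ and invoke the Markov property of the Locally Interdependent Multi-Agent MDP to treat the post-$T_i$ piece of the trajectory as a fresh trajectory issued from $s^*$. Because $\lambda$ agrees with $\pi^*_{Z(s^*)}$ on every state whose visibility partition is $Z(s^*)$, this fresh trajectory coincides in distribution with a $\pi^*_{Z(s^*)}\lvert_{s^*}$-trajectory up to and including the first time $T'$ at which the visibility partition departs from $Z(s^*)$; in particular the shifted stopping time $T_{i+1} - T_i$ matches $T'$ in distribution.

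Next I would apply Bellman consistency with optional stopping to the value function $V^{\pi^*_{Z(s^*)}}$ at $s^*$ using the stopping time $T'$, yielding
\begin{equation*}
V^{\pi^*_{Z(s^*)}}(s^*) = \mathbb{E}_{\tau' \sim \pi^*_{Z(s^*)}\lvert_{s^*}}\left[\sum_{t' = 0}^{T' - 1}\gamma^{t'} r(s'(t'), a'(t')) + \gamma^{T'} V^{\pi^*_{Z(s^*)}}(s'(T'))\right].
\end{equation*}
Multiplying by $\gamma^{T_i}$, rearranging, and translating back using the distributional equivalence from the previous paragraph gives
\begin{align*}
&\mathbb{E}_{\tau \sim \lambda\lvert_{s}}\left[\sum_{t = T_i}^{T_{i+1} - 1}\gamma^t r(s(t), a(t)) \,\Big|\, s(T_i) = s^*\right] \\
&\qquad = \gamma^{T_i} V^{\pi^*_{Z(s^*)}}(s^*) - \mathbb{E}_{\tau \sim \lambda\lvert_{s}}\left[\gamma^{T_{i+1}} V^{\pi^*_{Z(s^*)}}(s(T_{i+1})) \,\Big|\, s(T_i) = s^*\right].
\end{align*}
Taking the outer expectation over $s(T_i)$ and recognizing $V^{\pi^*_{Z(s^*)}} = V^{s(T_i)}$ when $s^* = s(T_i)$ then yields \cref{telescope_condition} with equality.

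The main obstacle I expect is the careful bookkeeping in the first step: formalizing the Markov translation between the $\lambda$ trajectory shifted by $T_i$ and a fresh $\pi^*_{Z(s(T_i))}$ trajectory, and confirming that $T_{i+1} - T_i$ really does pull back to the stopping time $T'$. A small side issue is handling the event $T_{i+1} = \infty$ (the visibility partition never changes after $T_i$); this is benign because $\gamma^\infty = 0$ and $V^{\pi^*_{Z(s^*)}}$ is uniformly bounded by $\tilde r/(1 - \gamma)$, so the $\gamma^{T_{i+1}} V^{\pi^*_{Z(s^*)}}(s(T_{i+1}))$ term vanishes and the rewards telescope to the entire infinite-horizon sum, which by construction equals $\gamma^{T_i} V^{\pi^*_{Z(s^*)}}(s^*)$.
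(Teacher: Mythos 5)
Your proposal is correct and follows essentially the same route as the paper: both arguments rest on the observation that $\lambda$ coincides with $\pi^*_{Z(s(T_i))}$ on $[T_i, T_{i+1}-1]$, apply the strong Markov property at the stopping time, and split $V^{\pi^*_{Z(s(T_i))}}(s(T_i))$ into the rewards before $T_{i+1}$ plus $\gamma^{T_{i+1}}V^{\pi^*_{Z(s(T_i))}}(s(T_{i+1}))$. The paper merely makes your ``Bellman consistency with optional stopping'' step explicit by constructing a virtual trajectory that continues with $\pi^*_{Z(s(T_i))}$ past $T_{i+1}$.
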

\begin{proof}
 For this proof, given a realizable trajectory $\tau = (s(t), a(t))$, we let $\tau_T$ be $(s(t), a(t))$ with $ t \in \{T, ... \infty\}$. $\tau^T$ will be $(s(t), a(t)) $ for $t \in \{0, ... T - 1\}$ appended with $s(T)$. Lastly, $\tau_{T_1}^{T_2}$ is $ (s(t), a(t)) $ for $t \in \{T_1, ..., T_2 -1\}$ appended with $s(T_2)$.
 
To begin the proof, we show the following equivalence.
{\small
\begin{align*} 
&\mathbb{E}_{\tau \sim \lambda\lvert_{s}} \bigg[ \sum_{t = T_i}^{T_{i + 1} - 1} \gamma^t r(s(t), a(t))\bigg] \\
& = \mathbb{E}_{\tau^{T_i} \sim \lambda\lvert_{s}}\bigg[\mathbb{E}_{\tau \sim \lambda\lvert_{s}} \bigg[ \sum_{t = T_i}^{T_{i + 1} - 1} \gamma^t r(s(t), a(t)) \bigg| \tau^{T_i}\bigg] \bigg]\\
& = \mathbb{E}_{\tau^{T_i} \sim \lambda\lvert_{s}}\bigg[\mathbb{E}_{\tau_{T_i} \sim \lambda\lvert_{s}} \bigg[ \sum_{t = T_i}^{T_{i + 1} - 1} \gamma^t r(s(t), a(t)) \bigg| \tau^{T_i}\bigg] \bigg]\\
& = \mathbb{E}_{s(T_i) \sim \lambda\lvert_{s}}\bigg[\mathbb{E}_{\tau_{T_i} \sim \lambda\lvert_{s}} \bigg[ \sum_{t = T_i}^{T_{i + 1} - 1} \gamma^t r(s(t), a(t)) \bigg| s(T_i)\bigg] \bigg].\\
\end{align*}
}%
The second equality eliminates unnecessary random variables that don't appear in the expectation and the third equality evokes the strong Markov property of Markov Chains. 

Now notice that by the definition of our stopping times, for all $t' \in \{T_i, ..., T_{i + 1} - 1\}$ we have $Z(s(T_i)) = Z(s(t'))$ and therefore $\pi^*_{Z(s(T_i))} = \pi^*_{Z(s(t'))}$. In other words, between these two stopping times, the policy $\pi^*_{Z(s(T_i))}$ is taken. 

To continue this pattern, we define a virtual trajectory $\tau'_{T_i} = (s'(t), a'(t))$ defined only for $t\in \{T_i, T_i + 1, ...\}$ as $\tau_{T_i}^{T_{i + 1}}$ concatenated with an instance the realizable trajectory $(s^{virt}(t), a^{virt}(t)) \sim \pi^*_{Z(s(T_i))}\lvert_{s(T_{i + 1})}$. Formally, we define the virtual trajectory $\tau_{T_i}^{T_{i + 1}}$ to consist of 
{\scriptsize
\[s'(t), a'(t) =  \begin{cases} 
      s(t), a(t) & t \in \{T_i, ..., T_{i + 1} -1 \} \\
       s^{virt}(t - T_{i + 1}), a^{virt}(t - T_{i + 1}) & t \in \{T_{i + 1}, ... \}. \\
   \end{cases}
\]
}%
We now have completed the trajectory that takes $\pi^*_{Z(s(T_i))}$ at each step to generate $\tau_{T_i}'$. By construction, it is equivalently distributed to $\pi^*_{Z(s(T_i))}\lvert_{s(T_i)}$ starting at time step $T_i$.

Returning to our proof, we may substitute our virtual trajectory into what we found earlier, to get

{\scriptsize
$\mathbb{E}_{s(T_i) \sim \lambda\lvert_{s}}\bigg[\mathbb{E}_{\tau_{T_i}'} \bigg[ \sum_{t = T_i}^{T_{i + 1} - 1} \gamma^t r(s'(t), a'(t)) \bigg| s(T_i)  \bigg]\bigg]$
}%

which is equivalent to

{\scriptsize
$\mathbb{E}_{s(T_i) \sim \lambda\lvert_{s}}\bigg[\gamma^{T_i}V^{\pi^*_{Z(s(T_i))}}(s(T_i))\bigg] - $\\
$\hspace*{0ex}\mathbb{E}_{s(T_i) \sim \lambda\lvert_{s}}\bigg[\mathbb{E}_{\tau_{T_i}'} \bigg[ \sum_{t = T_{i + 1}}^{\infty} \gamma^t r(s'(t), a'(t)) \bigg| s(T_i) \bigg] \bigg]$.
}%
\\\\
We simplify the second term using similar techniques as above to obtain:\\
{\scriptsize
\begin{align*} 
&\mathbb{E}_{s(T_i) \sim \lambda\lvert_{s}}\bigg[\mathbb{E}_{\tau_{T_i}' } \bigg[ \sum_{t = T_{i + 1}}^{\infty} \gamma^t r(s'(t), a'(t)) \bigg| s(T_i) \bigg] \bigg]\\
& = \mathbb{E}_{\tau_{T_i}^{T_{i + 1}} \sim \lambda\lvert_{s}}\bigg[\mathbb{E}_{\tau_{T_i}' } \bigg[ \sum_{t = T_{i + 1}}^{\infty} \gamma^t r(s'(t), a'(t)) \bigg| \tau_{T_i}^{T_{i + 1}}\bigg] \bigg]\\
& = \mathbb{E}_{\tau_{T_i}^{T_{i + 1}} \sim \lambda\lvert_{s}}\bigg[\mathbb{E}_{\tau_{T_{i + 1}}'} \bigg[ \sum_{t = T_{i + 1}}^{\infty} \gamma^t r(s'(t), a'(t)) \bigg| \tau_{T_i}^{T_{i + 1}}\bigg] \bigg]\\
& = \mathbb{E}_{ s(T_{i + 1})\sim \lambda\lvert_{s}}\bigg[\mathbb{E}_{\tau_{T_{i + 1}}'} \bigg[ \sum_{t = T_{i + 1}}^{\infty} \gamma^t r(s'(t), a'(t)) \bigg|s(T_{i + 1})\bigg] \bigg]\\
& = \mathbb{E}_{ s(T_{i + 1})\sim \lambda\lvert_{s}}\bigg[ \gamma^{T_{i + 1}}V^{\pi^*_{Z(s(T_i))}}(s(T_{i + 1}))\bigg].
\end{align*}
}%
The first equality conditions on additional variables. Similar to before, the second equality eliminates variables that do not appear in the expectation and the third equality evokes the strong Markov property.
\\\\
Substituting this into what we found above, we achieve
{\small
\begin{align*} 
&\mathbb{E}_{\tau \sim \lambda\lvert_{s}} \bigg[ \sum_{t = T_i}^{T_{i + 1} - 1} \gamma^t r(s(t), a(t))\bigg] =\\
&\mathbb{E}_{\tau \sim \lambda\lvert_{s}}\bigg[\gamma^{T_i}V^{\pi^*_{Z(T_i)}}(s(T_i)) 
 - \gamma^{T_{i + 1}}V^{\pi^*_{Z(s(T_i))}}(s(T_{i + 1}))\bigg]
\end{align*}
}%
which is exactly \cref{telescope_condition}.
\end{proof}

\textbf{Proof of \cref{amal_bound}}. With the above preparations, we are now ready to prove  \cref{amal_bound}.

\begin{proof}
For any partition $G$ on $\mathcal N$, let $\pi^*_G(s) = (\pi_g^*(s_g): \forall g \in G)$.
\\\\
Let $T_0 = 0$ and $T_i$ be the time step $t$ for which $Z(s(t)) \neq Z(s(t - 1))$ for the $i$-th time. By definition, for all $t' \in \{T_i, ..., T_{i + 1} - 1\}$ we have $Z(s(T_i)) = Z(s(t'))$ and therefore $\pi^*_{Z(s(T_i))} = \pi^*_{Z(s(t'))}$.

From \cref{amalgam_condition}, we know \cref{telescope_condition} is satisfied and we may use the Telescoping Lemma.

We have \\
$V^{\lambda} (s) \geq \mathbb{E}_{\tau \sim \pi\lvert_{s}}\bigg[V^{\pi^*_{Z(s)}}(s) - \sum_{i = 1}^\infty \gamma^{T_i} \Delta_{i}^\tau\bigg]$
\\\\
and we may bound \\
\begin{align*}
&\Delta_{i}^\tau = V^{\pi^*_{Z(s(T_{i - 1}))}}(s(T_i)) - V^{\pi^*_{Z(s(T_i))}}(s(T_i))\\
&\leq V^{*}(s(T_i)) - V^{\pi^*_{Z(s(T_i))}}(s(T_i))\\
&\leq V^{*}_{Z(s(T_i))}(s(T_i)) - V^{\pi^*_{Z(s(T_i))}}_{Z(s(T_i))}(s(T_i)) + \frac{2}{(1 - \gamma)}\gamma^{c + 1}  \tilde{r}\\
&\leq \frac{2}{(1 - \gamma)}\gamma^{c + 1}  \tilde{r}.
\end{align*}
Where the inequality in the penultimate step is by Lemma \ref{dtl_q} with $\delta = 0$ applied twice. The last step is because $V^{*}_{Z(s(T_i))}(s(T_i)) - V^{\pi^*_{Z(s(T_i))}}_{Z(s(T_i))}(s(T_i)) \leq 0$, using the same reasoning as in \cref{amal_naive}.
\\\\
Plugging in our upper bound for $\Delta_i^\tau$ into our result from the Telescoping Lemma, we get\\
$V^{\lambda} (s) \geq V^{\pi^*_{Z(s)}}(s) - \frac{2}{(1 - \gamma)^2}\gamma^{c + 2}  \tilde{r}$.
\\\\
Therefore, using \cref{amal_naive} we obtain
\begin{align*}
V^*&(s) - V^{\lambda} (s) \\
&\leq V^*(s) - V^{\pi^*_{Z(s)}}(s) + \frac{2}{(1 - \gamma)^2}  \gamma^{c + 2}\tilde{r}\\
& \leq \frac{2}{(1 - \gamma)}\gamma^{c+1}  \tilde{r} + \frac{2}{(1 - \gamma)^2}\gamma^{c + 2}  \tilde{r}\\
&= \frac{2}{(1 - \gamma)^2}\gamma^{c + 1}  \tilde{r}.
\end{align*}
\\
\end{proof}

\subsection{The Cutoff Policy}
\label{cutoff_proof}

Recall the definition of the Cutoff Policy $\chi(s) = (\pi_z^\mathcal C((s_z, \{z\})): \forall z \in Z(s))$. 
This is equivalent to saying $\chi(s) = \pi^\mathcal C ((s,Z(s)))$ by the following equality:

\begin{align*}
\pi^\mathcal C &((s, Z(s))) = \text{argmax}_a Q^*((s, Z(s)), a)\\
& = \text{argmax}_a \sum_{z \in Z(s)} Q^*((s_z, \{z\}), a_z)\\
&= \bigg(\text{argmax}_{a_z}Q^*((s_z, \{z\}), a_z): \forall z \in Z(s)\bigg)\\
&= (\pi_z^\mathcal C((s_z, \{z\})): \forall z \in Z(s)).
\end{align*}

The decomposition in the second equality can be shown similarly to the value decomposition shown in \cref{cutoff}. 

We will also overload notation for this section and say a policy $\pi$ in the Locally Interdependent Multi-Agent MDP may be viewed as a policy in the Cutoff Multi-Agent MDP by letting $\pi((s, C)) = \pi(s)$.

\subsubsection{Step 1)}

We begin by using the Dependence Time Lemma to establish a relationship between policies in the Cutoff Multi-Agent MDP and the Locally Interdependent Multi-Agent MDP.
\begin{lemma}
\label{cutoff_imdp_q}
For any policy $\pi$ in the Locally Interdependent Multi-Agent MDP, we have $\lvert V^\pi(s) - V^\pi((s, Z(s))) \rvert \leq \frac{\gamma^{c + 1}}{1 - \gamma}\tilde r$.\\
\end{lemma}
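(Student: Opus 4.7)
The plan is to couple trajectories in the two MDPs and control the per-step reward gap via the Dependence Time Lemma. Observe that $\pi$, viewed as a policy in the Cutoff Multi-Agent MDP, ignores the partition component ($\pi((s, C)) = \pi(s)$), and that the partition evolves deterministically under $P^{\mathcal C}$ as $C(t+1) = Z(s(t+1)) \cap C(t)$. Starting from $(s, Z(s))$, I would couple the Cutoff MDP trajectory to a Locally Interdependent trajectory $\tau = (s(t), a(t)) \sim \pi\lvert_s$ by setting $C^\tau(t) = \bigcap_{0 \leq t' \leq t} Z(s(t'))$. Under this coupling the value gap becomes
\[ V^\pi(s) - V^\pi((s, Z(s))) = \mathbb{E}_{\tau \sim \pi\lvert_s}\!\left[\sum_{t=0}^\infty \gamma^t \Bigl(r(s(t), a(t)) - r^{\mathcal C}\bigl((s(t), C^\tau(t)), a(t)\bigr)\Bigr)\right]. \]

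I would next expand the per-step gap as a partial sum of interdependent rewards over pairs of agents that lie in different blocks of the cutoff partition,
\[ r(s(t), a(t)) - r^{\mathcal C}\bigl((s(t), C^\tau(t)), a(t)\bigr) = \sum_{(j,k) \in E^c(C^\tau(t))} \overline r_{j,k}\bigl(s_j(t), a_j(t), s_k(t), a_k(t)\bigr), \]
and analyze each pair separately. For each $(j,k) \in E^c(C^\tau(t))$, by definition of $C^\tau(t)$ there exists a smallest time $t^*_{j,k} \leq t$ at which $j$ and $k$ lay in different visibility partitions, i.e., $(j,k) \in E^c(Z(s(t^*_{j,k})))$; in particular $d(s_j(t^*_{j,k}), s_k(t^*_{j,k})) > \mathcal V$. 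Invoking \cref{dtl} at $T = t^*_{j,k}$ (more precisely, the pairwise vanishing $\overline r_{j,k}(s(T+\delta), a(T+\delta)) = 0$ established inside its proof) then gives $\overline r_{j,k}(s(t^*_{j,k}+\delta), a(t^*_{j,k}+\delta)) = 0$ for every $\delta \in \{0, \ldots, c\}$. Hence this pair can contribute to the gap only at times $t \geq t^*_{j,k} + c + 1 \geq c + 1$, and summing over pairs shows that the entire per-step gap vanishes identically for $t \in \{0, \ldots, c\}$.

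For $t \geq c + 1$, I would bound the per-step gap in absolute value by $\tilde r$, following the convention used in the proof of \cref{dtl_q} for partial sums of interdependent rewards, and then sum the resulting discounted geometric tail:
\[ \bigl\lvert V^\pi(s) - V^\pi((s, Z(s))) \bigr\rvert \leq \sum_{t = c+1}^{\infty} \gamma^t \tilde r = \frac{\gamma^{c+1}}{1 - \gamma} \tilde r. \]
The main obstacle is the pairwise bookkeeping: each pair $(j,k) \in E^c(C^\tau(t))$ must be traced back to some past separation time $t^*_{j,k}$, and the Dependence Time Lemma has to be invoked at that (possibly pair-dependent) time to kill its contribution for the next $c+1$ steps. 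Once this setup is in place, the bound collapses to a standard discounted geometric series.
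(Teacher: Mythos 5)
Your proposal is correct and follows essentially the same route as the paper's proof: both decompose the per-step gap as $\sum_{(j,k)\in E^c(C^\tau(t))}\overline r_{j,k}$, use the Dependence Time Lemma to show this vanishes for $t\in\{0,\dots,c\}$, and bound the remaining tail by $\tfrac{\gamma^{c+1}}{1-\gamma}\tilde r$. The only cosmetic difference is bookkeeping: you trace each separated pair back to its first separation time $t^*_{j,k}$ and invoke the lemma there, whereas the paper applies the lemma at every prefix time $\delta_0\leq t$ and intersects the resulting visibility partitions to recover $C^\tau(t)$ --- the two formulations are equivalent.
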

\begin{proof}
Recall our notation $E^c(P)$ for a partition $P$ are all tuples $(j,k)$ such that agents $j,k$ do not lie in the same partitions.
\\\\
A consequence of the Dependence Time Lemma in the Locally Interdependent Multi-Agent MDP setting is that, for any realizable trajectory $\tau$, with $t \in \{0, ... c\}$ and $\delta_0 \in \{0, ..., t\}$  we have $r(s(t), a(t)) =  \sum_{z \in Z(s(\delta_0))} r_z(s_z(t), a_z(t))$. 
\\\\
With this decomposition, we know that for every $\delta_0$ specified above, we have $\overline r_{j,k}(s_j(t),a_j(t), s_k(t), a_k(t)) = 0$ for $(j,k) \in E^c(Z(s(\delta_0)))$. In otherwords, $\overline r_{j,k}(s_j(t),a_j(t), s_k(t), a_k(t)) = 0$ for $(j,k) \in E^c(\cap_{\delta_0 \leq t}Z(s(\delta_0)))$ using our notation for the intersection of partitions. The quantity $\cap_{\delta_0 \leq t}Z(s(\delta_0))$ is exactly $C^\tau(t)$ by definition, so we may make the decomposition $r(s(t), a(t)) =  \sum_{z \in C^\tau(t)} r_z(s_z(t), a_z(t))$.

Therefore,
{\small
\begin{align*} 
 &V^{\pi} (s) = \mathbb{E}_{\tau \sim \pi\lvert_{s}}\bigg[\sum_{t = 0}^\infty \gamma^{t} \sum_{j, k \in \mathcal N}\overline r_{j,k}(s_j(t), a_j(t), s_k(t), a_k(t))\bigg]\\
 &= \mathbb{E}_{\tau \sim \pi\lvert_{s}}\bigg[\sum_{t = 0}^{c} \gamma^{t} \sum_{z \in C^\tau(t)} r_z(s_z(t), a_z(t)) \\
 & \hspace{15ex}+ \sum_{t = c + 1}^\infty \gamma^{t}\sum_{j, k \in \mathcal N} \overline r_{j, k}(s_{i}(t), a_j(t), s_{k}(t), a_k(t))\bigg]\\
 &= \mathbb{E}_{\tau \sim \pi\lvert_{s}}\bigg[\sum_{t = 0}^\infty \gamma^{t} \sum_{z \in C^\tau(t)} r_z(s_z(t), a_z(t)) \\
 & \hspace{5ex} + \sum_{t = c + 1}^\infty \gamma^{t} \sum_{(j,k) \in E^c(\mathcal C^\tau(t))}\overline r_{j,k}(s_j(t), a_j(t), s_k(t), a_k(t))\bigg]\\
 &:= V^\pi((s, Z(s))) + \xi.
 \\
\end{align*}

}%

This concludes the proof since $\xi \in [-\frac{\gamma^{c + 1}}{1 - \gamma}\tilde r, \frac{\gamma^{c + 1}}{1 - \gamma}\tilde r]$.
\\
\end{proof}

The following is the corresponding lemma to \cref{dtl_q} in the Cutoff Setting. It also can be viewed as the effect the Dependence Time Lemma has on the Q-values for the Cutoff MDP.

\begin{lemma}

\label{dtl_q_cutoff}
Let $(s(t), a(t))$ be any realizable trajectory in the Locally Interdependent Multi-Agent MDP, $\pi$ be any policy in the Cutoff Multi-Agent MDP, and $\delta \in \{0, ..., c\}$. Further, let $T \geq \delta$ is an arbitrary time step. We have $\lvert V^\pi((s(T), Z(s(T)))) - V_{Z(s(T- \delta))}^\pi((s(T), Z(s(T)))) \rvert \leq \frac{\gamma^{c + 1 - \delta}}{1 - \gamma}\tilde r$.\\
\end{lemma}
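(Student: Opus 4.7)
This statement is the Cutoff Multi-Agent MDP analogue of \cref{dtl_q}, so the proof should mirror that argument. I plan to proceed in three steps: (i) rewrite $V^{\pi}((s(T), Z(s(T))))$ as an expected discounted sum of the underlying Locally Interdependent Multi-Agent MDP reward $r$; (ii) apply the Dependence Time Lemma at time step $T - \delta$ to a concatenated trajectory so as to decompose $r$ along the partition $Z(s(T-\delta))$ for the first $c - \delta + 1$ time steps after $T$; and (iii) complete that decomposition to all future times and absorb the leftover cross-group contributions into a single geometric tail.

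For step (i), fix a sample trajectory $\tau' = ((s'(t'), C'(t')), a'(t'))$ in the Cutoff Multi-Agent MDP started at $(s(T), Z(s(T)))$. The cutoff-partition update $C'(t'+1) = Z(s'(t'+1)) \cap C'(t')$ with $C'(0) = Z(s(T))$ gives $C'(t') = \bigcap_{0 \le t'' \le t'} Z(s'(t''))$, which is exactly the cutoff partition $C^{\tau'}(t')$ of $(s'(t'), a'(t'))$ seen as a realizable trajectory of the Locally Interdependent Multi-Agent MDP. Applying \cref{dtl} for every $\delta_0 \le t'$ as in the proof of \cref{cutoff_imdp_q} yields $r^{\mathcal C}((s'(t'), C'(t')), a'(t')) = r(s'(t'), a'(t'))$, so the expansion of $V^{\pi}((s(T), Z(s(T))))$ can be carried out entirely in terms of $r$.

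For steps (ii) and (iii), concatenate the prefix $(s(t), a(t))_{t = 0, \dots, T - 1}$ with the shifted trajectory $(s'(t' - T), a'(t' - T))_{t' \ge T}$ to obtain a realizable trajectory of the Locally Interdependent Multi-Agent MDP and apply \cref{dtl} at time step $T - \delta$. For $t' \in \{0, \dots, c - \delta\}$, this gives $r(s'(t'), a'(t')) = \sum_{z \in Z(s(T-\delta))} r_z(s'_z(t'), a'_z(t'))$. Substituting into the value function, extending the decomposition to all $t' \ge 0$, and accounting for the mismatch for $t' \ge c + 1 - \delta$ yields
\[
V^{\pi}((s(T), Z(s(T)))) = V^{\pi}_{Z(s(T-\delta))}((s(T), Z(s(T)))) + \xi,
\]
where $\xi = \mathbb{E}\bigl[\sum_{t' = c+1-\delta}^{\infty} \gamma^{t'} \sum_{(j,k) \in E^c(Z(s(T-\delta)))} \overline{r}_{j,k}(s'_j(t'), a'_j(t'), s'_k(t'), a'_k(t'))\bigr]$ lies in $[-\tfrac{\gamma^{c+1-\delta}}{1-\gamma}\tilde r,\ \tfrac{\gamma^{c+1-\delta}}{1-\gamma}\tilde r]$, which is the stated bound. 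The only delicate point is the identification $C'(t') = C^{\tau'}(t')$ that licenses collapsing $r^{\mathcal C}$ down to $r$; after that, the argument is a direct transcription of the proof of \cref{dtl_q}.
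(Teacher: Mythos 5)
Your overall architecture (expand the value function, apply the Dependence Time Lemma anchored at $T-\delta$, absorb the tail into a geometric series) matches the paper's, but step (i) contains a genuine error that propagates through the rest. The identity $r^{\mathcal C}((s'(t'), C'(t')), a'(t')) = r(s'(t'), a'(t'))$ does \emph{not} hold along the entire rollout: the Dependence Time Lemma applied at a separation time $t''$ only forces $\overline r_{j,k} = 0$ for the next $c$ steps, so two agents that split into different cutoff groups at $t''$ can re-enter distance $\mathcal R$ of each other at some $t' > t'' + c$ and incur an interdependent reward that is counted by $r$ but not by $r^{\mathcal C}$. (If your identity held for all $t'$, then $V^\pi((s, Z(s))) = V^\pi(s)$ exactly and \cref{cutoff_imdp_q} would be vacuous.) The identification is valid only for $t' \in \{0, \dots, c\}$, which is precisely the restricted form in which it is used in the proof of \cref{cutoff_imdp_q}.

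Because of this, your decomposition lands on the wrong objects. Expanding in terms of $r$ makes the leading term the Locally Interdependent MDP quantity $V^\pi_{Z(s(T-\delta))}(s(T))$ rather than the Cutoff quantity $V^\pi_{Z(s(T-\delta))}((s(T), Z(s(T))))$ appearing in the lemma, and your residual $\xi$ ranges over all of $E^c(Z(s(T-\delta)))$, including pairs lying in different cutoff groups whose rewards belong to neither Cutoff value function. Patching this by first converting $V^\pi((s,Z(s)))$ to $V^\pi(s)$ via \cref{cutoff_imdp_q} costs an additional $\frac{\gamma^{c+1}}{1-\gamma}\tilde r$ and does not recover the stated constant. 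The fix is to keep the expansion in terms of $r^{\mathcal C} = \sum_{z \in C^\tau(t')} r_z$ throughout, use the Dependence Time Lemma only to further intersect the partition with $Z(s(T-\delta))$ on the first $c-\delta+1$ steps, and observe that the leftover terms for $t' \ge c+1-\delta$ are indexed by $E(C^\tau(t')) \cap E^c(Z(s(T-\delta)))$; this is the paper's argument and yields the residual bound $\frac{\gamma^{c+1-\delta}}{1-\gamma}\tilde r$ directly.
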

\begin{proof}

Notice that the realizable trajectory in the statement is in the Locally Interdependent Multi-Agent MDP, and therefore, we may use the Dependence Time Lemma. Similar to what is shown in \cref{dtl_q}, we consider a new trajectory realization $(s'(t'), a'(t'))$ with $s'(0) = s(T)$. For $t' \in \{0, ... c - \delta\}$ we know $\overline r_{j,k}(s_j(t'),a_j(t'), s_k(t'), a_k(t')) = 0$ for $(j,k) \in E^c(Z(s(T - \delta)))$. By the definition of $r^\mathcal C$, we also have $\overline r_{j,k}(s_j(t'),a_j(t'), s_k(t'), a_k(t')) = 0$ for the tuples $(j,k)$ in $E^c(C^\tau(t'))$. Put together, this gives us the following decomposition $r^\mathcal C((s(t'), C(t')), a(t')) 
= \sum_{z \in C^\tau(t') \cap Z(s(T - \delta))} r_z(s_z(t'), a_z(t'))$ on the reward function.

Using this equality, we have\\
{\scriptsize
\begin{align*} 
 &V^{\pi} ((s(T), Z(s(T)))) \\
 &= \mathbb{E}_{\tau \sim \pi\lvert_{s(T)}}\bigg[\sum_{t' = 0}^\infty \gamma^{t'} \sum_{z \in C^\tau(t')} r_{z}(s_z(t'), a_z(t'))\bigg]\\
 &= \mathbb{E}_{\tau \sim \pi\lvert_{s(T)}}\bigg[\sum_{t' = 0}^{c - \delta} \gamma^{t'} \sum_{z \in C^\tau(t') \cap Z(s(T - \delta))} r_z(s_z(t'), a_z(t')) \\
 & \hspace{15ex}+ \sum_{t' = c + 1 - \delta}^\infty \gamma^{t'}\sum_{z \in C^\tau(t')} r_z(s_z(t'), a_z(t'))\bigg]\\
 &= \mathbb{E}_{\tau \sim \pi\lvert_{s(T)}}\bigg[\sum_{t' = 0}^{\infty} \gamma^{t'} \sum_{z \in C^\tau(t') \cap Z(s(T - \delta)) } r_z(s_z(t'), a_z(t')) \\
 & \hspace{0ex} + \sum_{t' = c + 1 - \delta}^\infty \gamma^{t'} \sum_{\substack{(j,k) \in E(C^\tau(t')) \cap \\E^c(Z(s(T- \delta))) }}\overline r_{j,k}(s_j(t'), a_j(t'), s_k(t'), a_k(t'))\bigg]\\
 &:= V_{Z(s(T - \delta))}^\pi((s(T), Z(s(T)))) + \xi.
\end{align*}
}%
The reasoning in most of the steps is similar to \cref{dtl_q}. In the penultimate step, $E(C^\tau(t'))\cap E^c(Z(s(T - \delta)))$ expresses the tuples of agents $(j,k)$ such that $j$, $k$ are in different partitions of $Z(s(T - \delta))$ but are in the same partition in $C^\tau(t')$.

This concludes the proof since $\xi \in [-\frac{\gamma^{c + 1 - \delta}}{1 - \gamma}\tilde r, \frac{\gamma^{c + 1 - \delta}}{1 - \gamma}\tilde r]$.
\\
\end{proof}

\subsubsection{Step 2)}
\textbf{Proof of \cref{cutoff_naive}}. We will next prove the naive policy candidate bound.
\begin{proof}
Using \cref{cutoff_imdp_q}, we can show
\begin{align*}
V^*(s) - &V^{\pi^\mathcal C}((s,Z(s)))\\
&\leq V^*(s) - V^{\pi^*}((s,Z(s)))\\
& \leq \frac{1}{(1 - \gamma)}\gamma^{c+1}  \tilde{r}.
\end{align*}
\\
\end{proof}

\subsubsection{Step 3)}
We say that for partitions $P_1$ and $P_2$ on $\mathcal N$, partition $P_1$ is finer than a partition $P_2$, $P_1 \subset P_2$ if for all $p_1 \in P_1$, there exists $ p_2 \in P_2$ such that $p_1 \subset p_2$. Then, we have the following:
\begin{lemma}
\label{cutoff_condition}
For realizable trajectories in a Locally Interdependent Multi-Agent MDP, define a sequence of stopping times with $T_0 = 0$ and $T_i$ the time step $t$ for which $Z(s(t)) \not\subset Z(s(t - 1))$  for the $i$-th time. Then we have,
 {\scriptsize
 \begin{align*}
&\mathbb{E}_{\tau \sim \chi\lvert_{s}} \bigg[ \sum_{t = T_i}^{T_{i + 1} - 1} \gamma^t r(s(t), a(t))\bigg]= \mathbb{E}_{\tau \sim \chi\lvert_{s}} \bigg[ \gamma^{T_i}V^{\pi^\mathcal C}((s(T_i), Z(s(T_i))))\bigg]\\
  &\hspace*{10ex}-\mathbb{E}_{\tau \sim \chi\lvert_{s}} \bigg[\gamma^{T_{i + 1}} V^{\pi^\mathcal C}((s(T_{i + 1}),  \bigcap_{T_i \leq t' \leq T_{i + 1}}Z(s(t'))))\bigg]
 \end{align*}
 } 
\end{lemma}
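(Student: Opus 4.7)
The plan is to mirror the construction used in \cref{amalgam_condition}, but with the visibility partition replaced by the cutoff partition, and to exploit the fact that between stopping times $T_i$ and $T_{i+1}$ the visibility partition is only allowed to become strictly finer. This monotonicity is exactly what makes the Cutoff Multi-Agent MDP run along a trajectory in the original Locally Interdependent Multi-Agent MDP ``track'' the visibility partition in the sense required by the equality.

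First I would establish the identification between partitions. By the definition of the stopping times, for every $t' \in \{T_i, T_i+1, \ldots, T_{i+1}-1\}$ we have $Z(s(t')) \subset Z(s(t'-1)) \subset \cdots \subset Z(s(T_i))$, hence $\bigcap_{T_i \leq t'' \leq t'} Z(s(t'')) = Z(s(t'))$. Consequently, if we view the Locally Interdependent trajectory $(s(t'), a(t'))_{t' \geq T_i}$ as a trajectory in the Cutoff Multi-Agent MDP with starting cutoff partition $Z(s(T_i))$, its cutoff partition at time $t'$ equals $Z(s(t'))$ for $t' \in \{T_i, \ldots, T_{i+1}-1\}$, and equals $\bigcap_{T_i \leq t'' \leq T_{i+1}} Z(s(t''))$ at time $T_{i+1}$.

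Next I would use this partition identification to match actions and per-step rewards. Because $\chi(s) = \pi^\mathcal{C}((s, Z(s)))$, the Locally Interdependent action under $\chi$ at time $t'$ coincides with the Cutoff action under $\pi^\mathcal{C}$ at the state $(s(t'), Z(s(t')))$ for $t' \in \{T_i, \ldots, T_{i+1}-1\}$. Moreover, since $\mathcal V > \mathcal R$, the reward decomposition $r(s,a) = \sum_{z \in Z(s)} r_z(s_z, a_z) = r^\mathcal{C}((s, Z(s)), a)$ holds, so per-step rewards also agree on this segment. Then I would follow the same virtual-trajectory idea used in \cref{amalgam_condition}: define a coupled trajectory that agrees with the $\chi$-trajectory up to time $T_{i+1}$ and, from $(s(T_{i+1}), \bigcap_{T_i \leq t'' \leq T_{i+1}} Z(s(t'')))$ onwards, continues under $\pi^\mathcal{C}$ in the Cutoff Multi-Agent MDP. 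Combined with the strong Markov property applied at $T_i$ and $T_{i+1}$, the inner conditional expectation collapses into $V^{\pi^\mathcal{C}}((s(T_i), Z(s(T_i))))$ minus the discounted tail $V^{\pi^\mathcal{C}}((s(T_{i+1}), \bigcap_{T_i \leq t'' \leq T_{i+1}} Z(s(t''))))$ via the Bellman consistency equation for $V^{\pi^\mathcal{C}}$, yielding the claimed equality after multiplying by $\gamma^{T_i}$.

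The step I expect to require the most care is the partition identification at the boundary time $T_{i+1}$: up to $T_{i+1}-1$ the intersection collapses cleanly to the current visibility partition, but at $T_{i+1}$ the visibility partition is by definition not a refinement of $Z(s(T_{i+1}-1))$, so the cutoff partition strictly refines $Z(s(T_{i+1}))$ and the two must be tracked separately. Handling this correctly is what forces the second value function to be evaluated at $\bigcap_{T_i \leq t' \leq T_{i+1}} Z(s(t'))$ rather than at $Z(s(T_{i+1}))$, and it is essential that the starting state of the next segment's evaluation matches this refined cutoff partition so that the strong Markov argument produces a genuine telescoping identity rather than an inequality.
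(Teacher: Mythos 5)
Your proposal is correct and follows exactly the route the paper intends: the paper omits this proof, noting only that it is ``very similar to'' the proof of \cref{amalgam_condition}, and your argument is precisely that adaptation with the right cutoff-partition bookkeeping. In particular you correctly isolate the one genuinely new point --- that $\bigcap_{T_i \leq t'' \leq t'} Z(s(t''))$ collapses to $Z(s(t'))$ for $t' \in \{T_i, \dots, T_{i+1}-1\}$ (so the actions of $\chi$ and the Locally Interdependent rewards coincide with those of $\pi^{\mathcal C}$ run in the Cutoff Multi-Agent MDP on that segment), while at the boundary the cutoff partition strictly refines $Z(s(T_{i+1}))$, which is exactly why the second value function must be evaluated at $\bigcap_{T_i \leq t' \leq T_{i+1}} Z(s(t'))$.
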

\begin{proof}\renewcommand{\qedsymbol}{}
The proof is omitted as it is very similar to \cref{amalgam_condition}.
\end{proof}

\textbf{Proof of \cref{cutoff_bound}}. Finally, with all the previous results in hand, we may prove the main result.
\begin{proof}
Let $T_0 = 0$ and $T_i$ be the time step $t$ for which $Z(s(t)) \not\subset Z(s(t - 1))$  for the $i$-th time. By definition, for all $t' \in \{T_i, ..., T_{i + 1} - 1\}$ the visibility partitions only become finer.

We use \cref{cutoff_condition}, and a minor variation of the Telescoping Lemma (proof very similar to \cref{telescope}) to obtain:\\
$V^{\chi} (s) \geq \mathbb{E}_{\tau \sim \chi\lvert_{s}}\bigg[V^{\pi^\mathcal C}((s, Z(s))) - \sum_{i = 1}^\infty \gamma^{T_i} \Delta_{i}^\tau\bigg]$
\\\\
where we have
{\small
\begin{align*}
&\Delta_{i}^\tau = V^{\pi^\mathcal C}((s(T_i), \bigcap_{T_{i - 1} \leq t' \leq T_{i}}Z(s(t')))) \\
&\hspace{20ex}- V^{\pi^\mathcal C}((s(T_i), Z(s(T_{i}))))\\
& = V^{\pi^\mathcal C}((s(T_i), Z(s(T_{i} - 1) \cap Z(s(T_{i})))) \\
&\hspace{20ex}- V^{\pi^\mathcal C}((s(T_i), Z(s(T_{i}))))\\
& = V^{\pi^\mathcal C}_{Z(s(T_{i} - 1))}((s(T_i), Z(s(T_{i})))) \\
&\hspace{15ex}- V^{\pi^\mathcal C}((s(T_i), Z(s(T_{i}))))\\
&\leq \frac{1}{(1 - \gamma)}\gamma^{c}  \tilde{r},
\end{align*}
}%
where the inequality in the last step is by Lemma \ref{dtl_q_cutoff} with $\delta = 1$. 

Plugging in our upper bound for $\Delta_i^\tau$ into our result from above, we get\\
$V^{\chi} (s) \geq V^{\pi^\mathcal C}((s, Z(s))) - \frac{1}{(1 - \gamma)^2}\gamma^{c + 1}  \tilde{r}$.
\\\\
Therefore, by using \cref{cutoff_naive} we find
\begin{align*}
V^*&(s) - V^{\chi} (s) \\
&\leq V^*(s) - V^{\pi^\mathcal C}((s,Z(s))) + \frac{1}{(1 - \gamma)^2}  \gamma^{c + 1}\tilde{r}\\
& \leq \frac{1}{(1 - \gamma)}\gamma^{c+1}  \tilde{r} + \frac{1}{(1 - \gamma)^2}\gamma^{c + 1}  \tilde{r}\\
&= \frac{2 - \gamma}{(1 - \gamma)^2}\gamma^{c + 1}  \tilde{r},
\end{align*}

and this concludes the proof.
\\
\end{proof}

\subsection{The First Step Finite Horizon Optimal Policy}
Recall the definition of the First Step Finite Horizon Optimal Policy. If $\pi^{*}_{finite} = \{ \pi^{*, 0},  \pi^{*, 1}, ...,  \pi^{*, c}\}$ are the set of discounted finite horizon optimal policies with horizon $c$, then $\phi(s) = \pi^{*, 0}(s)$.

For the purposes of this section, $\tau_t^c \sim \pi^{*}_{finite}\big\lvert _{s}$ will denote a realizable trajectory defined between time steps $t$ and $c$ distributed according to a roll out starting at $s(t) =s$ using the non-stationary policy $\pi^{*, t'}(s(t'))$ for $t' \in \{t, ..., c\}$ to generate actions. $\tau_t^c \sim \pi^{*}_{finite}\big\lvert _{s,a}$ is defined similarly and will condition on $s(t) = s$ and $a(t) = a$.

\subsubsection{Step 1)}
First, we will show that the First Step Finite Horizon Optimal Policy for a Locally Interdependent Multi-Agent MDP can equivalently be computed on the Cutoff Multi-Agent MDP.

Let $V_h^*$, $Q_h^*$ be the values for the discounted finite horizon optimal reward for a Locally Interdependent Multi-Agent MDP with horizon $c$. Similarly $V_h^{\mathcal C}$, $Q_h^{\mathcal C}$ will represent the optimal discounted finite horizon values with horizon $c$ in the Cutoff Multi-Agent MDP. Then we have the following lemma:
\begin{theorem}
    \label{cutoff_finite_proof}
    $Q_0^*(s,a) = Q^{\mathcal C}_0((s, Z(s)), a)$ for any $s,a$.
    
\end{theorem}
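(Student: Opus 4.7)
The plan is to prove a stronger claim by backward induction on $h \in \{c, c-1, \ldots, 0\}$: for every realizable trajectory $s^{(0)}, s^{(1)}, \ldots, s^{(h)}$ in the Locally Interdependent Multi-Agent MDP and every action $a$, setting $C_h := Z(s^{(0)}) \cap Z(s^{(1)}) \cap \cdots \cap Z(s^{(h)})$, we have $V_h^*(s^{(h)}) = V_h^{\mathcal C}((s^{(h)}, C_h))$. Specializing to $h = 0$ gives $C_0 = Z(s)$ and hence $V_0^*(s) = V_0^{\mathcal C}((s, Z(s)))$; applying one more Bellman backup at $(s, a)$ then yields the claimed $Q_0^*(s,a) = Q_0^{\mathcal C}((s, Z(s)), a)$.

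The essential tool, which I would establish as a preparatory reward-decomposition lemma, is that on such states the two reward functions agree:
\[
r(s^{(h)}, a) \;=\; \sum_{c' \in C_h} r_{c'}(s^{(h)}_{c'}, a_{c'}) \;=\; r^{\mathcal C}((s^{(h)}, C_h), a).
\]
To see this, it suffices to show that whenever two agents $j, k$ lie in different blocks of $C_h$, one has $\overline{r}_{j,k}(s^{(h)}_j, a_j, s^{(h)}_k, a_k) = 0$. If $j, k$ are in different blocks of $C_h$, then by definition they lie in different blocks of some $Z(s^{(i)})$ with $i \in \{0, \ldots, h\}$, i.e., $d(s^{(i)}_j, s^{(i)}_k) > \mathcal V$. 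Since $h - i \leq h \leq c$, the triangle-inequality argument used in the proof of the Dependence Time Lemma (\cref{dtl}) immediately gives $d(s^{(h)}_j, s^{(h)}_k) > \mathcal R$, so the interdependent reward vanishes by the reward structure of the Locally Interdependent Multi-Agent MDP.

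With this lemma in hand, the base case $h = c$ is immediate: since no steps remain, $V_c^*(s^{(c)}) = \max_a r(s^{(c)}, a) = \max_a r^{\mathcal C}((s^{(c)}, C_c), a) = V_c^{\mathcal C}((s^{(c)}, C_c))$. For the inductive step at level $h < c$, I would compare
\[
V_h^*(s^{(h)}) = \max_a \Bigl[ r(s^{(h)}, a) + \gamma \, \mathbb{E}_{s' \sim P(\cdot \mid s^{(h)}, a)}\bigl[V_{h+1}^*(s')\bigr] \Bigr]
\]
with
\[
V_h^{\mathcal C}((s^{(h)}, C_h)) = \max_a \Bigl[ r^{\mathcal C}((s^{(h)}, C_h), a) + \gamma \, \mathbb{E}_{s'}\bigl[V_{h+1}^{\mathcal C}((s', Z(s') \cap C_h))\bigr] \Bigr].
\]
The immediate reward terms agree by the lemma, and for any $s'$ in the support of $P(\cdot \mid s^{(h)}, a)$ the extended trajectory $s^{(0)}, \ldots, s^{(h)}, s'$ is again realizable, so $Z(s') \cap C_h$ has exactly the form $C_{h+1}$ demanded by the inductive hypothesis, making the continuation terms match term-by-term.

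The main obstacle is the reward-decomposition lemma, because one must verify that the possibly strict refinement $C_h$ of $Z(s^{(h)})$ never separates agents whose pairwise distance at time $h$ is still within $\mathcal R$; the Dependence Time Lemma is precisely what rules this out, since any split introduced at an earlier time $i \leq h$ forces a distance gap that cannot close in the remaining $h - i \leq c$ steps. Once this is in place, the backward induction and the final single-step Bellman passage from $V_0^*$ to $Q_0^*$ are routine.
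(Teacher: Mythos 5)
Your proposal is correct and follows essentially the same route as the paper's proof: both hinge on the observation (a consequence of the Dependence Time Lemma) that along any realizable trajectory prefix of length at most $c$ the reward $r$ agrees with $r^{\mathcal C}$ evaluated at the running intersection $\bigcap_{i} Z(s^{(i)})$ of visibility partitions, after which the two finite-horizon Bellman recursions coincide term-by-term and a backward induction from the trivial base case finishes the argument. The paper carries out the induction on $Q_h$ rather than $V_h$, but this is a cosmetic difference.
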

\begin{proof}
Recall from the proof of \cref{cutoff_imdp_q}, we have for any realizable trajectory $\tau' = (s'(t'), a'(t'))$ in a Locally Interdependent Multi-Agent MDP, $r(s'(t'), a'(t')) = \sum_{z\in C^{\tau'}(t')} r_z(s_z'(t'), a_z'(t'))$ for $t'\in\{0,...,c\}$.

For $h\in \{0, ..., c\}$ let $\tau_0^{h}$ be any realizable trajectory. In the following, we will append an instance of $\tau_h^c \sim \pi^{\mathcal C}_{finite}\big\lvert _{(s(h), C^\tau(h)),a(h)}$ from $h + 1$ onwards and call this realizable trajectory $\tau$. We find,
{\small
\begin{align*} 
 &Q^{\mathcal C}_h((s(h), C^\tau(h)), a(h)) \\
 &= \mathbb{E}_{\tau_h^c \sim \pi^{\mathcal C}_{finite}\big\lvert _{(s(h), C^\tau(h)),a(h) }}\bigg[\sum_{t = h}^{c} \gamma^{t} \sum_{z \in C^\tau(t)} r_z(s_z(t), a_z(t)) \bigg] \\
 &= \mathbb{E}_{\tau_h^c \sim \pi^{\mathcal C}_{finite}\big\lvert _{(s(h), C^\tau(h)),a(h)}}\bigg[\sum_{t = h}^c \gamma^{t} r(s(t), a(t)))\bigg] \\
 &=  r(s,a) + \\
 &\hspace{0ex}\gamma \mathbb{E} _{s'\sim P(\cdot \lvert s, a)}\bigg[\max_{a'} Q^{\mathcal C}_{h + 1}((s', C^\tau(h)\cap Z(s')), a')\bigg].
\end{align*}
}%

This is nearly identical to the Bellman Equations for the Discounted Finite Horizon Locally Interdependent Multi-Agent MDP problem. 

In fact, we have $Q_h^{\mathcal C} ((s(h), C^\tau(h)), a(h)) = Q_h^{*} (s(h), a(h))$ inductively since $Q_c^{\mathcal C} = Q_c^* = 0$. Therefore, $Q_0^*(s, a) = Q^{\mathcal C}_0((s, Z(s)),a)$.

\end{proof}

Next, we use \cref{cutoff_finite_proof} to prove that the discounted finite horizon optimal policy is a group-decentralized policy.

\begin{corollary}
\label{finite_group_decentralized}
The discounted finite horizon joint optimal policy $\pi^{*, 0}$ for $\mathcal M$ is a group decentralized policy.
\end{corollary}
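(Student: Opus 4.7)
The plan is to deduce the corollary by combining Theorem \ref{cutoff_finite_proof} with a decomposition property of the finite-horizon Q-function in the Cutoff Multi-Agent MDP that mirrors the one already established for the infinite-horizon value function in Section \ref{cutoff}.

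First, I would invoke Theorem \ref{cutoff_finite_proof} to rewrite $Q_0^*(s, a) = Q^{\mathcal C}_0((s, Z(s)), a)$. Since $\pi^{*,0}(s) \in \arg\max_a Q_0^*(s, a)$, it now suffices to show that $\arg\max_a Q^{\mathcal C}_0((s, Z(s)), a)$ decomposes into local argmaxes over the visibility groups $z \in Z(s)$, each depending only on $s_z$. The natural route is to prove the factorization
\[
Q^{\mathcal C}_h((s, Z(s)), a) = \sum_{z \in Z(s)} Q^{\mathcal C}_h((s_z, \{z\}), a_z)
\]
for every $h \in \{0,\dots,c\}$, from which the corollary is immediate by taking $h = 0$ and observing that the maximization splits across groups: $\pi^{*,0}_z(s_z) := \arg\max_{a_z} Q^{\mathcal C}_0((s_z, \{z\}), a_z)$ depends only on the states $s_z$ of agents inside visibility group $z$, which fits the group decentralized template $\pi(s) = (\pi_z(s_z) : z \in Z(s))$.

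The factorization itself I would prove by backward induction on $h$. The base case $h = c$ is trivial since $Q^{\mathcal C}_c \equiv 0$. For the inductive step, I would use the Bellman recursion derived in the proof of Theorem \ref{cutoff_finite_proof}, together with the two structural features of the Cutoff Multi-Agent MDP that make the decomposition work: the transition kernel factorizes as $P(s' \mid s, a) = \prod_{k} P_k(s'_k \mid s_k, a_k)$, and the cutoff partition only gets finer with time. The latter ensures that once we start from $(s, Z(s))$, every visibility subgroup $z' \in Z(s')$ reachable after one step is contained in some $z \in Z(s)$, so the sum $\sum_{z' \in Z(s') \cap z} Q^{\mathcal C}_{h+1}((s_{z'}, \{z'\}), a_{z'})$ depends only on the restriction of $(s', a')$ to $z$. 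Combined with the local reward decomposition $r^{\mathcal C}((s, Z(s)), a) = \sum_{z \in Z(s)} r_z(s_z, a_z)$, the Bellman recursion separates cleanly across $z \in Z(s)$, giving the inductive step.

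The main obstacle is verifying this factorization cleanly in the finite-horizon setting; the analogous computation in Section \ref{cutoff} was done for the infinite-horizon value of a fixed policy, but here we need it for the optimal finite-horizon Q-function, which requires being careful that the maximum over joint actions truly decomposes into independent local maxima. This is exactly what the product structure of $P$ and the group-wise additivity of both $r^{\mathcal C}$ and the continuation values deliver. Once this is in hand, the corollary follows in one line and the rest is essentially bookkeeping.
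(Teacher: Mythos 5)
Your proposal is correct and follows essentially the same route as the paper: invoke Theorem \ref{cutoff_finite_proof} to identify $Q_0^*(s,a)$ with $Q^{\mathcal C}_0((s,Z(s)),a)$, decompose the latter as a sum of $Q^{\mathcal C}_0((s_z,\{z\}),a_z)$ over visibility groups, and observe that the joint argmax then splits into local argmaxes. The paper justifies the group-wise decomposition by pointing to the analogous infinite-horizon value decomposition in the Cutoff Multi-Agent MDP section, whereas you spell it out via backward induction on the horizon; this is a more detailed rendering of the same step, not a different argument.
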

\begin{proof}
Using \cref{cutoff_finite_proof} and a decomposition similar to the Cutoff Multi-Agent MDP value decomposition shown in \cref{cutoff} we can show that $\pi^{*, 0}$ is group decentralized:
\begin{align*} 
\pi^{*,0} &(s) = \text{argmax}_a Q^*_0(s, a)\\
 & = \text{argmax}_a Q^{\mathcal C}_0((s, Z(s)), a)\\
& = \text{argmax}_a \sum_{z \in Z(s)} Q^{\mathcal C}_0((s_z, \{z\}), a_z)\\
&= \bigg(\text{argmax}_{a_z}Q^{\mathcal C}_0((s_z, \{z\}), a_z): \forall z \in Z(s)\bigg).\\
\end{align*}
\end{proof}

\subsubsection{Step 2)}
\textbf{Proof of \cref{finite_naive}}. We are now ready to prove the bound on the naive policy candidate.
\begin{proof}
Using the definition of the discounted finite horizon optimal policy, we find\\

\begin{align*} 
 V^*(&s) - V_0^{*} (s)\\
&\leq \mathbb{E}_{\tau \sim \pi^*\lvert_{s}} \bigg[\sum_{t = 0}^\infty \gamma^{t} r(s(t), a(t))\bigg] \\
&\hspace{10ex} - \mathbb{E}_{\tau_0^c \sim \pi^{*}_{finite}\big\lvert _{s}}\left[\sum_{t = 0}^{c} \gamma^t r(s(t), a(t)) \right]\\
&\leq \mathbb{E}_{\tau \sim \pi^*\lvert_{s}} \bigg[\sum_{t = c+1}^\infty \gamma^{t} r(s(t), a(t))\bigg] \\
&\leq \frac{1}{(1 - \gamma)}\gamma^{c+1} \tilde{r}.
\end{align*}

The second inequality holds because the optimal discounted finite horizon reward will have a larger expected discounted sum of rewards in the first $c$ iterations. 
\\
\end{proof}

\subsubsection{Step 3)}
\begin{lemma}
\label{finite_condition}
For the First Step Finite Horizon Optimal Policy $\phi$ and and $i\in \{0, 1, ...\}$
\\
$\medmath{\mathbb{E}_{\tau \sim \phi\lvert_{s}}\bigg[\gamma^i r(s(i), a(i))\bigg]\geq}
$\\
$\medmath{\hspace*{5ex}
\mathbb{E}_{\tau \sim \phi\lvert_{s}}\bigg[\gamma^i V_0^*(s(i)) - \gamma^{i + 1} V_0^*(s(i + 1)) \bigg] - \gamma^{i + c + 1} \tilde r} $.
\end{lemma}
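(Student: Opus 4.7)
The plan is to derive a pointwise one-step inequality bounding $\mathbb{E}_{a \sim \phi(s)}[r(s, a)]$ by a Bellman-style difference of $V_0^*$ values at consecutive states, and then lift it to the stated trajectory expectation via the tower property. Let
\[
W_h(s) := \max \mathbb{E}\!\left[\sum_{t=h}^{c} \gamma^{t-h} r(s(t), a(t)) \,\big|\, s(h) = s\right]
\]
be the optimal finite-horizon value with its discount reset at step $h$, so that $W_0 = V_0^*$ and the standard Bellman equation reads $W_h(s) = \max_a \bigl[r(s,a) + \gamma \, \mathbb{E}_{s' \sim P(\cdot|s, a)}[W_{h+1}(s')]\bigr]$.

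Since $\phi(s) = \pi^{*, 0}(s)$ attains this maximum at $h = 0$, I would obtain
\[
\mathbb{E}_{a \sim \phi(s)}[r(s, a)] = V_0^*(s) - \gamma\,\mathbb{E}_{a \sim \phi(s),\, s'\sim P(\cdot|s,a)}[W_1(s')].
\]
The key auxiliary step is to replace $W_1$ by $V_0^*$ up to a small error. Since $W_0(s') = V_0^*(s')$ optimizes $c+1$ discounted rewards while $W_1(s')$ optimizes only $c$ of them, extending any optimal horizon-$c$ plan by one arbitrary extra step shows $W_0(s') \geq W_1(s') - \gamma^c \tilde r$, so $W_1(s') \leq V_0^*(s') + \gamma^c \tilde r$. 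Substituting this into the displayed identity yields the pointwise inequality
\[
\mathbb{E}_{a \sim \phi(s)}[r(s, a)] \geq V_0^*(s) - \gamma\,\mathbb{E}_{a \sim \phi(s),\, s'\sim P(\cdot|s,a)}[V_0^*(s')] - \gamma^{c+1}\tilde r.
\]

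Finally, I would apply this pointwise bound at $s = s(i)$ along a trajectory $\tau \sim \phi\lvert_{s}$. By the Markov property, conditioning on $s(i)$ makes $a(i) \sim \phi(s(i))$ and $s(i+1) \sim P(\cdot|s(i), a(i))$, so the two conditional expectations on the right coincide with $\mathbb{E}[r(s(i), a(i)) \mid s(i)]$ and $\mathbb{E}[V_0^*(s(i+1)) \mid s(i)]$ respectively. Multiplying through by $\gamma^i$ and taking the outer expectation over $s(i)$ produces exactly the stated inequality. The main obstacle I anticipate is keeping the two discount conventions straight — specifically, verifying that $W_1$ (with its reset discount) is the object that naturally arises in the Bellman expansion of $V_0^*$, and that the horizon-$c$ vs.\ horizon-$(c+1)$ comparison gives the clean $\gamma^c \tilde r$ gap that combines with the prefactor $\gamma$ from the Bellman step into the stated $\gamma^{c+1}\tilde r$ error term.
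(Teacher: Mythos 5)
Your proof is correct. It reaches the same inequality as the paper but packages the argument differently: the paper works directly with trajectories, splicing a ``virtual trajectory'' that follows the non-stationary finite-horizon optimal policy from $s(i)$ onward, appending a ``ghost sample'' at step $i+c+1$, and then comparing two partial sums of discounted rewards; you instead work with the finite-horizon Bellman recursion, writing $V_0^*(s)=\mathbb{E}_{a\sim\phi(s)}\bigl[r(s,a)+\gamma\,\mathbb{E}_{s'}[W_1(s')]\bigr]$ (exact, since $\phi=\pi^{*,0}$ attains the maximum) and then absorbing the horizon mismatch via $W_1(s')\leq V_0^*(s')+\gamma^{c}\tilde r$. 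The two error mechanisms are in fact the same device viewed from opposite sides --- your ``extend the optimal horizon-$c$ plan by one arbitrary extra step'' is exactly the paper's ghost sample, and your tower-property lift at $s=s(i)$ mirrors the paper's conditioning on $s(i)$ via the Markov property. What your route buys is brevity and a cleaner separation of concerns (one exact Bellman identity plus one one-line horizon comparison), at the cost of having to be careful that $W_h$ uses the reset discount $\gamma^{t-h}$ so that the prefactor $\gamma$ turns the $\gamma^{c}\tilde r$ gap into the stated $\gamma^{c+1}\tilde r$; you flag and resolve this correctly. The paper's trajectory-level construction is more verbose but generalizes more directly to the analogous lemmas for the Amalgam and Cutoff policies, where the stopping times are random and a pure Bellman-operator argument is less immediate.
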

\begin{proof}

Recall our notation from \cref{amalgam_condition} for $\tau_T$, $\tau^T$ and $\tau_{T_1}^{T_2}$. It will be used again in this Lemma.

To achieve \cref{telescope_condition} with our specific stopping time and policy, we must bound $\mathbb{E}_{\tau \sim \phi\lvert_{s}} \bigg[ \gamma^i r(s(i), a(i))\bigg] $ for each $i$. 

Similar to \cref{amalgam_condition} we will apply the law of total expectation, eliminate random variables, and apply the Markov property to obtain the equivalence,
{\normalsize
\begin{align*} 
&\mathbb{E}_{\tau \sim \phi\lvert_{s}} \bigg[ \gamma^i r(s(i), a(i))\bigg] \\
& = \mathbb{E}_{\tau^{i} \sim \phi\lvert_{s}}\bigg[\mathbb{E}_{\tau \sim \phi\lvert_{s}} \bigg[  \gamma^i r(s(i), a(i)) \bigg| \tau^{i}\bigg] \bigg]\\
& = \mathbb{E}_{\tau^{i} \sim \phi\lvert_{s}}\bigg[\mathbb{E}_{\tau_{i} \sim \phi\lvert_{s}} \bigg[  \gamma^i r(s(i), a(i)) \bigg| \tau^{i}\bigg] \bigg]\\
& = \mathbb{E}_{s(i) \sim \phi\lvert_{s}}\bigg[\mathbb{E}_{\tau_{i} \sim \phi\lvert_{s}} \bigg[ \gamma^i r(s(i), a(i)) \bigg|s(i) \bigg] \bigg].
\end{align*}
}%

Again as in to \cref{amalgam_condition}, we will construct a virtual trajectory. Here, we will define $(s'(t), a'(t))$ to be $\tau'^{i + c + 1}_i$ for $t \in \{i, ..., i + c + 1\}$  by starting with $s(i), a(i)$ and appending an instance of the trajectory 
$(s^{virt}(t'), a^{virt}(t')) = (\tau^{virt})^c_1 \sim \pi^*_{finite}\lvert_{s(i + 1)}$ starting at time step 1 . Further, we will append what we call a ghost sample of $s^{virt}(c), a^{virt}(c)$. This is a state $s'' \sim P(\cdot \lvert s^{virt}(c),a^{virt}(c))$ and an arbitrary action $a''$.

In other words,
{\scriptsize
\[s'(t), a'(t) =  \begin{cases} 
      s(i), a(i) & t = i  \\
       s^{virt}(t - i), a^{virt}(t - i ) & t \in \{i + 1, ..., i + c \} \\
       s'', a'' & t = i + c + 1 .
   \end{cases}
\]
}%
We use this new realizable trajectory in place of $\tau_i$ by adding and subtracting the rest of the trajectory to obtain,
{\scriptsize
\begin{align*} 
 &\mathbb{E}_{s(i) \sim \phi\lvert_{s}}\bigg[\mathbb{E}_{\tau_{i} \sim \phi\lvert_{s}} \bigg[ \gamma^i r(s(i), a(i)) \bigg|s(i) \bigg] \bigg]\\
 &= \mathbb{E}_{s(i) \sim \phi\lvert_{s}}\bigg[\mathbb{E}_{\tau'^{i + c + 1}_{i}} \bigg[\sum_{t = i}^{i + c + 1} \gamma^t r(s'(t), a'(t)) \bigg|s(i) \bigg] \bigg] \\
 &\hspace{10ex}-  \mathbb{E}_{s(i) \sim \phi\lvert_{s}}\bigg[\mathbb{E}_{\tau'^{i + c + 1}_{i}} \bigg[\sum_{t = i + 1}^{i + c + 1} \gamma^t r(s'(t), a'(t)) \bigg|s(i) \bigg] \bigg].
\end{align*}
}%

We can bound the first quantity by bounding the ghost sample and recognizing that the trajectory leading up to that sample is distributed according to $\pi^*_{finite}\lvert_{s(i)}$ as follows
{\small
\begin{align*} 
 &\mathbb{E}_{s(i) \sim \phi\lvert_{s}}\bigg[\mathbb{E}_{\tau'^{i + c + 1}_{i}} \bigg[\sum_{t = i}^{i + c + 1} \gamma^t r(s'(t), a'(t)) \bigg|s(i) \bigg] \bigg] \\
  &\geq \mathbb{E}_{s(i) \sim \phi\lvert_{s}}\bigg[\mathbb{E}_{\tau'^{i + c}_{i} } \bigg[\sum_{t = i}^{i + c} \gamma^t r(s'(t), a'(t)) \bigg|s(i)\bigg] \bigg]  - \gamma^{i + c + 1} \tilde r \\
  & =  \mathbb{E}_{s(i) \sim \phi\lvert_{s}}\bigg[\gamma^i V_0^*(s(i)) \bigg]  - \gamma^{i + c + 1} \tilde r.
\end{align*}
}%
In the second quantity, notice that the inner expectation is over $\tau'^{i + c + 1}_i$ is a realizable trajectory under a non-stationary policy. Therefore, we may upper bound the discounted sum of rewards under this trajectory for $c$ iterations with the discounted finite horizon optimal expected reward. Using steps similar to before,
{\small
\begin{align*} 
 &\mathbb{E}_{s(i) \sim \phi\lvert_{s}}\bigg[\mathbb{E}_{\tau'^{i + c + 1}_{i}} \bigg[\sum_{t = i + 1}^{i + c + 1} \gamma^t r(s'(t), a'(t)) \bigg|s(i)\bigg] \bigg]\\
 &=\mathbb{E}_{\tau_i^{i + 1} \sim \phi\lvert_{s}}\bigg[\mathbb{E}_{\tau'^{i + c + 1}_{i}} \bigg[\sum_{t = i + 1}^{i + c + 1} \gamma^t r(s'(t), a'(t)) \bigg| \tau_i^{i + 1}\bigg] \bigg]\\
 &=\mathbb{E}_{\tau_i^{i + 1} \sim \phi\lvert_{s}}\bigg[\mathbb{E}_{\tau'^{i + c + 1}_{i + 1}} \bigg[\sum_{t = i + 1}^{i + c + 1} \gamma^t r(s'(t), a'(t)) \bigg| \tau_i^{i + 1}\bigg] \bigg]\\
 & = \mathbb{E}_{s(i + 1) \sim \phi\lvert_{s}}\bigg[\mathbb{E}_{\tau'^{i + c + 1}_{i + 1}} \bigg[\sum_{t = i + 1}^{i + c + 1} \gamma^t r(s'(t), a'(t))\bigg|s(i + 1) \bigg] \bigg]\\
 & \leq  \mathbb{E}_{s(i + 1) \sim \phi\lvert_{s}}\bigg[ \gamma^{i + 1} V_0^*(s(i + 1))\bigg] .
\end{align*}
}%
Substituting these two bounds into our original equation,
{\small
\begin{align*} 
 &\mathbb{E}_{\tau \sim \phi\lvert_{s}}\bigg[\gamma^i r(s(i), a(i))\bigg]\geq  \\
 &\hspace{1ex}\mathbb{E}_{\tau \sim \phi\lvert_{s}}\bigg[\gamma^i V_0^*(s(i)) -  \gamma^{i + 1} V_0^*(s(i + 1)) \bigg] - \gamma^{i + c + 1} \tilde r 
\end{align*}
}%
as desired.

\end{proof}

\textbf{Proof of \cref{finite_bound}}. With all the previous preparations, we are now ready to prove the main result.

\begin{proof}

Let $T_i = i$. Using \cref{finite_condition}, we have
\\
$\medmath{\mathbb{E}_{\tau \sim \phi\lvert_{s}} \bigg[ \gamma^{i} r(s(i), a(i))\bigg] }$\\
\\$\hspace*{10ex}\medmath{\geq
  \mathbb{E}_{\tau \sim \phi\lvert_{s}} \bigg[ \gamma^{i}V^{*}_0(s(i))-\gamma^{i + 1} V^{*}_0(s(i + 1))\bigg] - \gamma^{i + c + 1}\tilde r}$
\\and therefore,\\
$\medmath{\mathbb{E}_{\tau \sim \phi\lvert_{s}} \bigg[ \gamma^{i} (r(s(i), a(i)) + \gamma^{c + 1} \tilde r)\bigg] }$\\
\\$\hspace*{10ex}\medmath{\geq
  \mathbb{E}_{\tau \sim \phi\lvert_{s}} \bigg[ \gamma^{i}V^{*}_0(s(i))-\gamma^{i + 1} V^{*}_0(s(i + 1))\bigg]}$.

Intuitively, we can think about this as \cref{telescope_condition} applied to an MDP with all rewards adjusted by $+ \gamma^{c + 1}\tilde r$.

Using the Telescoping Lemma, we have 
\begin{align*}
V^{\phi} (s) &= (V^{\phi} (s) + \sum_{t = c + 1}^\infty \gamma^t \tilde r) - \frac{1}{1 - \gamma}\gamma^{c+1}\tilde r \\
&\geq \mathbb{E}_{\tau \sim \phi\lvert_{s}}\bigg[V^{*}_0(s) - \sum_{i = 1}^\infty \gamma^{T_i} \Delta_{i}^\tau\bigg] - \frac{1}{1 - \gamma}\gamma^{c+1}\tilde r \\
&= V^{*}_0(s) - \frac{1}{1 - \gamma}\gamma^{c+1}\tilde r,
\end{align*}
where the last equality holds because $\Delta_i^\tau = V^*_0(s(i)) - V^*_0(s(i)) = 0$
\\\\
Therefore, we may substitute our result from the Telescoping Lemma to obtain
\begin{align*}
V^*&(s) - V^{\phi} (s) \\
&\leq V^*(s) - V^{*}_0(s) + \frac{1}{1 - \gamma}\gamma^{c + 1}  \tilde{r}\\
&\leq \frac{2}{1 - \gamma} \gamma^{c + 1} \tilde{r}.
\end{align*}
where \cref{finite_naive} is used in the last inequality.
\\
\end{proof}

\subsection{Lower Bound}

\textbf{Proof of \cref{lower_bound}}. We conclude by providing the construction for the lower bound here.

\begin{figure}[h]
\centering
\begin{tikzpicture}
    \node[shape=circle,draw=black ] (s2) at (-2.75,1) {$S_1$};
    \node[shape=circle,draw=black ] (s4) at (-1.25,1) {$S_2$};
    \node[shape=circle,draw=black, text opacity=0] (s5) at (-2,2) {$S_3$};
    \node[shape=circle,draw=black, text opacity=0 ] (s6) at (-1,3) {$S_3$};
    \node[shape=circle,draw=black ] (s7) at (0,4) {$S_5$};
    \node[shape=circle,draw=black , text opacity=0 ] (s8) at (1,3) {$S_3$};
    \node[shape=circle,draw=black , text opacity=0 ] (s9) at (2,2) {$S_3$};
    \node[shape=circle,draw=black ] (s10) at (2.75,1) {$S_{3}$};
    \node[shape=circle,draw=black ] (s12) at (1.25,1) {$S_{4}$};
    \node[shape=circle,draw=black ] (s13) at (0,5.5) {$S_{6}$};
    
    \path [->](s2) edge node[left] {} (s5);
    \path [->](s4) edge node[left] {} (s2);
    \path [](s5) -- node[auto=false, sloped] {\ldots} (s6);
    \path [->](s6) edge node[left] {} (s7);
    \path [->](s12) edge node[left] {} (s9);
    \path [->](s10) edge node[right, yshift= 1 ex] {$a_0$} (s9);
    \path [->](s10) edge node[below] {$a_1$} (s12);
    \path [](s9) -- node[auto=false, sloped] {\ldots} (s8);
    \path [->](s8) edge node[left] {} (s7);
    \path [->](s7) edge node[left] {} (s13);
    \path [->](s13) edge node[left] {} (s7);

    \draw [decorate,decoration={brace,amplitude=5pt,raise=4ex}]
  (-2, 2) -- (-1,3) node[midway,yshift=+3em, sloped]{$\ell$};
    \draw [decorate,decoration={brace,amplitude=5pt,raise=4ex}]
  (1, 3) -- (2,2) node[midway,yshift=+3em, sloped]{$\ell$};
\end{tikzpicture}
\caption{Lower Bound Construction $\mathcal M_\ell$}
\label{lower_figure}
\end{figure}
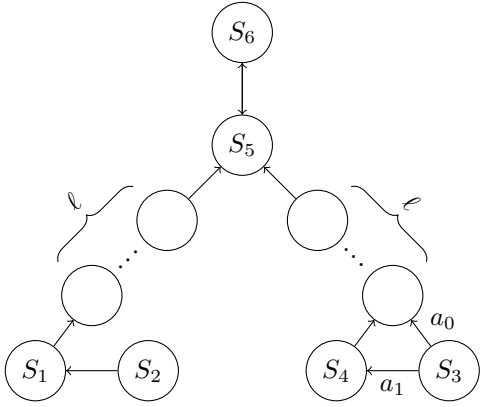

\begin{proof}
The construction is a completely deterministic MDP with 2 agents and the only non-trivial action can be taken at $S_3$ shown in \cref{lower_figure}. The MDP consists of no rewards and will only suffer penalty $-\tilde r$ if agents overlap ($\mathcal R = 0$). Notice that states $S_5$, $S_6$ transition to each other deterministically.

Here, $c = \ell$ because $\mathcal V = 2 \ell + 1$ implies $ c = \lfloor \frac{\mathcal V - \mathcal R}{2}\rfloor = \lfloor \ell + \frac{1}{2}\rfloor = \ell$. When agents are at state $S_1, S_3$, the optimal action is for the $S_3$ agent to take $a_1$ so they do not collide. Similarly, when agents are at $S_2, S_3$, the optimal is for the agent at $S_3$ to take action $a_0$.

Denote the trivial action as $X$ and $p_0 = P(\pi(S_3) = a_0)$. 
Notice,
$$\lvert V((S_1, S_3)) \rvert = \lvert p_0 Q((S_1, S_3), (X, a_0)) \rvert $$
$$\hspace{10ex}= p_0 \frac{\gamma^{\ell + 1}}{1 - \gamma} \tilde r = p_0 \frac{\gamma^{c + 1}}{1 - \gamma} \tilde r \geq  p_0 \frac{\gamma^{c + 2}}{1 - \gamma} \tilde r $$
and
$$ \lvert V((S_2, S_3)) \rvert = \lvert (1- p_0) Q((S_2, S_3), (X, a_1)) \rvert $$
$$\hspace{10ex}= (1 - p_0) \frac{\gamma^{\ell + 2}}{1 - \gamma} \tilde r = (1 - p_0) \frac{\gamma^{c + 2}}{1 - \gamma} \tilde r. $$
Since $V^*((S_1, S_3)) = V^*((S_2, S_3)) = 0$, 
\\
either $\lvert V^*((S_1, S_3)) - V((S_1, S_3)) \rvert \geq \frac{1}{2} \frac{\gamma^{c + 2}}{1 - \gamma} \tilde r$ or $ \lvert V^*((S_2, S_3)) - V((S_2, S_3))\rvert \geq \frac{1}{2} \frac{\gamma^{c + 2}}{1 - \gamma} \tilde r$ must be true.
\\
\end{proof}

\end{document}